\theoremstyle{definition}
\newtheorem{theorem}{Theorem}
\newtheorem*{theorem_star}{Theorem}
\newtheorem{assumption}{Assumption}
\newtheorem{oldassumption}{Previous Assumption}
\newtheorem{property}{Property}
\newtheorem{lemma}[theorem]{Lemma}
\newtheorem{definition}{Definition}[section]
\definecolor{Gray}{gray}{0.85}
\definecolor{codegreen}{rgb}{0,0.6,0}
\definecolor{codegray}{rgb}{0.5,0.5,0.5}
\definecolor{codepurple}{rgb}{0.58,0,0.82}
\definecolor{backcolour}{rgb}{0.95,0.95,0.92}
\lstdefinestyle{pseudo_code}{
    backgroundcolor=\color{backcolour},   
    commentstyle=\color{codegreen},
    keywordstyle=\color{magenta},
    numberstyle=\tiny\color{codegray},
    stringstyle=\color{codepurple},
    basicstyle=\ttfamily\footnotesize,
    breakatwhitespace=false,         
    breaklines=true,                 
    captionpos=b,                    
    keepspaces=true,                 
    numbers=left,                    
    numbersep=5pt,                  
    showspaces=false,                
    showstringspaces=false,
    showtabs=false,                  
    tabsize=2
}
\begin{document}

\twocolumn[
\icmltitle{Integrating Prior Knowledge in Contrastive Learning with Kernel}


    
\icmlsetsymbol{equal}{*}

\begin{icmlauthorlist}
\icmlauthor{Benoit Dufumier}{cea,tel}
\icmlauthor{Carlo Alberto Barbano}{tel,turin}
\icmlauthor{Robin Louiset}{cea,tel}
\icmlauthor{Edouard Duchesnay}{cea}
\icmlauthor{Pietro Gori}{tel}
\end{icmlauthorlist}

\icmlaffiliation{cea}{NeuroSpin, CEA, Université Paris-Saclay}
\icmlaffiliation{tel}{LTCI, Télécom Paris, IPParis}
\icmlaffiliation{turin}{University of Turin}

\icmlcorrespondingauthor{Benoit Dufumier}{benoit.dufumier@cea.fr}

\icmlkeywords{Contrastive Learning, Kernel Theory, Unsupervised Learning}

\vskip 0.3in
]



\printAffiliationsAndNotice{}  

\begin{abstract}
  Data augmentation is a crucial component in unsupervised contrastive learning (CL). It determines how positive samples are defined and, ultimately, the quality of the learnt representation. In this work, we open the door to new perspectives for CL by integrating prior knowledge, given either by generative models--viewed as prior representations-- or weak attributes in the positive and negative sampling. 
  To this end, we use kernel theory to propose a novel loss, called \textit{decoupled uniformity}, that i) allows the integration of prior knowledge and ii) removes the negative-positive coupling in the original InfoNCE loss. We draw a connection between contrastive learning and conditional mean embedding theory to derive tight bounds on the downstream classification loss. In an unsupervised setting, we empirically demonstrate that CL benefits from generative models to improve its representation both on natural and medical images. In a weakly supervised scenario, our framework outperforms other unconditional and conditional CL approaches. Source code is available at \href{https://github.com/Duplums/contrastive-decoupled-uniformity}{this https URL}.
\end{abstract}

\section{Introduction}
%

%
    \begin{figure*}[h!]
        \centering
        \includegraphics[width=\linewidth]{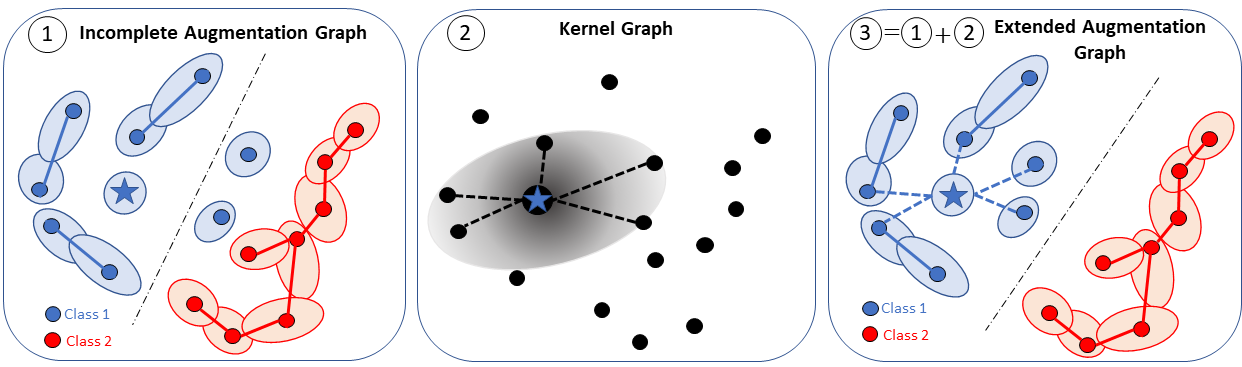}
        \caption{Illustration of the proposed method. Each point is an original image $\bar{x}$. Two points are connected if they can be transformed into the same augmented image using a distribution of augmentations $\cA$. Colors represent semantic (unknown) classes and light disks represent the support of augmentations,  $\cA(\cdot|\bar{x})$, for each sample $\bar{x}$.
        From an incomplete augmentation graph (1) where intra-class samples are not connected (e.g. $\cA$ is insufficient or not adapted), we reconnect them using a kernel defined on prior information (either learnt with generative model, viewed as feature extractor, or given as auxiliary attributes). The extended augmentation graph (3) is the union between the (incomplete) augmentation graph (1) and the kernel graph (2). In (2), the gray disks indicate the set of points $\bar{x}'$ that are close to the anchor (blue star) in the kernel space.}
        \label{fig:extended_aug_graph}
    \end{figure*}
%
Contrastive Learning (CL)~\cite{becker1992self, bromley1993signature, oord_representation_2019, bachman2019learning,  chen_simple_2020} is a paradigm designed for self-supervised representation learning which has been applied to unsupervised~\cite{chen_simple_2020, chen2020improved}, weakly supervised~\cite{tsai2022conditional, dufumier_contrastive_2021} and supervised problems~\cite{khosla_supervised_2020}. It gained popularity during the last years by achieving impressive results in the unsupervised setting on standard vision datasets (\textit{e.g.} ImageNet) where it almost matched the performance of its supervised counterpart \cite{chen_simple_2020, he_momentum_2020}.

The objective in CL is to increase the similarity in the representation space between \textit{positive} samples (semantically close), while decreasing the similarity between \textit{negative} samples (semantically distinct). Despite its simple formulation, it requires the definition of a similarity function (that can be seen as an energy term~\cite{lecun2005loss}),and of a rule to decide whether a sample should be considered positive or negative. Similarity functions, such as the Euclidean scalar product (\textit{e.g.} InfoNCE~\cite{oord_representation_2019}), take as input the latent representations of an encoder $f\in \cF$, such as a CNN~\cite{chen_big_2020} or a Transformer~\cite{caron2021emerging} for vision datasets.  

In supervised settings~\cite{khosla_supervised_2020}, positives are simply images belonging to the same class while negatives are images belonging to different classes. In unsupervised problems~\cite{chen_simple_2020}, since labels are unknown, positives are usually defined as transformed versions (\textit{views}) of the same original image (a.k.a. the anchor) and negatives are the transformed versions of all other images. As a result, the augmentation distribution $\cA$ used to sample both positives and negatives is crucial~\cite{chen_simple_2020} and it conditions the quality of the learnt representation. The most-used augmentations for visual representations involve aggressive crop and color distortion. Cropping induces representations with high occlusion invariance~\cite{purushwalkam2020demystifying} whereas color distortion may avoid the encoder $f$ to take a shortcut~\cite{chen_simple_2020} while aligning positive samples and therefore to fall into the simplicity bias~\cite{shah2020pitfalls}.

Nevertheless, learning a representation that mainly relies on augmentations comes at a cost: both crop and color distortion induce strong biases in the final representation~\cite{purushwalkam2020demystifying}. Specifically, dominant objects inside images can prevent the model from learning features of smaller objects~\cite{chen_intriguing_2021} (which is not apparent in object-centric datasets such as ImageNet) and few, irrelevant and easy-to-learn features, that are shared among views, are sufficient to collapse the representation~\cite{chen_intriguing_2021} (a.k.a feature suppression). 
Finding the right augmentations in other visual domains, such as medical imaging, remains an open challenge~\cite{dufumier_contrastive_2021} since we need to find transformations that preserve semantic anatomical structures (e.g. discriminative between pathological and healthy) while removing unwanted noise.
If the augmentations are too weak or inadequate to remove irrelevant signal w.r.t. a discrimination task, then how can we define positive and negative samples?





In our work, we propose to integrate \textit{prior information}, learnt from generative models (viewed as features extractor or prior representation) or given as auxiliary weak attributes (\eg phenotypes of participants for medical images), into contrastive learning, to make it less dependent on data augmentation. Using the theoretical understanding of CL through the augmentation graph, we make the connection with kernel theory and introduce a novel loss with theoretical guarantees on downstream performance. This loss additionally benefits from the decoupling effect between positive and negative samples that affects InfoNCE-based frameworks~\cite{yeh2021decoupled}. Prior information is integrated into the proposed decoupled contrastive loss using a kernel. In the unsupervised setting, we leverage pre-trained generative models, such as GAN~\cite{goodfellow2014generative} and VAE~\cite{kingma2013auto}, to learn \textit{a prior representation} of the data. We provide a solution to the feature suppression issue in CL~\cite{chen_intriguing_2021} and also demonstrate SOTA results with weaker augmentations on visual benchmarks (both on natural and medical images). In the weakly supervised setting, we use instead auxiliary image attributes as prior knowledge (e.g. birds color or size) and we show better performance than previous conditional formulations based on these attributes~\cite{tsai2022conditional}.\\  
In summary, we make the following contributions:\\
\phantom{x} 1. We propose a new decoupled contrastive loss which allows the integration of prior information, given as auxiliary attributes or learnt from generative models, into the positive and negative sampling.\\
\phantom{x} 2. We derive general guarantees, relying on weaker assumptions than existing theories, on the downstream classification task especially in the finite-samples case.\\
\phantom{x} 3. We empirically show that our framework performs competitively with small batch size and benefits from the latest advances of generative models to learn a better representation than existing CL methods. \\
\phantom{x} 4. We show that we achieve SOTA results in the unsupervised and weakly supervised setting.


\section{Related Works}
 
    In a weakly supervised setting, recent studies~\cite{dufumier_contrastive_2021, tsai2022conditional} about CL have shown that positive samples can be defined conditionally to an auxiliary attribute in order to improve the final representation, in particular for medical imaging \cite{dufumier_contrastive_2021}. From an information bottleneck perspective, these approaches essentially compress the representation to be predictive of the auxiliary attributes. This might harm the performance of the model when the attributes are too noisy to accurately approximate the true labels of a given downstream task. 

    In an unsupervised setting, recent approaches~\cite{dwibedi2021little, zheng2021weakly, zheng2021ressl, li2020prototypical} used the encoder $f$, learnt during optimization, to extend the positive sampling procedure to other views of different instances (\textit{i.e.} distinct from the anchor) that are close to the anchor in the latent space. In order to avoid representation collapse, multiple instances of the same sample~\cite{azizi_big_2021}, a support set~\cite{dwibedi2021little}, a momentum encoder~\cite{li2020prototypical} or another small network~\cite{zheng2021weakly} can be used to select the positive samples. In clustering approaches~\cite{li2020prototypical, caron2020unsupervised}, distinct instances with close semantics are attracted in the latent space using prototypes. These prototypes can be estimated through K-means~\cite{li2020prototypical} or Sinkhorn-Knopp algorithm~\cite{caron2020unsupervised}. All these methods rely on the past representation of a network to improve the current one. 
    They require strong augmentations and they essentially assume that the closest points in the representation space belong to the same latent class in order to better select the positives. This inductive bias is still poorly understood from a theoretical point of view~\cite{saunshi2022understanding} and may depend on the visual domain.
    
    Our work also relates to generative models for learning representations. VAEs~\cite{kingma2013auto} learn the data distribution by mapping each input to a Gaussian distribution that we can easily sample from to reconstruct the original image. GANs~\cite{goodfellow2014generative}, instead, sample directly from a Gaussian distribution to generate images that are classified by a discriminator in a min-max game. The discriminator representation can then be used~\cite{radford2015unsupervised} as feature extractor. Other models (ALI~\cite{dumoulin2016adversarially}, BiGAN~\cite{donahue2016adversarial} and BigBiGAN~\cite{donahue2019large}) learn simultaneously a generator and an encoder that can be used directly for representation learning. All these models do not require particular augmentations to model the data distribution but they perform generally poorer than recent discriminative approaches~\cite{zhai2019large, chen_big_2020} for representation learning. A first connection between generative models and contrastive learning has emerged very recently~\cite{jahanian2021generative}. In~\cite{jahanian2021generative}, authors study the feasibility of learning effective visual representations using only generated samples, and not real ones, with a contrastive loss. Their empirical analysis is complementary to our work. Here, we leverage the representation capacity of the generative models, rather than their generative power, to learn prior representation of the data.
%
%
%
\section{CL with Decoupled Uniformity}
    
    \textbf{Problem setup.}
    The general problem in contrastive learning is to learn a data representation using an encoder $f\in \cF: \cX \rightarrow \bbS^{d-1}$ that is pre-trained with a set of $n$ original samples $(\bar{x}_i)_{i\in [1..n]}\in \bar{\cX}$, sampled from the data distribution $p(\bar{x})$\footnote{With an abuse of notation, we define it as $p(\bar{x})$ instead than $p_{\bar{X}}$ to simplify the presentation, as it is common in the literature} 
    These samples are transformed to generate \textit{positive samples} (\textit{i.e.}, semantically similar to $\bar{x}$) in $\cX$, space of augmented images, using a distribution of augmentations $\cA(\cdot |\bar{x})$. Concretely, for each $\bar{x}_i$, we can sample views of $\bar{x}_i$ using $x\sim \cA(\cdot |\bar{x}_i)$ (\textit{e.g.}, by applying color jittering, flip or crop with a given probability). For consistency, we assume $\cA(\bar{x})=p(\bar{x})$ so that the distributions $\cA(\cdot |\bar{x})$ and $p(\bar{x})$ induce a marginal distribution $p(x)$ over $\cX$. Given an anchor $\bar{x}_i$, all views $x\sim\cA(\cdot |\bar{x}_j)$ from different samples $\bar{x}_{j\neq i}$ are considered as \textit{negatives}.
    Once pre-trained, the encoder $f$ is fixed and its representation $f(\bar{\cX})$ is evaluated through linear evaluation on a classification task using a labeled dataset $\cD=\{(\bar{x}_i, y_i)\}\in \bar{\cX}\times \cY$ where $\cY=[1..K]$, with $K$ the number of classes.\\    
    \textit{Linear evaluation.} To evaluate the representation of $f$ on a classification task, 
    we train a linear classifier $g(\bar{x})=Wf(\bar{x})$ ($f$ fixed) that minimizes the multi-class classification loss.\\ 
     \textbf{Negative-positive coupling (NPC) in CL.} The popular InfoNCE loss~\cite{poole_variational_2019, oord_representation_2019}, often used in CL, (asymptotically) imposes 1) alignment between positives and 2) uniformity between the views ($x  \overset{\text{i.i.d.}}{\sim} \cA(\cdot|\bar{x})$) of all instances $\bar{x}$~\cite{wang_understanding_2020}-- two properties that correlate well with downstream performance. However, by imposing uniformity between \textit{all} views, we essentially try to both attract (alignment) and repel (uniformity) positive samples and therefore we cannot achieve a perfect alignment \textit{and} uniformity, as noted in~\cite{wang_understanding_2020}. One solution is to remove the positive pairs in the denominator of InfoNCE, as proposed by~\cite{yeh2021decoupled} with DC, which notably allows to drastically reduce the batch size in InfoNCE-based frameworks. Here, we propose another (unexplored) solution by imposing uniformity only between centroids, defined as the average between several views of the same image $\bar{x}_i$, \ie $\mu_{\bar{x}_i}=\bbE_{x\sim \cA(\cdot|\bar{x}_i)} f(x)$.

    \textbf{Integration of prior.} Unsupervised CL only relies on data augmentation to learn representations, even if we may have access to prior knowledge $z(\bar{x}_i)$ about the original image $\bar{x}_i$ that can improve the final representation. Here, $z(\bar{x}_i)$ designates either a weak attribute or a prior representation given by a generative model. To this end, we propose to use a kernel $\bar{K}(z(\bar{x}_i), z(\bar{x}_j))$ between these priors in order to better estimate the centroids $(\mu_{\bar{x}_i})_{i\in [1..n]}$ of the original samples $(\bar{x}_i)_{i\in [1..n]}$. Intuitively, we want embeddings $(f(\bar{x}_i), f(\bar{x}_j))$ to be close if the priors $(z(\bar{x}_i), z(\bar{x}_j))$ are close in the kernel space. We rely on conditional mean embedding theory to use a new kernel-based estimator of these centroids (see Section~\ref{sec:conditional_mean_embedding}).\\
    \textbf{Our solution.}  We propose to solve both NPC issue in InfoNCE loss and the integration of prior in CL by optimizing a loss relying only on centroids $(\mu_{\bar{x}_i})_{i\in [1..n]}$, that we called Decoupled Uniformity:
    \begin{equation}
        \cL_{unif}^{de}(f) = \log \bbE_{p(\bar{x})p(\bar{x}')}e^{-||\mu_{\bar{x}}-\mu_{\bar{x}'}||^2} 
    \end{equation}

    Here, we do not repel views from the same image anymore (solving negative-positive coupling issue) and we can integrate prior knowledge with kernel-based estimator of the centroids (solving prior integration). This loss essentially repels distinct centroids $\mu_{\bar{x}}$ through an average pairwise Gaussian potential. 
    It implicitly optimizes alignment between positives through the maximization\footnote{By Jensen's inequality $||\mu_{\bar{x}}||\le\bbE_{\cA(x|\bar{x})}||f(x)||=1$ with equality iff $f$ is constant on $\supp\cA(\cdot|\bar{x})$.} of $||\mu_{\bar{x}}||$, so \textbf{we do not need to explicitly add an alignment term}. 
    
        
    We first derive theoretical guarantees in the finite-samples case for this loss before introducing our main theorems. 
    
    \textit{Supervised risk.}  While previous analysis~\cite{wang2022chaos, arora_theoretical_2019} generally used the mean cross-entropy loss (as it has closer analytic form with InfoNCE), we use a supervised loss closer to decoupled uniformity with the same guarantees as the mean cross-entropy loss (see Appendix~\ref{appendix:optimal_config_sup_loss}). Notably, the geometry of the representation space at optimum is the same as cross-entropy and  SupCon~\cite{khosla_supervised_2020} and we can theoretically achieve perfect linear classification.
        
        \begin{definition}(Downstream supervised loss)
            For a given downstream task $\cD = \bar{\cX} \times \cY$, we define the  classification loss as: $\mathcal{L}_{sup}(f) = \log \bbE_{y, y'\sim p(y)p(y')}e^{-||\mu_y - \mu_{y'}||^2}$, where $\mu_y = \bbE_{p(\bar{x}|y)} \mu_{\bar{x}}$ are averaged representation of samples belonging to the same class $y$. 
        \end{definition}
        
    This loss depends on centroids $\mu_{\bar{x}}$ rather than $f(\bar{x})$. Empirically, it has been shown~\cite{foster2020improving} that performing features averaging gives better performance on the downstream task. 
    
    \subsection{$\cL^{de}_{unif}(f)$ solves the NPC problem}
    \begin{definition}(Finite-samples estimator)
        For $n$ samples $(\bar{x}_i)_{i\in [1..n]}\overset{i.i.d.}{\sim} p(\bar{x})$, the (biased) empirical estimator of $\cL_{unif}^{de}(f)$ is: $\hat{\cL}_{unif}^{de}(f) = \log \frac{1}{n(n-1)}\sum_{i\neq j} e^{-||\mu_{\bar{x}_i} - \mu_{\bar{x}_j}||^2}$. It converges in law to $\cL_{unif}^{de}(f)$ with rate $O\left(n^{-1/2}\right)$. Proof in Appendix~\ref{appendix:estimation_error}
    \end{definition}

    To show that $\mathcal{\hat{L}}_{unif}^{de}$ imposes alignment between positives and it solves the NPC problem in InfoNCE, we perform a gradient analysis, following~\cite{yeh2021decoupled}. We compute the gradients w.r.t. the view  $z_k^{(v)}=f(x_k^{(v)})$ ($k\in [1..n]$ and $v\in \{1,2\}$ for 2 views, proof in Appendix~\ref{appendix:gradient_analysis}):
    \begin{equation}
        \nabla_{z_k^{(v)}} \mathcal{\hat{L}}_{unif}^{de} = \underbrace{-2w_k\mu_{\bar{x}_k}}_{\text{align hard positives}} + \underbrace{2\sum_{j\neq k} w_{k,j}\mu_{\bar{x}_j}}_{\text{repel hard negatives}}
    \end{equation}
    Where $\forall k,j\in [1..n], w_{k,j}=\frac{e^{-||\mu_k - \mu_j||^2}}{\sum_{p, q\neq p} e^{-||\mu_p - \mu_q||^2}}$ and $w_k=\sum_{j\neq k}w_{k,j}$, such that $\sum_{k=1}^n w_k=1$. The scalar $w_k$ quantifies whether the positive sample $\bar{x}_k$ is ``hard'' 
    (i.e. close to other samples in the batch) , while $w_{k,j}$ quantifies whether the negative sample $\bar{x}_j$ is ``hard'' (i.e. close to the positive sample $\bar{x}_k$).  Alignment is enforced through the first term of the gradient ($-\mu_{\bar{x}_k}$, aligning all views in the same direction) and uniformity through the second term $(\mu_{\bar{x}_j})_{j\neq k}$. 
    Consequently, there is neither negative-negative coupling (as in AlignUnif~\cite{wang_understanding_2020}) nor negative-positive coupling (as in InfoNCE~\cite{poole_variational_2019, oord_representation_2019}) because the scaling factors do not depend on the instance-discrimination task difficulty, but rather on the relative positions between centroids. Importantly, the gradients never vanish since $\sum_{k=1}^n w_k=1$. Thus, our loss indeed solves the NPC problem using an elegant and simple form.
    
    \subsection{Intra-class connectivity hypothesis}  
    
    Most recent theories about CL~\cite{wang2022chaos, haochen2021provable} make the hypothesis that samples from the same semantic class have overlapping augmented views, to provide guarantees on the downstream task when optimizing InfoNCE \cite{chen_simple_2020} or Spectral Contrastive loss~\cite{haochen2021provable}. This assumption, known as intra-class connectivity hypothesis, is very strong and only relies on the augmentation distribution $\cA$. In particular, augmentations should not be ``too weak'', so that all intra-class samples are connected among them, and at the same time not ``too strong'', to prevent connections between inter-class samples and thus preserve the semantic information. Here, we prove that we can relax this hypothesis if we can provide a kernel (viewed as a similarity function between original samples $\bar{x}$) that is ``good enough'' to relate intra-class samples not connected by the augmentations (see Fig.~\ref{fig:extended_aug_graph}). In practice, we show that generative models (viewed as feature extractor) or auxiliary information can define such kernel.
    We first recall the definition of the augmentation graph~\cite{wang2022chaos}, and intra-class connectivity hypothesis before presenting our main theorems. For simplicity, we assume that the set of images $\bar{\cX}$ is finite (similarly to~\cite{wang2022chaos, haochen2021provable}). Our bounds and theoretical guarantees will never depend on the cardinality $|\bar{\cX}|$.

        \begin{definition}(Augmentation graph~\cite{haochen2021provable, wang2022chaos})
            Given a set of original images $\bar{\cX}$, we define the augmentation graph $G_{\cA}(V, E)$ for an augmentation distribution $\cA$ through 1) a set of vertices $V=\bar{\cX}$ and 2) a set of edges $E$ such that $(\bar{x},\bar{x}')=e\in E$ if the two original images $\bar{x}, \bar{x}'$ can be transformed into the same augmented image through $\cA$, i.e $\supp\cA(\cdot |\bar{x})\cap \supp \cA(\cdot |\bar{x}') \neq \emptyset$. 
        \end{definition}
        Previous analysis in CL makes the hypothesis that there exists an optimal (accessible) augmentation module $\cA^*$ that fulfills:
        \begin{oldassumption}(Intra-class connectivity \cite{wang2022chaos})
            For a given downstream classification task $\cD=\bar{\cX}\times \cY \quad \forall y\in \cY$, the augmentation subgraph, $G_y\subset G_{\cA^*}$ containing images only from class $y$, is connected.
            \label{hyp:embed_intra_class_connect}
        \end{oldassumption}

        Under this hypothesis, Decoupled Uniformity loss can tightly bound the downstream supervised risk without additional terms depending on batch size (\ie number of negative samples) \textbf{and} for a bigger class of encoders than previous works (not restricted to $L$-smooth functions~\cite{wang2022chaos}), as shown in Theorem~\ref{th:boundness_intra_class_hyp}.
        
        \begin{definition}(Weak-aligned encoder)
            An encoder $f\in \cF$ is $\epsilon'$-weak ($\epsilon'\ge0$) aligned on $\cA$ if $||f(x)-f(x')||\le \epsilon' \quad \quad \forall \bar{x}\in \bar{\cX}, \forall x,x' \overset{i.i.d.}{\sim} \cA(\cdot|\bar{x})$
        \end{definition}
        \begin{theorem} (Guarantees with $\cA^*$)
            Given an optimal augmentation module $\cA^*$, for any $\epsilon$-weak aligned encoder $f\in \cF$ we have: $\cL_{unif}^{de}(f) \le \cL_{sup}(f) \le  8D\epsilon + \cL_{unif}^{de}(f)$ where $D$ is the maximum diameter of all intra-class graphs $G_y$ ($y\in \cY$). Proof in Appendix~\ref{appendix:generalization_strong_hyp}.
            \label{th:boundness_intra_class_hyp}
        \end{theorem}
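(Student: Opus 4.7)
The plan is to recast both quantities in the Gaussian kernel mean-embedding framework and then derive the two inequalities by qualitatively different arguments. Using $e^{-\|u-v\|^2} = \langle \phi(u), \phi(v)\rangle_{\mathcal{H}}$, where $\mathcal{H}$ is the Gaussian RKHS and $\phi$ its feature map, I would rewrite
\begin{equation*}
\cL_{unif}^{de}(f) = \log \bigl\| \bbE_{p(\bar{x})} \phi(\mu_{\bar{x}}) \bigr\|^2_{\mathcal{H}}, \qquad \cL_{sup}(f) = \log \bigl\| \bbE_{p(y)} \phi(\mu_y) \bigr\|^2_{\mathcal{H}},
\end{equation*}
and use the identity $\bbE_{p(\bar{x})} \phi(\mu_{\bar{x}}) = \bbE_{p(y)} \bbE_{p(\bar{x}|y)} \phi(\mu_{\bar{x}})$ from total expectation. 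This is the natural setup given the conditional mean embedding machinery the paper develops.

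For the upper bound $\cL_{sup}(f) \le \cL_{unif}^{de}(f) + 8D\epsilon$, I would first establish the geometric lemma $\|\mu_{\bar{x}} - \mu_y\| \le 2D\epsilon$ for every $\bar{x}$ in class $y$. For each edge $(\bar{x}, \bar{x}')$ of $G_y$, pick $x^\star \in \supp\cA(\cdot|\bar{x}) \cap \supp\cA(\cdot|\bar{x}')$; by $\epsilon$-weak alignment and Jensen, $\|\mu_{\bar{x}} - f(x^\star)\| \le \bbE_{x\sim \cA(\cdot|\bar{x})}\|f(x) - f(x^\star)\| \le \epsilon$, hence $\|\mu_{\bar{x}} - \mu_{\bar{x}'}\| \le 2\epsilon$. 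Chaining along a path of graph-length at most $D$ in $G_y$ extends this to all class-$y$ pairs; since $\mu_y$ is a convex average of class-$y$ centroids, $\|\mu_{\bar{x}} - \mu_y\| \le 2D\epsilon$. Then expanding $\|\mu_{\bar{x}} - \mu_{\bar{x}'}\|^2 = \|\mu_y - \mu_{y'} + (\mu_{\bar{x}} - \mu_y) + (\mu_{y'} - \mu_{\bar{x}'})\|^2$, applying the 1-Lipschitz bound $|e^{-a} - e^{-b}| \le |a-b|$ for $a,b \ge 0$, and taking $\bbE$ and $\log$ of positive quantities, one obtains the claimed bound (higher-order $D^2\epsilon^2$ terms are absorbed since $f$ is sphere-valued and all pairwise distances stay bounded).

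For the lower bound $\cL_{unif}^{de}(f) \le \cL_{sup}(f)$, equivalently $\bigl\|\bbE_{\bar{x}} \phi(\mu_{\bar{x}})\bigr\|_{\mathcal{H}} \le \bigl\|\bbE_y \phi(\mu_y)\bigr\|_{\mathcal{H}}$, I would observe that the law of $\mu_Y$ is a conditional-expectation contraction of the law of $\mu_{\bar{X}}$, and then invoke positive-definiteness of the Gaussian kernel: either through the Bochner identity $\bbE e^{-\|X-X'\|^2} = \bbE_{W\sim \mathcal{N}(0,\,2I)} |\hat{P}_X(W)|^2$, or directly through a conditional mean embedding contraction statement, the mean-embedding norm is monotone non-decreasing under such contractions, and taking $\log$ delivers the bound.

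The main obstacle will be the lower bound. The Gaussian kernel is neither convex nor concave in $\|u-v\|^2$, so a naive per-pair Jensen-type inequality fails: one can construct conditional distributions for which $\bbE_{p(\bar{x}|y)p(\bar{x}'|y')} e^{-\|\mu_{\bar{x}} - \mu_{\bar{x}'}\|^2} > e^{-\|\mu_y - \mu_{y'}\|^2}$ when $y \ne y'$. The argument has to be global, with the within-class pairs ($y = y'$) providing the slack that compensates for the cross-class excess once one averages over labels. Pinpointing this global mechanism --- either via the Bochner frequency-domain representation of the Gaussian kernel or via an RKHS-contraction statement from conditional mean embedding theory --- is the technical crux.
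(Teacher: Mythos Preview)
Your path-chaining to obtain $\|\mu_{\bar x}-\mu_{\bar x'}\|\le 2D\epsilon$ for intra-class pairs is exactly what the paper does. After that, however, the paper does not use a Lipschitz bound: it invokes a general inequality (proved separately as its ``guarantees for a given downstream task'' theorem) $\cL_{sup}(f)\le \cL_{unif}^{de}(f)+4\,\bbE_{p(\bar x|y)p(\bar x'|y)}\|\mu_{\bar x}-\mu_{\bar x'}\|$, obtained from the exact identity $\|\mu_y-\mu_{y'}\|^2=\bbE_{p(\bar x|y)p(\bar x'|y')}\|\mu_{\bar x}-\mu_{\bar x'}\|^2-\sum_j\var(\mu^j_{\bar x}\mid y)-\sum_j\var(\mu^j_{\bar x'}\mid y')$ together with Jensen on the convex $u\mapsto e^{-u}$. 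Plugging $2D\epsilon$ into this gives the constant $8D\epsilon$ on the nose. Your Lipschitz route instead has to control $\bigl|\|\mu_{\bar x}-\mu_{\bar x'}\|^2-\|\mu_y-\mu_{y'}\|^2\bigr|$ and therefore picks up the factor $\|\mu_{\bar x}-\mu_{\bar x'}\|+\|\mu_y-\mu_{y'}\|\le 4$, which yields $16D\epsilon+16D^2\epsilon^2$ rather than $8D\epsilon$; the variance-decomposition trick is what buys the sharper constant.

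\textbf{Lower bound.} Here there is a genuine gap, and neither of your two global strategies closes it. In the Bochner picture there is no pointwise dominance between the characteristic functions of $\mu_{\bar X}$ and of $\mu_Y$: $|\bbE_Y e^{i\langle w,\mu_Y\rangle}|$ can be smaller than $|\bbE_{\bar x}e^{i\langle w,\mu_{\bar x}\rangle}|$ for some frequencies $w$ and larger for others, so the integrated inequality does not follow. And the claim that ``the mean-embedding norm is monotone non-decreasing under conditional-expectation contractions'' is not a valid general principle: the map $\mu_{\bar X}\mapsto\mu_Y=\bbE[\mu_{\bar X}\mid Y]$ is an $L^2$ contraction of the random vector, but it does not commute with the nonlinear feature map $\phi$, and there is no general ordering between $\|\bbE_{\bar x}\phi(\mu_{\bar x})\|_{\cH}$ and $\|\bbE_y\phi(\bbE_{\bar x\mid y}\mu_{\bar x})\|_{\cH}$. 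For what it is worth, the paper's own derivation of this side also proceeds by Jensen: it writes $\bbE_{p(\bar x|y)p(\bar x'|y')}e^{-\|\mu_{\bar x}-\mu_{\bar x'}\|^2}\le \exp\bigl(-\bbE_{p(\bar x|y)p(\bar x'|y')}\|\mu_{\bar x}-\mu_{\bar x'}\|^2\bigr)$ and attributes it to convexity of $u\mapsto e^{-u}$ --- but that is Jensen in the wrong direction, so your diagnosis that the per-pair argument fails is exactly right, and the inequality $\cL_{unif}^{de}(f)\le\cL_{sup}(f)$ is not actually established by either your outline or the paper's argument as written.
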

        
        In practice, the diameter $D$ can be controlled by a small constant (\textit{e.g.,} 4 in ~\cite{wang2022chaos}) but it remains specific to the dataset at hand.
        In the next section, we study the case when $\cA^*$ is not accessible or very hard to find.
            
    \subsection{Reconnect the disconnected: extending the augmentation graph with kernel}
    
    Having access to optimal augmentations is a strong assumption and, for many real-world applications~\cite{saunshi2022understanding}, it may not be accessible. If we only have weak augmentations (\textit{e.g.}, $\supp\cA(\cdot |\bar{x}) \subsetneq \supp\cA^*(\cdot|\bar{x})$ for any $\bar{x}$), then some intra-class points might not be connected and we would need to reconnect them to ensure good downstream accuracy (see Theorem~\ref{th:downstream_gen} in Appendix~\ref{appendix:general_guarantees_decoupled}). 
    Augmentations are intuitive and they have been hand-crafted for decades by using human perception (\textit{e.g.}, a rotated chair remains a chair and a gray-scale dog is still a dog). However, we may know other \textit{prior information} about objects that are difficult to transfer through invariance to augmentations (\textit{e.g.}, chairs should have 4 legs). This prior information can be either given as image attributes (\textit{e.g.}, age or sex of a person, color of a bird, etc.) or, in an unsupervised setting, directly learnt through a generative model (\textit{e.g.}, GAN or VAE). Now, we ask: how can we integrate this information inside a contrastive framework to reconnect intra-class images that are actually disconnected in $G_{\cA}$?
    We rely on conditional mean embedding theory and use a kernel defined on the prior representation/information. This allows us to estimate a better configuration of the centroids in the representation space, with respect to the downstream task, and, ultimately, provide theoretical guarantees on the classification risk.
    
    \subsubsection{$\epsilon$-Kernel Graph}   
        \begin{definition}(RKHS on $\bar{\cX}$)
            We define the RKHS $(\cH_{\bar{\cX}}, K_{\bar{\cX}})$ on $\bar{\cX}$ associated with a kernel $K_{\bar{\cX}}$.
        \end{definition}
        \textbf{Example.} If we work with large natural images, assuming that we know a prior $z(\bar{x})$ about our images (\textit{e.g.}, given by a generative model), we can compute $K_{\bar{\cX}}$ using $z$ through $K_{\bar{\cX}}(\bar{x}, \bar{x}')=\tilde{K}(z(\bar{x}), z(\bar{x}'))$ where $\tilde{K}$ is a standard kernel (\textit{e.g.}, Gaussian or Cosine). 
        
        To link kernel theory with the previous augmentation graph, we need to define a \textit{kernel graph} that connects images with high similarity in the kernel space.
    
        \begin{definition}($\epsilon$-Kernel graph)
            Let $\epsilon>0$. We define the $\epsilon$-kernel graph $G_{K_{\bar{\cX}}}^\epsilon(V, E_K)$ for the kernel $K_{\bar{\cX}}$ on $\bar{\cX}$ through 1) a set of vertices $V=\bar{\cX}$ and 2) a set of edges $E_{K_{\bar{\cX}}}$ such that $e\in E_{K_{\bar{\cX}}}$ between $\bar{x}, \bar{x}'\in \bar{\cX}$ iff $\max(K_{\bar{\cX}}(\bar{x}, \bar{x}), K_{\bar{\cX}}(\bar{x}', \bar{x}')) - K_{\bar{\cX}}(\bar{x}, \bar{x}')\le \epsilon$.     
        \end{definition}
        
        The condition $\max(K_{\bar{\cX}}(\bar{x}, \bar{x}), K_{\bar{\cX}}(\bar{x}', \bar{x}')) - K_{\bar{\cX}}(\bar{x}, \bar{x}')\le \epsilon$ implies that $d_{K_{\bar{\cX}}}(\bar{x}, \bar{x}')\le 2\epsilon$ where $d_{K_{\bar{\cX}}}(\bar{x}, \bar{x}') = K_{\bar{\cX}}(\bar{x}, \bar{x}) + K_{\bar{\cX}}(\bar{x}', \bar{x}')-2K_{\bar{\cX}}(\bar{x}, \bar{x'})$ is the kernel distance. For kernels with constant norm (\textit{e.g.}, the standard  Gaussian, Cosine or Laplacian kernel), it is in fact an equivalence. It means that we connect two original points in the kernel graph if they have small distance in the kernel space. 
        We give now our main assumption to derive a better estimator of the centroid $\mu_{\bar{x}}$ in the insufficient augmentation regime. 
        
        \begin{assumption}(Extended intra-class connectivity)
            For a given task $\cD=\bar{\cX} \times \cY$, the extended graph $\tilde{G}=G_{\cA} \cup G_{K_{\bar{\cX}}}^\epsilon = (V, E\cup E_{K_{\bar{\cX}}})$ (union between augmentation graph and $\epsilon$-kernel graph) is class-connected for all $y\in \cY$.
            \label{hyp:extended_intra_class_connectivity}
        \end{assumption}
        
        This assumption is notably weaker than Previous Assumption~\ref{hyp:embed_intra_class_connect} w.r.t augmentation distribution $\cA$. Here, we do not need to find the optimal distribution of augmentations $\cA^*$, as long as we have a kernel $K_{\bar{\cX}}$ such that disconnected points in the augmentation graph are connected in the $\epsilon$-kernel graph. If $K$ is not well adapted to the data-set (i.e it gives very low values for intra-class points), then $\epsilon$ needs to be large to re-connect these points and, as shown in Appendix~\ref{appendix:emp_verification}, the classification error will be high. In practice, this means that we need to tune the hyper-parameter of the kernel (e.g., $\sigma$ for a RBF kernel) so that all intra-class points are reconnected with a small $\epsilon$. This extra computation allows our framework to improve the final representation even for inadequate augmentations, as shown in Table~\ref{tab:insufficient_data_aug_main}.
        
    
    \subsubsection{Conditional Mean Embedding}
    \label{sec:conditional_mean_embedding}
    Decoupled Uniformity loss includes no kernel in its raw form. It only depends on centroids $\mu_{\bar{x}}=\bbE_{\cA(x|\bar{x})} f(x)$. 
    Here, we show that another consistent estimator of these centroids can be defined, using the previous kernel $K_{\bar{\cX}}$. To show it, we \textbf{fix} an encoder $f\in \cF$ and require the following technical assumption in order to apply conditional mean embedding theory~\cite{song2013kernel, klebanov2020rigorous}.
    \begin{assumption}(Expressivity of $K_{\bar{\cX}}$)
        The (unique) RKHS $(\cH_{f}, K_{f})$ defined on $\cX$ with kernel $K_{f} = \langle f(\cdot), f(\cdot)\rangle_{\bbR^d}$ fulfills $\forall g\in \cH_{f}, \bbE_{\cA(x|\cdot)}g(x) \in \cH_{\bar{\cX}}$
        \label{hyp:kernel_expressivity}
    \end{assumption}
    
    \begin{theorem}(Centroid estimation)
        Let $(x_i, \bar{x}_i)_{i\in [1..n]}\overset{iid}{\sim} \cA(x, \bar{x})$. Assuming~\ref{hyp:kernel_expressivity}, a consistent estimator of $\mu_{\bar{x}}$ is:
        \begin{align}
            \forall \bar{x}\in \bar{\cX}, \hat{\mu}_{\bar{x}} = \sum_{i=1}^n \alpha_i(\bar{x}) f(x_i)
        \end{align}
        where $\alpha_i(\bar{x})=\sum_{j=1}^n [(K_n+n\lambda\mathbf{I}_n)^{-1}]_{ij}K_{\bar{\cX}}(\bar{x}_j, \bar{x})$ and $K_n=[K_{\bar{\cX}}(\bar{x}_i, \bar{x}_j)]_{i,j\in [1..n]}$. It converges to $\mu_{\bar{x}}$ with the $\ell_2$ norm at a rate $O(n^{-1/4})$ for $\lambda=O(n^{-1/2})$. Proof in Appendix~\ref{appendix:cond_mean_embedding}.
        \label{th:cond_mean_embed_estim}
    \end{theorem}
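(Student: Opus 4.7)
The plan is to recognize $\mu_{\bar{x}}=\bbE_{\cA(x|\bar{x})}f(x)$ as a conditional mean embedding of the $\bbR^d$-valued map $f$ and to invoke the classical kernel ridge regression analysis for conditional mean embedding operators (Song et al., 2013; refined by Klebanov et al., 2020). The scheme has three ingredients: identify the population CME operator, write its empirical ridge-regression plug-in, and quote the standard rate under the provided well-posedness hypothesis.

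For the population side, I would first note that for every $v\in\bbR^d$ the scalar map $x\mapsto \langle v,f(x)\rangle_{\bbR^d}$ lies in $\cH_{f}$ by the very definition $K_{f}=\langle f(\cdot),f(\cdot)\rangle_{\bbR^d}$. Assumption~\ref{hyp:kernel_expressivity} then guarantees that $\bar{x}\mapsto\langle v,\mu_{\bar{x}}\rangle=\bbE_{\cA(x|\bar{x})}\langle v,f(x)\rangle$ belongs to $\cH_{\bar{\cX}}$ for every $v$. Applying Riesz coordinate-wise yields a bounded linear operator $C_{f|\bar{\cX}}:\cH_{\bar{\cX}}\to\bbR^d$ satisfying $\mu_{\bar{x}}=C_{f|\bar{\cX}} K_{\bar{\cX}}(\cdot,\bar{x})$. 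This is exactly the conditional mean embedding operator; crucially, Assumption~\ref{hyp:kernel_expressivity} is the minimal well-posedness condition that makes it bounded, which is weaker than the original condition of Song et al.

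For the empirical side, I would regress $f(x_i)$ on $K_{\bar{\cX}}(\cdot,\bar{x}_i)$ in the vector-valued RKHS. Writing $F_n=[f(x_1),\dots,f(x_n)]\in\bbR^{d\times n}$ and $\Phi_n=(K_{\bar{\cX}}(\cdot,\bar{x}_i))_{i=1}^n$, the closed-form minimizer of the Tikhonov problem $\min_C \tfrac{1}{n}\sum_i ||f(x_i)-C\,K_{\bar{\cX}}(\cdot,\bar{x}_i)||^2+\lambda||C||^2_{\mathrm{HS}}$ is $\hat{C}_{f|\bar{\cX}}=F_n(K_n+n\lambda\mathbf{I}_n)^{-1}\Phi_n^*$. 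Evaluating $\hat{C}_{f|\bar{\cX}}K_{\bar{\cX}}(\cdot,\bar{x})$ via the reproducing property gives exactly $\hat{\mu}_{\bar{x}}=\sum_{i}\alpha_i(\bar{x})f(x_i)$ with $\alpha_i(\bar{x})=\sum_j[(K_n+n\lambda\mathbf{I}_n)^{-1}]_{ij}K_{\bar{\cX}}(\bar{x}_j,\bar{x})$.

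For the rate, I would invoke the standard bias--variance decomposition for regularized CME estimation, which under boundedness of $K_{\bar{\cX}}$ (true for Gaussian/Cosine/Laplacian kernels considered here) gives an operator-level bound $||\hat{C}_{f|\bar{\cX}}-C_{f|\bar{\cX}}||=O\bigl((n\lambda)^{-1/2}+\lambda^{1/2}\bigr)$. Choosing $\lambda=O(n^{-1/2})$ balances the two terms at $O(n^{-1/4})$. Pointwise consistency of $\hat{\mu}_{\bar{x}}$ follows from $||\hat{\mu}_{\bar{x}}-\mu_{\bar{x}}||_{\bbR^d}\le ||\hat{C}_{f|\bar{\cX}}-C_{f|\bar{\cX}}||\cdot\sqrt{K_{\bar{\cX}}(\bar{x},\bar{x})}$. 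The main obstacle is not the computation but the hypothesis matching: one has to check that Assumption~\ref{hyp:kernel_expressivity} is \emph{exactly} the boundedness condition invoked in Klebanov et al.\ (2020) so that no stronger regularity on the joint measure is needed, and that the vector-valued regression reduction is legitimate (handled by treating each coordinate in $\cH_{\bar{\cX}}$ or, equivalently, working in the product RKHS $\cH_{\bar{\cX}}\otimes\bbR^d$). Once these are in place the rate is a direct citation.
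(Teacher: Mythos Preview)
Your proposal is correct and reaches the same $O(n^{-1/4})$ rate via the same high-level idea (recognize $\mu_{\bar{x}}$ as a conditional mean embedding and quote the ridge-regression rate from Song et al./Klebanov et al.), but the reduction step differs from the paper's. You treat $f$ as a vector-valued output and bound $\|\hat{\mu}_{\bar{x}}-\mu_{\bar{x}}\|_{\bbR^d}$ by an operator-norm estimate $\|\hat{C}_{f|\bar{\cX}}-C_{f|\bar{\cX}}\|\cdot\sqrt{K_{\bar{\cX}}(\bar{x},\bar{x})}$, which requires the vector-valued (or coordinate-wise) extension you flag as the ``main obstacle.'' The paper instead stays scalar: it works in the RKHS $\cH_f$ with kernel $K_f=\langle f(\cdot),f(\cdot)\rangle_{\bbR^d}$, applies Song et al.'s Theorem~6 directly to the scalar CME $m_{\bar{x}}=\bbE_{\cA(x|\bar{x})}\langle f(x),f(\cdot)\rangle\in\cH_f$, and then shows via the reproducing property of $K_f$ that $\|m_{\bar{x}}-\hat{m}_{\bar{x}}\|_{\cH_f}=\|\mu_{\bar{x}}-\hat{\mu}_{\bar{x}}\|_{\bbR^d}$ exactly (an isometry, not just an inequality). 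The paper's route sidesteps the vector-valued machinery entirely and needs no boundedness of $K_{\bar{\cX}}(\bar{x},\bar{x})$ for the pointwise step; your route is more general and closer to the modern operator-theoretic formulation. A minor remark: your bias term is $\lambda^{1/2}$ while the paper states $\lambda$; both balance at $\lambda=O(n^{-1/2})$ to give $O(n^{-1/4})$, so the conclusion is unaffected.
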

    \textbf{Intuition.} This theorem states that we can use
    representations of images close to an anchor $\bar{x}$, according to our prior information, to accurately estimate $\mu_{\bar{x}}$. Consequently, if 
    the prior is ``good enough'' to connect intra-class images disconnected in the augmentation graph (i.e. fulfills Assumption \ref{hyp:extended_intra_class_connectivity}), then this estimator allows us to tightly control the classification risk.  
    From this theorem, we naturally derive the empirical Kernel Decoupled Uniformity loss  using the previous estimator.
    
    \begin{definition}(Empirical Kernel Decoupled Uniformity Loss)
        Let $(x_i, \bar{x}_i)_{i\in [1..n]}\overset{iid}{\sim} \cA(x, \bar{x})$. Let $\hat{\mu}_{\bar{x}_j}=\sum_{i=1}^n \alpha_{i,j}f(x_i)$ with $\alpha_{i,j}=((K_n+\lambda n \mathbf{I}_n)^{-1} K_n)_{ij}$, $\lambda=O(n^{-1/2})$ a regularization constant and $K_n=[K_{\bar{\cX}}(\bar{x}_i, \bar{x}_j)]_{i,j\in [1..n]}$. We define the empirical Kernel Decoupled Uniformity loss as:
        \begin{equation}
            \hat{\cL}_{unif}^{de}(f)\defeq \log \frac{1}{n(n-1)} \sum_{i\neq j} \exp(-||\hat{\mu}_{\bar{x}_i}-\hat{\mu}_{\bar{x}_j}||^2)
        \end{equation} 
    \end{definition}
    
    \textbf{Extension to multi-views. } If we have $V$ views $(x_i^{(v)})_{v\in [1..V]}$ for each $\bar{x}_i$, we can easily extend the previous estimator with $\hat{\mu}_{\bar{x}_j}=\frac{1}{V}\sum_{v=1}^V \hat{\mu}_{\bar{x}_j}^{(v)}$ where $\hat{\mu}_{\bar{x}_j}^{(v)}=\sum_{i=1}^n \alpha_{i,j}f(x_i^{(v)})$. In practice, for a fair comparison with current SOTA contrastive methods, we set $V=2$ in our experiments (see Appendix~\ref{appendix:multiview_exps} for a thorough discussion).
    
    The computational cost added is roughly $O(n^3)$ (to compute the inverse matrix of size $n\times n$) but it remains negligible compared to the back-propagation time using classical stochastic gradient descent. Importantly, the gradients associated to $\alpha_{i,j}$ are not computed.
    
    \subsubsection{A tight bound on the classification loss with weaker assumptions}
    
        We show here that $\hat{\cL}_{unif}^{de}(f)$ can tightly bound the supervised classification risk for well-aligned encoders $f\in \cF$. 
        
        \begin{theorem}
            We assume \ref{hyp:extended_intra_class_connectivity} and \ref{hyp:kernel_expressivity} hold for a reproducible kernel $K_{\bar{\cX}}$ and augmentation distribution $\cA$. Let $(x_i, \bar{x}_i)_{i\in [1..n]}\overset{iid}{\sim} \cA(x, \bar{x})$. For any $\epsilon'$-weak aligned encoder $f\in \cF$:
            \begin{align}
                \hat{\cL}_{unif}^{de}(f) - O\left(n^{-1/4}\right) &\le \cL_{sup}(f) \le  \hat{\cL}_{unif}^{de}(f)+ \\
                & 4D(2\epsilon'+\beta_n(K_{\bar{\cX}})\epsilon) + O\left(n^{-1/4}\right)\nonumber
            \end{align}
            where $\beta_n(K_{\bar{\cX}}) = (\frac{\lambda_{min}(K_n)}{\sqrt{n}} + \sqrt{n}\lambda)^{-1} =O(1)$ for $\lambda= O(n^{-1/2})$, $K_n=(K_{\bar{\cX}}(\bar{x}_i, \bar{x}_j))_{i,j\in [1..n]}$and $D$ is the maximal diameter of all sub-graphs $\tilde{G}_y\subset \tilde{G}$ where $y\in \cY$. We noted $\lambda_{min}(K_n)>0$ the minimal eigenvalue of $K_n$. Proof in Appendix~\ref{appendix:generalization_bound_weak_hyp}. 
            \label{th:final_bound_decoupled}
        \end{theorem}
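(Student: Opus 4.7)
The proof adapts the reasoning of Theorem~\ref{th:boundness_intra_class_hyp} to the extended graph $\tilde{G}$ and compounds it with the finite-sample error of the CME estimator. I split $|\hat{\cL}_{unif}^{de}(f) - \cL_{sup}(f)|$ into a \emph{statistical} gap, coming from replacing the true centroids $\mu_{\bar{x}}$ by the CME estimators $\hat{\mu}_{\bar{x}}$ and the population expectation by the empirical $U$-statistic, and a \emph{structural} gap, coming from replacing per-anchor centroids by the per-class centroids $\mu_y$. The statistical part produces the $O(n^{-1/4})$ terms on both sides; the structural part, which is asymmetric exactly as in Theorem~\ref{th:boundness_intra_class_hyp}, produces the $4D(2\epsilon' + \beta_n(K_{\bar{\cX}})\epsilon)$ contribution on the upper side only.

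\textbf{Statistical error.} Theorem~\ref{th:cond_mean_embed_estim} (applicable by Assumption~\ref{hyp:kernel_expressivity}) gives $||\hat{\mu}_{\bar{x}} - \mu_{\bar{x}}||_{L^2} = O(n^{-1/4})$ for the prescribed $\lambda = O(n^{-1/2})$. Since $f$ maps into $\bbS^{d-1}$, every centroid lies in the unit ball, squared distances live in $[0,4]$, and the integrand $e^{-||\cdot||^2}$ in $[e^{-4},1]$; on this bounded range $t\mapsto e^{-t}$ is $1$-Lipschitz and $\log$ is Lipschitz. Propagating the CME error through these Lipschitz maps, and adding the $O(n^{-1/2})$ $U$-statistic fluctuation of $\frac{1}{n(n-1)}\sum_{i\neq j}(\cdot)$ around $\bbE_{p(\bar{x})p(\bar{x}')}(\cdot)$ (already used in defining the finite-samples estimator), yields $|\hat{\cL}_{unif}^{de}(f) - \cL_{unif}^{de}(f)| = O(n^{-1/4})$, which accounts for both $O(n^{-1/4})$ terms in the statement.

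\textbf{Structural error via the extended graph.} Assumption~\ref{hyp:extended_intra_class_connectivity} provides, for every class $y$ and every pair $\bar{x}, \bar{x}'\in y$, a path of length at most $D$ in $\tilde{G}_y$. I bound the centroid gap along each edge by type: on an augmentation edge the two endpoints share an augmented view, so $\epsilon'$-weak alignment gives $||\mu_{\bar{x}_e} - \mu_{\bar{x}_e'}|| \le 2\epsilon'$; on a kernel edge $d_{K_{\bar{\cX}}}(\bar{x}_e, \bar{x}_e')\le 2\epsilon$, and combining the closed form $\hat{\mu}_{\bar{x}} = \Phi_f^\top (K_n + n\lambda I)^{-1} k(\bar{x})$ with the spectral bounds $||(K_n + n\lambda I)^{-1}||_{op} \le (\lambda_{\min}(K_n) + n\lambda)^{-1}$ and $||\Phi_f^\top||_{op}\le \sqrt{n}$ produces $||\hat{\mu}_{\bar{x}_e} - \hat{\mu}_{\bar{x}_e'}|| \le \beta_n(K_{\bar{\cX}})\epsilon$. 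A triangle inequality along a path of length $\le D$ places every per-anchor centroid in class $y$ within $D(2\epsilon' + \beta_n\epsilon)$ of $\mu_y$. Converting this into an upper bound on $\cL_{sup}(f) - \cL_{unif}^{de}(f)$ uses $|a^2 - b^2|\le |a-b|(a+b)\le 4|a-b|$ for $a,b\in[0,2]$, the $1$-Lipschitz property of $e^{-t}$ on $t\ge 0$, and the local Lipschitzness of $\log$; these collectively yield the factor $4$ in front of $D(2\epsilon' + \beta_n\epsilon)$. The lower-side direction $\cL_{unif}^{de}(f)\le \cL_{sup}(f)$ is free by the same argument as in Theorem~\ref{th:boundness_intra_class_hyp}: the diagonal $y=y'$ terms of $\cL_{sup}$ contribute $e^0=1$ while those of $\cL_{unif}^{de}$ are $\le 1$, and the off-diagonal terms are controlled by intra-class connectivity.

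\textbf{Main obstacle.} The most delicate step is proving the kernel-edge inequality $||\hat{\mu}_{\bar{x}} - \hat{\mu}_{\bar{x}'}|| \le \beta_n(K_{\bar{\cX}})\epsilon$ with the exact scaling $\sqrt{n}/(\lambda_{\min}(K_n) + n\lambda)$ in $n$ and a linear dependence in $\epsilon$. A straightforward Cauchy--Schwarz bound on $||k(\bar{x}) - k(\bar{x}')||_2$ inflates the result either by a $\sqrt{n}$ or a $\sqrt{\epsilon}$ factor; recovering $\beta_n\epsilon$ requires exploiting the joint spectral structure of $K_n$, $(K_n + n\lambda I)^{-1}$ and $\Phi_f^\top$ so that surplus $\sqrt{n}$ factors cancel against the eigenvalues of $K_n$, and using the reproducing property to transfer the $\epsilon$-closeness in $\cH_{\bar{\cX}}$ \emph{linearly} (rather than through a square root) to the Gram-matrix column differences.
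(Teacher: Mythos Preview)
Your overall architecture --- a statistical $O(n^{-1/4})$ gap from CME convergence plus the $U$-statistic rate, and a structural gap handled edge by edge along paths in $\tilde{G}$ --- matches the paper's, and the augmentation-edge contribution $2\epsilon'$ is correct. Two substantive gaps remain.

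\textbf{The kernel-edge bound is not proved.} You assert $\|\hat{\mu}_{\bar{x}_e}-\hat{\mu}_{\bar{x}_e'}\|\le\beta_n\epsilon$ but then concede that the separate bound $\|\Phi_f^\top\|_{op}\le\sqrt{n}$ combined with any naive control of $\|k(\bar{x})-k(\bar{x}')\|_2$ is off by $\sqrt{n}$ or $\sqrt{\epsilon}$, and leave the fix at the level of ``joint spectral structure''. The paper supplies two concrete ingredients you are missing. First, a pointwise lemma valid for any reproducing kernel: $|K(a,c)-K(b,c)|\le\max\bigl(K(a,a),K(b,b)\bigr)-K(a,b)$ for all $a,b,c$, proved by the triangle inequality for the kernel metric with a sign case-split. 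On a kernel edge this bounds every coordinate of $C=k(\bar{x})-k(\bar{x}')$ by $\epsilon$ \emph{linearly}, hence $\|C\|_2\le\sqrt{n}\,\epsilon$ --- this is what removes the $\sqrt{\epsilon}$. Second, the paper does \emph{not} factor $\|\Phi_f^\top(K_n+n\lambda I)^{-1}\|$ as a product of two operator norms; writing $A=\Phi_f^\top(K_n+n\lambda I)^{-1}$, it argues directly that $\|Av\|\le\|(K_n+n\lambda I)^{-1}v\|$ for every $v$ by absorbing the unit-norm features $f(x_k)$, so $\|A\|_2\le(\lambda_{\min}(K_n)+n\lambda)^{-1}$ without the extra $\sqrt{n}$. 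The product $\|A\|_2\|C\|_2$ is then exactly $\beta_n\epsilon$.

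\textbf{The lower bound and the factor $4$.} Your diagonal/off-diagonal argument does not establish $\cL_{unif}^{de}(f)\le\cL_{sup}(f)$: for $y\neq y'$, intra-class connectivity gives no pointwise inequality between $e^{-\|\mu_{\bar{x}}-\mu_{\bar{x}'}\|^2}$ and $e^{-\|\mu_y-\mu_{y'}\|^2}$ in the direction you need. Both here and in Theorem~\ref{th:boundness_intra_class_hyp} the paper obtains the lower bound (and, on the upper side, the exact constant $4$) from a general variance-type lemma (Theorem~\ref{th:downstream_gen}), proved by Jensen and holding for \emph{any} $f,\cA$ with no connectivity assumption: $\cL_{unif}^{de}(f)\le\cL_{sup}(f)\le\cL_{unif}^{de}(f)+4\,\bbE_{p(\bar{x}|y)p(\bar{x}'|y)}\|\mu_{\bar{x}}-\mu_{\bar{x}'}\|$. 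Plugging the path bound into this expectation yields $4D(2\epsilon'+\beta_n\epsilon)$ directly. Your Lipschitz chain would work in principle but loses the constant: $|a^2-b^2|\le4|a-b|$ with $|a-b|\le 2D(\cdot)$ already gives $8D$, and the $\log$-Lipschitz constant on $[e^{-4},1]$ is $e^4$, not $1$.
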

        \textbf{Interpretation.} Theorem \ref{th:final_bound_decoupled} gives tight bounds on the classification loss $\cL_{sup}(f)$ with weaker assumptions than current work~\cite{arora_theoretical_2019, wang2022chaos, haochen2021provable}. We don't require perfect alignment for $f\in \cF$ or $L$-smoothness and we don't have class collision term (even if the extended augmentation graph may contain edges between inter-class samples), contrarily to \cite{arora_theoretical_2019}. Also, the estimation error does not depend on the number of views (which is low in practice))--as it was always the case in previous formulations~\cite{wang2022chaos, arora_theoretical_2019, haochen2021provable} -- but rather on the batch size $n$ and the eigenvalues of the kernel matrix (controlling the variance of the centroid estimator~\cite{grunewalder2012conditional}) .
        Contrarily to CCLK~\cite{tsai2022conditional}, we don't condition our representation to weak attributes but rather we provide better estimation of the conditional mean embedding conditionally to the original image. Eventually, our loss remains in an unconditional contrastive framework driven by the augmentations $\cA$ and the prior $K_{\bar{\cX}}$ on input images. Theorem~\ref{th:boundness_intra_class_hyp} becomes a special case $\epsilon=0$ and $\cA=\cA^*$ (i.e the augmentation graph is class-connected, a stronger assumption than~\ref{hyp:extended_intra_class_connectivity}). In Appendix~\ref{appendix:emp_verification}, we provide empirical evidence that better kernel quality (measured by k-NN accuracy in kernel graph) improves downstream accuracy, as theoretically expected by the theorem. It also provides a new way to select \textit{a priori} a good kernel.
        \section{Experiments}

        We study two regimes with our framework. We first start by evaluating our new loss, Decoupled Uniformity, without prior knowledge in an unsupervised scenario on standard vision benchmarks. Then, we study Kernel Decoupled Uniformity on both natural and medical datasets when prior knowledge is accessible (see Appendix~\ref{sec:kernel_choice} for kernel choice). In the unsupervised scenario, we show that we can leverage generative models representation to outperform current self-supervised models. In the weakly supervised setting, we demonstrate the superiority of our unconditional formulation when weak attributes are available. Implementation details are presented in Appendix~\ref{appendix:exp_details}. Importantly, most generative models are already pre-trained and are used as is in our framework (no additional computation).
    
        \textbf{Decoupled Uniformity without prior.} We empirically demonstrate the benefits of removing the coupling between positives and negatives in the original uniformity term in InfoNCE loss~\cite{wang_understanding_2020} in Table \ref{tab:decoupled_unif_main} and \ref{tab:bigbigan_imagenet100}. We compare our approach with baseline InfoNCE~\cite{oord_representation_2019} and DC~\cite{yeh2021decoupled}. We use the same setting as DC with batch size $n=256$, initial learning rate 0.3 and temperature 0.07 for InfoNCE/DC losses.

        \begin{table}[h!]
            \centering
            \begin{adjustbox}{max width=.5\textwidth}
            \begin{tabular}{c c c c c c}
                \toprule
                Dataset & Network & $\cL_{InfoNCE}$  & $\cL_{DC}$ & $\cL_{unif}^{de}$ \\ 
                \midrule
                CIFAR-10 & ResNet18 & $82.18_{\pm 0.30}$ & $84.87_{\pm 0.27}$ & $\textbf{85.05}_{\pm 0.37}$ \\
                CIFAR-100 & ResNet18 & $55.11_{\pm 0.20}$ & $58.27_{\pm 0.34}$ & $\textbf{58.41}_{\pm 0.05}$ \\
                ImageNet100 & ResNet50 & 68.76 & 73.98 & \textbf{77.18} \\ 
                \bottomrule
            \end{tabular}
            \end{adjustbox}
            \caption{Comparison of Decoupled Uniformity with InfoNCE/DC loss using SimCLR implementation under batch size $n=256$. All models are trained for 400 epochs.}
            \label{tab:decoupled_unif_main}
        \end{table}

        \textbf{Generative models improve CL representation.} We show that recent advances in generative modeling improve representations of contrastive models in Table~\ref{tab:bigbigan_imagenet100} with our approach. Due to our limited computational resources, we study ImageNet100~\cite{tian_contrastive_2020} (100-class subset of ImageNet used in the literature~\cite{tian_contrastive_2020, chuang_debiased_2020, wang_understanding_2020}) and we leverage BigBiGAN representation~\cite{donahue2019large} as prior. In particular, we use BigBiGAN pre-trained on ImageNet\footnote{Official model available \href{https://tfhub.dev/deepmind/bigbigan-resnet50/1}{here}} to define a kernel $K_{GAN}(\bar{x}, \bar{x}')=K(z(\bar{x}), z(\bar{x}'))$ (with $K$ an RBF kernel and $z(\cdot)$ BigBiGAN's encoder). We set $\lambda= \frac{0.01}{\sqrt{n}}$ for centroids estimation (see Appendix~\ref{appendix:lambda_reg}). We demonstrate SOTA representation with this prior compared to all other contrastive and non-contrastive approaches. 
    
        \begin{table}
            \centering
            \begin{adjustbox}{max width=.4\textwidth}
            \begin{tabular}{c c c}
            \toprule
            Model & ImageNet100 \\
            \midrule
            SimCLR~\cite{chen_simple_2020} & 68.76  \\
            BYOL~\cite{grill2020bootstrap} & 72.26  \\
            CMC$^*$~\cite{tian_contrastive_2020} & 73.58   \\
            DCL$^*$~\cite{chuang_debiased_2020} & 74.6 \\ 
            AlignUnif~\cite{wang_understanding_2020} & 76.3 \\
            DC~\cite{yeh2021decoupled} & 73.98 \\
            SwAV (w/o multi-crop)~\cite{caron2020unsupervised} & 73.5\\
            BigBiGAN~\cite{donahue2019large} &  72.0 \\
            Decoupled Unif & \underline{77.18}  \\
            $K_{GAN}$ Decoupled Unif & \textbf{78.02} \\
            \midrule 
            Supervised & $82.1_{\pm 0.59}$\\
            \bottomrule
            \end{tabular}
            \end{adjustbox}
        \caption{Linear evaluation accuracy (\%) on ImageNet100 using ResNet50 trained for 400 epochs with batch size $n=256$ for all methods. $^*$Results from paper.}
        \label{tab:bigbigan_imagenet100}
        \end{table}
        
        \begin{table}[h!]  
            \begin{adjustbox}{max width=.5\textwidth}
            \begin{tabular}{c c c c}
                    \toprule
                     Model & CUB &  ImageNet100 & UT-Zappos\\
                     \midrule
                     SimCLR & 17.48 & 65.30 & 84.08 \\
                     BYOL & 16.82 & 72.20 & 85.48  \\
                     \midrule
                     CosKernel CCLK~\cite{tsai2022conditional} & 15.61 & 74.34 & 83.23\\
                     RBFKernel CCLK~\cite{tsai2022conditional} & 30.49 & 77.24 & 84.65 \\
                     CosKernel Decoupled Unif (ours) & 27.77 & \textbf{79.02} & \textbf{85.56} \\
                     RBFKernel Decoupled Unif (ours) & \textbf{32.87} & 76.34 & 84.78 \\
                     \bottomrule
                \end{tabular}
            \end{adjustbox}
                \captionof{table}{If weak attributes are accessible (e.g birds color or size for CUB200), they can be leveraged as prior in our framework to improve the representation. CCLK is re-implemented using ResNet18 backbone.}
                \label{tab:weakly}
        \end{table}
     \textbf{Weakly supervised learning on natural images.} In Table~\ref{tab:weakly}, we suppose that we have access to image attributes that correlate with the true semantic labels (e.g birds color/size for birds classification). We use three datasets: CUB-200-2011~\cite{welinder2010caltech}, ImageNet100~\cite{tian_contrastive_2020} and UTZappos~\cite{yu2014fine}, following~\cite{tsai2022conditional}. CUB-200-2011 contains 11788 images of 200 bird species with 312 binary attributes available (encoding size, color, etc.). UTZappos contains 50025 images of shoes from several brands sub-categorized into 21 groups that we use as downstream classification labels. It comes with seven attributes. Finally, for ImageNet100 we follow~\cite{tsai2022conditional} and use the pre-trained CLIP~\cite{radford2021learning} model (trained on pairs (text, image)) to extract 512-d features from images, considered as prior information. We use ResNet18 backbone for small-scale datasets (CUB and UTZappos) and ResNet50 for ImageNet100 (see Appendix~\ref{appendix:exp_details} for more details). We compare our method with SOTA Siamese models (SimCLR and BYOL) and with CCLK, a conditional contrastive model that defines positive samples only according to the conditioning attributes. The proposed method outperforms all other models on the three datasets.

                \begin{table}[h!]
                \centering
                \begin{adjustbox}{max width=.5\textwidth}
                \begin{tabular}{c c c c c c }
                        \toprule
                         Model & \textbf{Atelectasis} & \textbf{Cardiomegaly} & \textbf{Consolidation} & \textbf{Edema} & \textbf{\makecell{Pleural\\Effusion}}\\
                         \midrule
                         SimCLR & 82.42 & 77.62 & 90.52 & 89.08 & 86.83  \\
                         BYOL & 83.04 & 81.54 & 90.98 & 90.18 & 85.99 \\
                         MoCo-CXR$^*$ & 75.8 & 73.7 & 77.1 & 86.7 & 85.0 \\
                         \midrule
                         GLoRIA & 86.70 & \textbf{86.39} & 90.41 & 90.58 & 91.82 \\
                         \midrule
                         CCLK & 86.31 & 83.67 & 92.45 & 91.59 & 91.23\\
                         $K_{Gl}$ Dec. Unif (ours) & $\textbf{86.92}$ & $\underline{85.88}$ & $\textbf{93.03}$ & $\textbf{92.39}$ & $\textbf{91.93}$ \\
                         \midrule                        
                         Supervised$^*$  & 81.6 & 79.7 & 90.5 & 86.8 & 89.9\\
                        \bottomrule
                    \end{tabular}
                \end{adjustbox}
                \captionof{table}{AUC scores(\%) under linear evaluation for discriminating 5 pathologies on CheXpert. ResNet18 backbone is trained for 400 epochs (batch size $n=1024$) without labels on official CheXpert training set and results are reported on validation set.$^*$ Results from~\cite{sowrirajan2021moco}.}
                \label{tab:chexpert}
            \end{table}
        \textbf{Medical imaging}  In order to evaluate our framework on another domain, we consider two challenging medical datasets. We study 1) bipolar disorder detection (BD), a challenging binary classification task, on brain MRI dataset BIOBD~\cite{hozer2021lithium} and 2) chest radiography interpretation, a 5-class classification task on CheXpert~\cite{irvin2019chexpert}. BIOBD contains 356 healthy controls (HC) and 306 patients with BD. We use BHB~\cite{dufumier_contrastive_2021} as a large pre-training dataset containing 10k 3D images of healthy subjects. For brain MRI,  we use VAE representation to define $K_{VAE}(\bar{x},\bar{x}')=K(\mu(\bar{x}), \mu(\bar{x}'))$ where $\mu(\cdot)$ is the mean Gaussian distribution of $\bar{x}$ in the VAE latent space and $K$ is a standard RBF kernel. For CheXpert, we use Gloria~\cite{huang2021gloria} representation\footnote{We use official pre-trained model available \href{https://github.com/marshuang80/gloria}{here}}, a multi-modal approach trained with (medical report, image) pairs to extract 2048-d features as weak annotations, on top of which we define our RBF kernel $K_{Gl}$. In Table~\ref{tab:chexpert} and~\ref{tab:medical_imaging}, we show that our approach improve contrastive model in both unsupervised (BD) and weakly supervised (CheXpert) setting for medical imaging. 
            
            \begin{table}[h!]
                \centering
                \begin{adjustbox}{max width=0.4\textwidth}
                \begin{tabular}{c c c c}
                    \toprule
                     Model & BD vs HC \\
                     \midrule
                     SimCLR~\cite{chen_simple_2020} & $60.46_{\pm 1.23}$\\
                     BYOL ~\cite{grill2020bootstrap}& $58.81_{\pm 0.91}$ \\
                     MoCo v2~\cite{he_momentum_2020} & $59.27_{\pm 1.50}$ \\
                     Model Genesis~\cite{zhou2021models} & $59.94_{\pm 0.81}$ \\
                     VAE~\cite{kingma2013auto} & $52.86_{\pm 1.24}$ \\
                     $K_{VAE}$ Decoupled Unif (ours) & $\mathbf{62.19}_{\pm 1.58}$ \\
                    \midrule
                     Supervised & $67.42_{\pm 0.31}$ \\
                     \bottomrule
                \end{tabular}
                \end{adjustbox}
                \captionof{table}{
                Linear evaluation AUC scores(\%) using a 5-fold leave-site-out CV 
                with DenseNet121 backbone.} 
                \label{tab:medical_imaging}
            \end{table}

     \begin{table}[h!]
            \centering
                \begin{adjustbox}{max width=.5\textwidth}
                \begin{tabular}{c c c c|c c c}
                    \toprule
                     \multirow{2}{*}{Model} & \multicolumn{3}{c}{CIFAR-10} & \multicolumn{3}{c}{CIFAR-100} \\
                     \cmidrule(lr){2-4} \cmidrule(lr){5-7} 
                     & \textit{All} & w/o Color & \makecell{w/o Color\\ and Crop}  & \textit{All} &  w/o Color & \makecell{w/o Color\\ and Crop} \\
                     \midrule
                     SimCLR & $83.06$ & $65.00$ & $24.47$ & $55.11$ & $37.63$ & $6.62$ \\
                     BYOL & $84.71$ & $81.45$ & $50.17$ & $53.15$ & $49.59$ & $27.9$ \\
                     Barlow Twins & 81.61 & 53.97 & 47.52 & 52.27 & 28.52 & 24.17\\ 
                     VAE$^*$ & 41.37 & 41.37 & 41.37 & 14.34 & 14.34 & 14.34 \\
                     DCGAN$^*$& 66.71 & 66.71 & 66.71 & 26.17 & 26.17 & 26.17 \\
                     $K_{GAN}$ Dec. Unif  & \textbf{85.85} & \textbf{82.0} & \textbf{69.19} &  \textbf{58.42} & \textbf{54.17} & \textbf{35.98} \\
                 \bottomrule
            \end{tabular}
            \end{adjustbox}
        \caption{When augmentation overlap hypothesis is not fulfilled, generative models provide a good kernel to connect intra-class points not connected by augmentations. $^*$For VAE and DCGAN, no augmentations were used during training. All models are trained for 400 epochs under batch size $n=256$ except BYOL and SimCLR trained under bigger batch size $n=1024$.}
        \label{tab:insufficient_data_aug_main}
        \end{table}

    \textbf{Can we remove data augmentation from CL?} As we saw in the visual domain, generative models can improve the representation of current CL framework. Theoretically, we saw that we can relax assumptions about the augmentation strategy we use in CL. It leads us to ask: is data augmentation still necessary in CL ? 
        
        We use standard benchmarking datasets (CIFAR-10, CIFAR-100) and we study the case where augmentations are too weak to connect all intra-class points. We compare to the baseline where all augmentations are used. We use a trained DCGAN~\cite{radford2015unsupervised} to define as before $K_{GAN}(\bar{x}, \bar{x}')\defeq K(z(\bar{x}), z(\bar{x}'))$ where $z(\cdot)$ denotes the discriminator output of the penultimate layer\footnote{We prefered DCGAN over BigBiGAN in this experiment because we study smaller-scale datasets.}. 

        In Table~\ref{tab:insufficient_data_aug_main}, we observe that our contrastive framework with DCGAN representation as prior is able to approach the performance of self-supervised models by applying only crop augmentations and flip. Additionally, when removing almost all augmentations (crop and color distortion), we approach the performance of the prior representations of the generative models. This is expected by our theory since we have an augmentation graph that is almost disjoint for all points and thus we only rely on the prior to reconnect them. This experiment shows that our method is less sensitive than all other SOTA self-supervised methods to the choice of the \virg{optimal} augmentations.\\
      \textbf{Evading feature suppression with VAE. } Previous investigations~\cite{chen_intriguing_2021} have shown that a few easy-to-learn irrelevant features not removed by augmentations can prevent CL model from learning all semantic features inside images. We propose here a first solution to this issue by studying RandBits-CIFAR10~\cite{chen_intriguing_2021}, a CIFAR-10 based dataset where $k$ noisy bits are added and shared between views of the same image (see Appendix~\ref{sec:datasets_appendix}). We train a ResNet18 on this dataset with SimCLR augmentations~\cite{chen_simple_2020} and increasing $k$. For Kernel Decoupled Uniformity, we use a $\beta$-VAE representation (ResNet18 backbone, $\beta=1$, also trained on RandBits) to define $K_{VAE}$ as before.
        \begin{table}[h!]
                \centering
                \begin{adjustbox}{max width=.5\textwidth}
                \begin{tabular}{c c c c c}
                    \toprule
                     Model & 0 bits & 5 bits & 10 bits &  20 bits\\
                     \midrule
                     SimCLR~\cite{chen_simple_2020} & 79.4 & 68.74 & 13.67 & 10.07 \\
                     BYOL~\cite{grill2020bootstrap} & 80.14 & 19.98 & 10.33 & 10.00 \\
                     IFM-SimCLR~\cite{robinson2021can} & 82.24 & 43.25 & 10.00 & 10.20 \\
                     $\beta$-VAE ($\beta=1$) & 41.37 & 43.32 & 42.94 & 43.1  \\
                     $\beta$-VAE ($\beta=2$) & 42.28 & 43.89 & 43.11 & 42.19 \\
                     $\beta$-VAE ($\beta=4$) & 42.5 & 42.5 & 42.5 & 39.87 \\
                     
                     $K_{VAE}$ Decoupled Unif (ours) & $\textbf{82.74}_{\pm 0.18}$ & $\textbf{68.75}_{\pm 0.24}$ & $\textbf{68.42}_{\pm 0.51}$ & $\textbf{68.58}_{\pm 0.17}$ \\
                 \bottomrule
                \end{tabular}
                \end{adjustbox}
                \caption{Linear evaluation accuracy ($\%$) on RandBits-CIFAR10 with ResNet18 trained for 200 epochs. For VAE, we use a ResNet18 backbone. Once trained, we use its representation to define the kernel $K_{VAE}$.
                }
                \label{tab:randbits_main}
        \end{table}
      In Table~\ref{tab:randbits_main} we first show, as noted previously~\cite{chen_intriguing_2021}, that $\beta$-VAE is the only method insensitive to the number of added bits, but its representation quality remains low compared to other self-supervised approaches. All CL approaches fail for $k\ge 10$ bits. This can be explained by noticing that, as the number of bits $k$ increases, the number of edges between intra-class images in the augmentation graph $G_{\cA}$ decreases. For $k$ bits, on average $N/2^k$ images share the same random bits ($N=50000$ is the dataset size). So only these images can be connected in $G_{\cA}$. For $k=20$ bits, $<1$ image share the same bits which means that they are almost all disconnected, and it explains why standard contrastive approaches fail. Same trend is observed for non-contrastive approaches (\textit{e.g.} BYOL) with a degradation in performance even faster than SimCLR. Interestingly, encouraging a disentangled representation by imposing higher $\beta>1$ in $\beta$-VAE does not help. Only our $K_{VAE}$ Decoupled Uniformity loss obtains good scores, regardless of the number of bits.        
\section{Conclusion}
    In this work, we show that we can integrate prior information into CL to improve the final representation. In particular, we draw connections between kernel theory and CL to build our theoretical framework. We demonstrate tight bounds on downstream classification performance with weaker assumptions than previous works. Empirically, we show that generative models provide a good prior when augmentations are too weak or insufficient to remove easy-to-learn noisy features. We also show applications in medical imaging in both unsupervised and weakly supervised setting where our method outperforms all other models. Thanks to our theoretical framework, we hope that CL will benefit from the future progress in generative modelling and it will widen its field of application to challenging tasks, such as computer aided-diagnosis.

\section*{Acknowledgements}

This work was granted access to the HPC resources of IDRIS under the allocation 2023-AD011011854R2 made by GENCI. 
This work received funding from French National Research Agency for the project Big2Small (Chair in AI, ANR-19-CHIA-0010-01), the project RHU-PsyCARE (French government’s “Investissements d’Avenir” program, ANR-18-RHUS-0014), and European Union’s Horizon 2020 for the project R-LiNK (H2020-SC1-2017, 754907).

\bibliographystyle{icml2023}
\bibliography{biblio}

\newpage

\appendix
\onecolumn


\section{More empirical evidence}
    
    In this section, we provide additional empirical evidence to confirm several claims and arguments developed in the paper. 
    
    \subsection{Measuring kernel quality and empirical verification of our theory}
    \label{appendix:emp_verification}
    
    \begin{minipage}{0.45\textwidth}
        \centering
        \includegraphics[width=\linewidth]{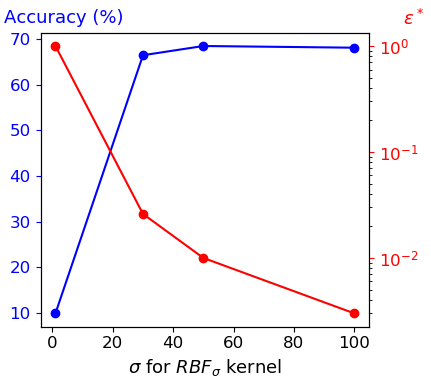}
        \captionof{figure}{Empirical verification of our theory. The optimal $\epsilon^*$ to add 100 edges between intra-class images in $\epsilon$-Kernel graph is inversely correlated with the downstream accuracy, as suggested by Theorem~\ref{th:final_bound_decoupled}. We use $k=20$ bits and an RBF kernel.}
         \label{fig:eps_vs_accuracy}
    \end{minipage}~\hspace{0.01\linewidth}~
    \begin{minipage}{0.45\textwidth}
        \centering
        \includegraphics[width=.9\linewidth]{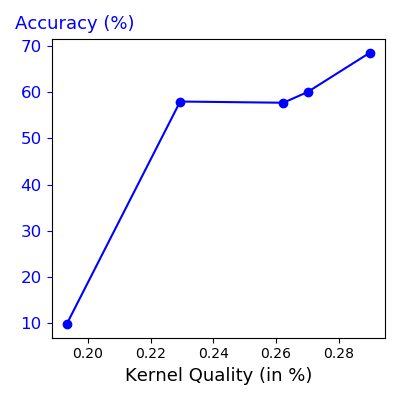}
        \captionof{figure}{How can we select \textit{a priori} a good kernel? Downstream accuracy on RandBits CIFAR-10 is highly correlated (Pearson's $r=0.90$) with kernel quality measured as fraction of 10 nearest neighbors of the same CIFAR-10 class (from test set) in the kernel graph.}
        \label{fig:kernel_quality_vs_acc}
    \end{minipage}

    We provide empirical evidence confirming our theory (Theorem~\ref{th:final_bound_decoupled} in particular) along with a new way to quantify kernel quality with respect to a downstream task for a kernel $K$. We perform experiments on RandBits dataset (based on CIFAR-10) with $k=20$ random bits (almost all points are disconnected in the augmentation graph) and SimCLR augmentations. For a given kernel $K_\sigma$ defined by $K_\sigma(\bar{x},\bar{x}')=RBF_\sigma(\mu(\bar{x}), \mu(\bar{x}'))$--where $\mu(\cdot)$ is the mean Gaussian distribution of $\bar{x}$ in VAE latent space trained on RandBits-- we train Kernel Decoupled Uniformity with $K_\sigma$ on RandBits. In Fig.~\ref{fig:eps_vs_accuracy}, we vary $\sigma$ and we report downstream accuracy (measured by linear evaluation) along with the optimal $\epsilon^*$ to add 100 intra-class edges in the $\epsilon$-Kernel graph obtained with $K_\sigma$. The lower $\epsilon^*$, the better the downstream accuracy, which is expected since the upper bound of supervised risk becomes tighter in Theorem~\ref{th:final_bound_decoupled}. It gives a first empirical confirmation that $\epsilon$ tightly bounds the supervised risk on downstream task. 
    
    \paragraph{A new way to quantify kernel quality.} Based on the concept of kernel graph, we measure the quality of a given kernel $K$ using the nearest-neighbors of each image (a vertex in kernel graph). More precisely, $K$ induces a distance $d_K$ ($d_K(a,b)=K(a,a)+K(b,b)-2K(a,b)$) that can be used to define nearest-neighbors in its kernel graph. We compute the fraction of these nearest neighbors that belong to the same class. In Fig.~\ref{fig:kernel_quality_vs_acc}, we plot the downstream accuracy vs kernel quality using 10-nearest neighbors for various kernel $K$. They are obtained by using latent space of a VAE trained for an increasing number of epochs (2, 50, 100, 150 and 1000) and by setting $K(\bar{x}, \bar{x}')=RBF_\sigma(\mu(\bar{x}), \mu(\bar{x}'))$ as before (with $\sigma=50$ fixed). It shows that this new measure of kernel quality is highly correlated with final downstream accuracy. Therefore, it can be used as a tool to compare \textit{a priori} (without training) different kernels. One limitation of this metric is that it requires access to labels on the downstream task. Future work would consist in finding unsupervised properties of the kernel graph that correlates well with downstream accuracy (e.g. sparsity, clustering coefficient, etc.).

     \subsection{Influence of temperature and batch size for Decoupled Uniformity}
        \label{appendix:batch_size_temp}
        InfoNCE is known to be sensitive to batch size and temperature to provide SOTA results. In our theoretical framework, we assumed that $f(x)\in \bbS^{d-1}$ but we can easily extend it to $f(x)\in \sqrt{t}\bbS^{d-1}$ where $t>0$ is a hyper-parameter. It corresponds to write $\cL_{unif}^{de}(f)=\bbE_{p(\bar{x})p(\bar{x}')}e^{-t||\mu_{\bar{x}}-\mu_{\bar{x}'}||^2}$. We show here that Decoupled Uniformity does not require very large batch size (as it is the case for InfoNCE-based frameworks such as SimCLR) and produce good representations for $t\in [1, 5]$. In our default setting, we use $t=2$ and batch size $n=256$.
    
        \begin{table}[h!]
        \centering
            \begin{tabular}{c c c c c c c}
                \toprule
                Datasets & $t=0.1$ & $t=0.5$ & $t=1$ & $t=2$ & $t=5$ & $t=10$\\
                \midrule
                CIFAR10 & 73.91 & 83.01 & 84.72 & \textbf{85.82} & 83.05 & 74.82\\
                CIFAR100 & 39.16 & 51.33 & 55.91 & \textbf{58.89} & 56.70 & 48.29 \\            
                \bottomrule
            \end{tabular}
            \caption{Linear evaluation accuracy (\%) after training for 400 epochs with batch size $n=256$ and varying temperature in Decoupled Uniformity loss with SimCLR augmentations. $t=2$ gives overall the best results, similarly to the uniformity loss in~\cite{wang_understanding_2020}}
        \end{table}
        
        \begin{table}[h!]
            \centering
            \begin{tabular}{c c c c c c}
                \toprule
                Datasets & Loss & $n=128$ & $n=512$ & $n=1024$ & $n=2048$\\
                \midrule
                \multirow{2}{*}{CIFAR10} & InfoNCE & 78.89 & 79.40 & 80.02 & 80.06 \\
                                         & Decoupled Unif & 82.67 & 82.12 & 82.74 & 82.33\\
                \midrule
                \multirow{2}{*}{CIFAR100} & InfoNCE & 49.53 & 53.46 & 54.45 & 55.32 \\
                                         & Decoupled Unif & 54.61 & 54.12 & 55.56 & 55.20 \\            
                \bottomrule
            \end{tabular}
            \caption{Linear evaluation accuracy (\%) after training for 200 epochs with a batch size $n$, ResNet18 backbone and latent dimension $d=128$. Decoupled Uniformity is less sensitive to batch size than InfoNCE thanks to its decoupling between positives and negatives, similarly to~\cite{yeh2021decoupled}.}
        \end{table}
        
    \subsection{Importance of regularization $\lambda$ in centroid estimation}
        \label{appendix:lambda_reg}
        Kernel Decoupled Uniformity introduces an additional hyper-parameter $\lambda$ for centroids estimation, which should be such that $\lambda = O\left(\frac{1}{\sqrt{n}}\right)$ where $n$ is the batch size to full-fill the hypothesis of Theorem~\ref{th:final_bound_decoupled}. We have cross-validated this hyper-parameter $\lambda$ on RandBits CIFAR-10 with $k=10$ bits and we show in Table~\ref{tab:cross_val_lambda} that $\lambda = \frac{0.01}{\sqrt{n}}$ yields the best results. We have fixed this value for all our experiments in this study.
        
        \begin{table}[h!]
            \centering
            \begin{tabular}{c c c}
            \toprule
            $\sqrt{256}\times \lambda$ & $\sigma=30$ & $\sigma=50$\\
            \midrule
            $0.001$ & 10.25 & 60.75\\
            $0.01$ & \textbf{67.21} & \textbf{68.42} \\
            $0.1$ & 59.09 & 58.13\\
            $1$ & 50.49 & 60.75\\
            \bottomrule
            \end{tabular}
            \caption{Importance of $\lambda$ in centroids estimation with Kernel Decoupled Uniformity. We report linear evaluation accuracy after training on RandBits-CIFAR10 (10 bits) with ResNet18 for 200 epochs using RBFKernel$(\sigma)$ and batch size $n=256$.}
            \label{tab:cross_val_lambda}
        \end{table}

    \subsection{Kernel choice}
    \label{sec:kernel_choice}

        \paragraph{ImageNet100 with BigBiGAN.} We cross-validate both RBF and Cosine kernel on top of BigBiGAN's encoder for Kernel Decoupled Uniformity. According to Table~\ref{tab:kernel_imagenet100}, we set $\sigma = 100$ with RBF for the experiments on ImageNet100. 

         \begin{table}[h!]
                \centering
                \begin{tabular}{c c c c c|c}
                    \toprule
                     Kernel & $\sigma=1$ & $\sigma=10$ & $\sigma=100$ & $\sigma=150$ & Cosine\\
                     \midrule
                     ResNet50 & 73.36 & 72.6 & \textbf{74.7} & 74.38 & 73.88 \\
                    \bottomrule
                \end{tabular}
            \caption{Linear evaluation accuracy(\%) after training Kernel Decouped Uniformity on ImageNet-100 for 200 epochs with BigBiGAN's representation as prior. We study RBF Kernel with bandwidth $\sigma$ or Cosine Kernel on top of BigBiGAN's encoder.}
            \label{tab:kernel_imagenet100}
            \end{table}
    
        \paragraph{RandBits experiment.} In our experiments on RandBits, we used RBF Kernel in Decoupled Uniformity but other kernels can be considered. Here, we have compared our approach with a cosine kernel on Randbits with $k=10$ and $k=20$ bits. There is no hyper-parameter to tune with cosine. From Table~\ref{tab:kernel_randbits}, we see that cosine gives comparable results for $k=10$ bits with RBF but it is not appropriate for $k=20$ bits.  
        
            \begin{table}[h!]
                \centering
                \begin{tabular}{c c c c c}
                    \toprule
                     Kernel & 10 bits &  20 bits\\
                     \midrule
                     RBFKernel($\sigma=1$) & $66.25_{\pm 0.17}$ & $9.91_{\pm 0.13}$ \\
                     RBFKernel($\sigma=30$) & $67.21_{\pm 0.29}$ & $66.46_{\pm 0.19}$ \\
                     RBFKernel($\sigma=50$) & $\mathbf{68.42_{\pm 0.51}}$ & $\mathbf{68.58_{\pm 0.17}}$\\
                     CosineKernel & $66.56_{\pm 0.45}$ & $9.68_{\pm 0.18}$ \\
                 \bottomrule
            \end{tabular}
            \caption{Linear evaluation accuracy after training on RandBits-CIFAR10 with ResNet18 for 200 epochs. RBF and Cosine kernels are evaluated.}
            \label{tab:kernel_randbits}
            \end{table}

        \paragraph{Weakly supervised learning.} In this case, we compared our approach with CCLK~\cite{tsai2022conditional}, also based on kernel. For this comparison, we use two kernels (RBF and Cosine) on all 3 benchmarking dataset (CUB200, ImageNet100 and UTZappos). We fixed $\sigma=20$, $\sigma=10$, $\sigma=100$ respectively for CUB200, ImageNet100 and UTZappos using RBF Kernel, cross-validated in $\{1, 10, 20, 50, 100\}$ using linear evaluation on downstream task. 

        \paragraph{Medical imaging.} For the experiments on CheXpert, we used an RBF Kernel on top of GloRIA's representation and we fixed $\sigma = 10$. For the pre-training on BHB using VAE representation as prior, we set $\sigma=100$.

    \subsection{Larger pre-trained generative model induces better prior}

        We argue that using larger datasets (\eg ImageNet 1K) for pre-training larger generative models will improve the prior on smaller-scale datasets and improve even more the final representations with our method. We have tested this hypothesis on CIFAR-10 and BigBiGAN as prior, compared to DCGAN pre-trained on CIFAR-10 and the other approaches without prior.
           
        \begin{table}[h!]
                \centering
                \begin{adjustbox}{max width=\textwidth}
                \begin{tabular}{c c c}
                    \toprule
                     Model & CIFAR-10  \\
                     \midrule
                     SimCLR~\cite{chen_simple_2020} & 81.75 \\
                     BYOL~\cite{grill2020bootstrap} &  81.97 \\
                     Decoupled Unif & 85.82 \\
                     $K_{DCGAN}$ Decoupled Unif & 85.85 \\ 
                     $K_{BigBiGAN}$ Decoupled Unif & \textbf{86.86} \\
                 \bottomrule
            \end{tabular}
            \end{adjustbox}
        \caption{We evaluate Kernel Decoupled Uniformity with BigBiGAN pre-trained on ImageNet as prior knowledge. We compare this approach with a shallow DCGAN pre-trained on CIFAR-10 as prior. We train ResNet18 on CIFAR10 for 400 epochs and we report linear evaluation accuracy. Pre-trained generative models on larger datasets improve the final representation. }
        \end{table}

    \subsection{Multi-view Contrastive Learning with Decoupled Uniformity}
    \label{appendix:multiview_exps}
        When the intra-class connectivity hypothesis is full-filled, we showed that Decoupled Uniformity loss can tightly bound the classification risk for well-aligned encoders (see Theorem~\ref{th:boundness_intra_class_hyp}). Under that hypothesis, we consider the standard empirical estimator of $\mu_{\bar{x}}\approx \sum_{v=1}^V f(x^{(v)})$ for $V$ views. Using all SimCLR augmentations, we empirically verify that increasing $V$ allows for: 1) a better estimate of $\mu_{\bar{x}}$ which implies a faster convergence and 2) better results on standard benchmarking vision datasets (CIFAR10, CIFAR100, STL10).
        We always use batch size $n=256$ for all approaches with ResNet18 backbone for CIFAR10, CIFAR100 and STL10. For STL-10, we use both labelled and unlabelled training data to train our encoder. We report the results in Table~\ref{tab:multiview_decoupled_unif}. 
            \begin{table}[h!]
                \centering
                \begin{adjustbox}{max width=\textwidth}
                \begin{tabular}{c c c c c c c}
                    \toprule
                     Model & \multicolumn{2}{c}{CIFAR-10} & \multicolumn{2}{c}{CIFAR-100} & \multicolumn{2}{c}{STL10} \\
                     & $e=200$ & $e=400$ & $e=200$ & $e=400$ & $e=200$ & $e=400$ \\
                     \midrule
                     SimCLR\cite{chen_simple_2020} & 79.4 & 81.75 & 48.89 & 53.02 & 76.99 & 79.02 \\ 
                     BYOL\cite{grill2020bootstrap} & 80.14 & 81.97 & 51.57 & 53.65 & 77.62 & 79.61 \\ 
                     Decoupled Unif (2 views) & 82.43 & \textbf{85.82} & 54.01 & 58.89 & 78.12 & 79.89 \\ 
                     Decoupled Unif (4 views) & 84.99 & 85.34 & 57.23 & 59.07 & 78.25 & \textbf{80.47} \\ 
                     Decoupled Unif (8 views) & \textbf{86.50} & 85.80 & \textbf{59.63} & \textbf{59.74} & \textbf{79.82} & 80.30 \\ 
                     \bottomrule
                \end{tabular}
                \end{adjustbox}
                \caption{A better approximation of centroids $\mu_{\bar{x}}$ (i.e. increasing number of views) when augmentation overlap hypothesis is (nearly) full-filled implies faster convergence. All models are pre-trained with batch size $n=256$. We use ResNet18 backbone for CIFAR10, CIFAR100, STL10. We report linear evaluation accuracy ($\%$) for a given number of epochs $e$. }
                \label{tab:multiview_decoupled_unif}
            \end{table}

    \subsection{Decoupled Uniformity optimizes alignment}
    \label{appendix:alignment_decoupled}
        \begin{figure}[h!]
            \centering
            \includegraphics[width=.6\linewidth]{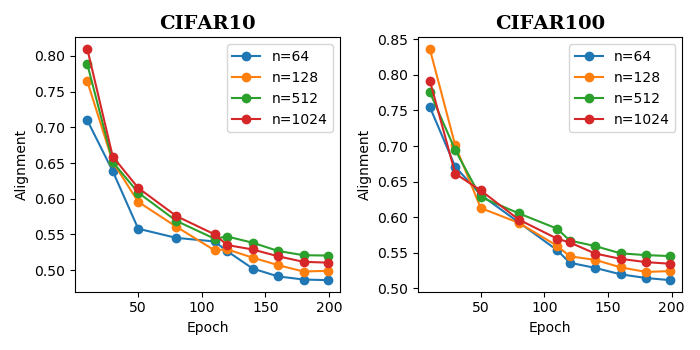}
            \caption{Alignment metric $\cL_{align}$ computed on the validation set during optimization of Decoupled Uniformity loss with various batch sizes $n$ and a fixed latent space dimension $d=128$. We use 100 positive samples per image to compute $\cL_{align}$. }
        \end{figure}
        We empirically show here that Decoupled Uniformity optimizes alignment, even in the regime when the batch size $n> d+1$, where $d$ is the representation space dimension. We use CIFAR-10 and CIFAR-100 datasets and we optimize Decoupled Uniformity (without kernel) with all SimCLR augmentations with $d=128$ and we vary the batch size $n$. We report the alignment metric defined in~\cite{wang_understanding_2020} as $\cL_{align}=\bbE_{\cA(x|\bar{x})\cA(x'|\bar{x})p(\bar{x})} ||f(x)-f(x')||^2$.In Fig.~\ref{appendix:alignment_decoupled}, we notice that $\cL_{align}$ is minimized as we optimize $\cL_{unif}^{de}$ and we reach $\cL_{align}\approx 0.50$ after 200 epochs, which is approximately the same result as in~\cite{wang_understanding_2020} by directly optimizing alignment and their uniformity term. In our case, alignment is implict and we do not need to add it to our loss (avoiding the tuning of an additional hyper-parameter).

\section{Geometrical considerations about Decoupled Uniformity}
    \label{appendix:geometry_decoupled_unif}

    \subsection{Asymptotical optimality}

    \begin{theorem}(Optimality of Decoupled Uniformity)
        Given $n$ points $(\bar{x}_i)_{i\in [1..n]}$ such that $n\le d+1$, any optimal encoder $f^*$ minimizing $\hat{\cL}_{unif}^{de}$ achieves a representation s.t.:\\

        \begin{enumerate}
            \item (Perfect uniformity) All centroids $(\mu_{\bar{x}_i})_{i\in [1..n]}$ make a regular simplex on the hyper-sphere $\bbS^{d-1}$\\ 
            \item (Perfect alignment) $f^*$ is perfectly aligned, i.e $\forall x, x'\sim \cA(\cdot|\bar{x}_i), f^*(x)=f^*(x')$ for all $i\in [1..n]$.
        \end{enumerate}
        Proof in Appendix~\ref{appendix:optimality_decoupled}. 
        \label{th:optimal_decoupled_unif}
    \end{theorem}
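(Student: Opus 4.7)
The plan is to derive a sharp lower bound on $\hat{\cL}_{unif}^{de}(f)$ through two successive applications of Jensen's inequality, and then to read off the geometric structure of the minimizer from the equality conditions.

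First, I would apply Jensen to the convex function $t\mapsto e^{-t}$ to obtain
\[
\hat{\cL}_{unif}^{de}(f) \;\ge\; -\frac{1}{n(n-1)}\sum_{i\ne j}\|\mu_{\bar{x}_i}-\mu_{\bar{x}_j}\|^2,
\]
with equality iff all pairwise squared distances between centroids coincide. Next, I would invoke the elementary identity
$\sum_{i,j}\|\mu_i-\mu_j\|^2 = 2n\sum_i\|\mu_i\|^2 - 2\bigl\|\sum_i\mu_i\bigr\|^2$
together with the bound $\|\mu_{\bar{x}_i}\|\le 1$ (which the paper already notes as a Jensen consequence of $f(\cX)\subset\bbS^{d-1}$) to conclude $\sum_{i\ne j}\|\mu_{\bar{x}_i}-\mu_{\bar{x}_j}\|^2\le 2n^2$, and therefore $\hat{\cL}_{unif}^{de}(f)\ge -\tfrac{2n}{n-1}$. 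Equality in this second chain forces both $\|\mu_{\bar{x}_i}\|=1$ for every $i$ and $\sum_i\mu_{\bar{x}_i}=0$; the norm condition, via the sphere Jensen equality, is exactly the statement that $f$ is constant on each $\supp\cA(\cdot|\bar{x}_i)$, i.e.\ perfect alignment.

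Combining the three simultaneous equality conditions (constant pairwise squared distance, unit norm, vanishing sum) pins down the Gram matrix: $\langle\mu_{\bar{x}_i},\mu_{\bar{x}_j}\rangle = -1/(n-1)$ for all $i\ne j$, which characterizes a regular $(n-1)$-simplex inscribed in $\bbS^{d-1}$ up to a global orthogonal transformation; this embedding exists precisely because $n\le d+1$. To show the lower bound is attained, I would exhibit an explicit $f^*$: fix any regular simplex $\{v_1,\dots,v_n\}\subset\bbS^{d-1}$ and set $f^*\equiv v_i$ on $\supp\cA(\cdot|\bar{x}_i)$; this $f^*$ is perfectly aligned and yields centroids $\mu_{\bar{x}_i}=v_i$, so it saturates both Jensen inequalities.

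The main obstacle I anticipate is verifying that the two Jensen equality conditions can be enforced \emph{simultaneously} by a single encoder, and arguing realizability within $\cF$. Simultaneity is clean once one notes that perfect alignment reduces the problem to placing $n$ points on $\bbS^{d-1}$, after which the regular-simplex configuration fulfils the remaining conditions; the hypothesis $n\le d+1$ enters here, since otherwise no $n$ points on $\bbS^{d-1}$ can have all pairwise distances equal. Realizability of the construction by $f^*\in\cF$ assigning distinct vertices on disjoint augmentation supports is straightforward under the implicit assumption that the supports $\supp\cA(\cdot|\bar{x}_i)$ are pairwise disjoint and that $\cF$ is expressive enough to separate them, which is the standard setting for this type of instance-discrimination analysis.
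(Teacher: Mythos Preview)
Your proposal is correct and follows essentially the same route as the paper: Jensen on $t\mapsto e^{-t}$ to pass from the log-sum-exp to the average pairwise squared distance, then the identity $\sum_{i,j}\|\mu_i-\mu_j\|^2=2n\sum_i\|\mu_i\|^2-2\|\sum_i\mu_i\|^2$ together with $\|\mu_i\|\le 1$ to bound that average, and finally reading off the regular-simplex and alignment conditions from the equality cases. The only addition in your write-up is the explicit construction of an attaining $f^*$ and the accompanying remark on disjoint augmentation supports and expressivity of $\cF$; the paper leaves this realizability implicit.
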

    Contrary to ~\cite{wang_understanding_2020},
    we are able to derive a geometrical characterization of the optimal representation for a finite batch size $n<\infty$ (\ie number of negatives), corresponding to a real-life scenario. Importantly, our uniformity term is distinct from the definition in~\cite{wang_understanding_2020} which is only defined for $n\rightarrow \infty$. 
    

    The assumption $n \le d+1$ is crucial to have the existence of a regular simplex on the hypersphere $\bbS^{d-1}$. In practice, this condition is not always full-filled (e.g SimCLR~\cite{chen_simple_2020} with $d=128$ and $n=4096$). Characterizing the optimal solution of $\cL_{unif}^{de}$ for any $n>d+1$ is still an open problem \cite{borodachov2019discrete} but theoretical guarantees can be obtained in the limit case $n\rightarrow \infty$.
    
     \begin{theorem}(Asymptotical Optimality)
        When the number of samples is infinite $n\rightarrow \infty$, then for any perfectly aligned encoder $f\in \cF$ that minimizes $\mathcal{L}_{unif}^{de}$, the centroids $\mu_{\bar{x}}$ for $\bar{x}\sim p(\bar{x})$ are uniformly distributed on the hypersphere $\bbS^{d-1}$. Proof in Appendix~\ref{appendix:optimality_decoupled}. 
    \end{theorem}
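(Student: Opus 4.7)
The plan is to reduce the problem to a classical potential-theoretic question on the sphere and then invoke the strict positive-definiteness of the Gaussian kernel.

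First, perfect alignment implies that for every $\bar x$, $f$ takes a single value on $\supp\cA(\cdot|\bar x)$, so $\mu_{\bar x}=\bbE_{\cA(x|\bar x)}f(x)$ is itself a unit vector and the loss depends on $f$ only through the push-forward probability measure $\sigma = \mu_{\#}\,p$ on $\bbS^{d-1}$. In the limit $n\to\infty$ the empirical estimator converges to its population counterpart (as stated earlier in the paper), and the loss becomes
\begin{equation*}
    \cL_{unif}^{de}(f) \;=\; \log\int_{\bbS^{d-1}}\!\!\int_{\bbS^{d-1}} e^{-\|u-v\|^2}\,d\sigma(u)\,d\sigma(v).
\end{equation*}
Because $\log$ is strictly increasing, minimizing $\cL_{unif}^{de}$ over perfectly aligned encoders amounts to minimizing the Gaussian energy $E(\sigma)=\iint e^{-\|u-v\|^2}\,d\sigma\,d\sigma$ over probability measures $\sigma$ on $\bbS^{d-1}$.

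Second, I would show that the uniform (normalized Haar) measure $\sigma_*$ on $\bbS^{d-1}$ is the unique minimizer of $E$. Using $e^{-\|u-v\|^2}=e^{-2}e^{2\langle u,v\rangle}$ and the Funk--Hecke / Gegenbauer expansion, one has a convergent expansion
\begin{equation*}
    e^{-\|u-v\|^2} \;=\; \sum_{k=0}^{\infty} c_k \, G_k^{(d)}(\langle u,v\rangle)
\end{equation*}
with all $c_k>0$, where $G_k^{(d)}$ is the $k$-th zonal (Gegenbauer) polynomial. Integrating against $d\sigma\otimes d\sigma$ and using that $G_k^{(d)}(\langle u,v\rangle)$ is the reproducing kernel of the space of spherical harmonics of degree $k$ gives
\begin{equation*}
    E(\sigma) \;=\; \sum_{k=0}^{\infty} c_k \sum_{m} |\hat\sigma(k,m)|^2,
\end{equation*}
where $\hat\sigma(k,m)$ are the spherical-harmonic coefficients of $\sigma$. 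The $k=0$ term equals $c_0$ (it is fixed by $\sigma(\bbS^{d-1})=1$) and every higher term is non-negative; all of them vanish simultaneously if and only if $\sigma=\sigma_*$. Hence $\sigma_*$ is the unique minimizer.

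Finally, combining the two steps: any perfectly aligned minimizer $f$ of $\cL_{unif}^{de}$ has push-forward $\sigma=\mu_{\#}p$ minimizing $E$, so $\sigma=\sigma_*$, i.e.\ the centroids $\mu_{\bar x}$ are distributed uniformly on $\bbS^{d-1}$. The main obstacle is step two — the positivity of all Gegenbauer coefficients $c_k$ of the Gaussian kernel on the sphere, equivalently the strict conditional negative-definiteness of $\|u-v\|^2$ restricted to $\bbS^{d-1}$; this is classical but needs to be invoked carefully to get strict uniqueness. A mild implicit hypothesis is that the encoder class $\cF$ (together with the data distribution $p$) is rich enough to realize $\sigma_*$ as a push-forward, which is consistent with the universal approximation setting used throughout the paper.
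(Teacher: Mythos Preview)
Your proposal is correct and follows the same reduction as the paper: perfect alignment forces $\mu_{\bar x}=f(\bar x)\in\bbS^{d-1}$, so minimizing $\cL_{unif}^{de}$ becomes minimizing the Gaussian energy $\iint e^{-\|u-v\|^2}\,d\sigma(u)\,d\sigma(v)$ over Borel probability measures on the sphere, whose unique minimizer is the uniform measure. The only difference is that the paper's proof stops after this reduction and invokes Proposition~1 of \cite{wang_understanding_2020} as a black box, whereas you unpack that proposition via the Funk--Hecke/Gegenbauer expansion and strict positivity of the coefficients $c_k$; this is in fact exactly how Wang and Isola prove their result, so your argument is not a genuinely different route but rather an inlined version of the cited lemma.
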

    
    Empirically, we observe that minimizers $f$ of $\hat{\cL}_{unif}^{de}$ remain well-aligned when $n>d+1$ on real-world vision datasets (see Appendix~\ref{appendix:alignment_decoupled}). Decoupled uniformity thus optimizes two properties that are nicely correlated with downstream classification performance~\cite{wang_understanding_2020}--that is alignment and uniformity between centroids. However, as noted in~\cite{wang2022chaos, saunshi2022understanding}, optimizing these two properties is necessary but not sufficient to guarantee a good classification accuracy. In fact, the accuracy can be arbitrarily bad even for perfectly aligned and uniform encoders~\cite{saunshi2022understanding}. 

    \subsection{A metric learning point-of-view}

    In this section, we provide a geometrical understanding of Decoupled Uniformity loss from a metric learning point of view. In particular, we consider the Log-Sum-Exp (LSE) operator often used in CL as an approximation of the maximum.
    
    We consider the finite-samples case with $n$ original samples $(\bar{x}_i)_{i\in[1..n]}\overset{iid}{\sim} p(\bar{x})$ and $V$ views $(x_i^{(v)})_{v\in [1..V]}\overset{iid}{\sim}\cA(\cdot|\bar{x}_i)$ for each sample $\bar{x}_i$. We make an abuse of notations and set $\mu_i=\frac{1}{V}\sum_{v=1}^V f(x_i^{(v)})$. Then we have: 
    \begin{equation}
    \begin{aligned}
    \hat{\cL}_{unif}^{de} &= \log \frac{1}{n(n-1)}\sum_{i\neq j} \exp\left(-||\mu_i - \mu_j||^2\right)\\ 
    &= \log \frac{1}{n(n-1)}\sum_{i\neq j} \exp\left(-s_i^+ - s_j^+ + 2 s_{ij}^-\right)
    \label{eq:proofLdunif}
    \end{aligned}
    \end{equation}
    
    where $s_i^+ = ||\mu_i||^2 = \frac{1}{V^2} \sum_{v, v'} s(x_i^{(v)}, x_i^{(v')})$,  $s_{ij}^-=\frac{1}{V^2}\sum_{v, v'} s(x_i^{(v)}, x_j^{(v')})$ and $s(\cdot, \cdot)=\langle f(\cdot), f(\cdot)\rangle_2$ is viewed as a similarity measure.
    
    From a metric learning point-of-view, we shall see that minimizing Eq.~\ref{eq:proofLdunif} is (almost) equivalent to looking for an encoder $f$ such that the sum of similarities of all views from the same anchor ($s_i^+$ and $s_j^+)$ are higher than the sum of similarities between views from different instances ($s_{ij}^-$):
    \begin{equation}
    s_i^+ + s_j^+ >  2s_{ij}^- + \epsilon \quad \forall i\neq j
    \end{equation}
    where $\epsilon$ is a margin that we suppose "very big" (see hereafter). Indeed, this inequality is equivalent to $-\epsilon > 2s_{ij}^- - s_i^+ - s_j^+$ for all $i\neq j$, which can be written as :
    \begin{equation*}
        \argmin_f \max(-\epsilon, \{2s_{ij}^- - s_i^+-s_j^+\}_{i,j\in [1..n], j\neq i})
    \end{equation*}

    This can be transformed into an optimization problem using the LSE (log-sum-exp) approximation of the $\max$ operator:
    
    \begin{equation*}
    \argmin_f \log\left(\exp(-\epsilon) + \sum_{i\neq j} \exp{(-s_i^+ - s_j^+ + 2s_{ij}^-)}\right)
    \end{equation*}
    
    Thus, if we use an infinite margin ($\lim_{\epsilon \rightarrow \infty}$) we retrieve exactly our optimization problem with Decoupled Uniformity in Eq.\ref{eq:proofLdunif} (up to an additional constant depending on $n$). 

 \section{Additional general guarantees on downstream classification}
    
    \subsection{Optimal configuration of the supervised loss}
    \label{appendix:optimal_config_sup_loss}
    In order to derive guarantees on a downstream classification task $\cD$ when optimizing our unsupervised decoupled uniformity loss, we define a supervised loss that measures the risk on a downstream supervised task. We prove in the next section that the minimizers of this loss have the same geometry as the ones minimizing cross-entropy and SupCon~\cite{khosla_supervised_2020}: a regular simplex on the hyper-sphere~\cite{graf2021dissecting}. More formally, we have:
    
    \begin{lemma}
        Let a downstream task $\cD$ with $C$ classes. We assume that $C \le d+1$ (\textit{i.e.,} a big enough representation space), that all classes are balanced and the realizability of an encoder $f^*=\argmin_{f\in \cF}\cL_{sup}(f)$ with $\mathcal{L}_{sup}(f) = \log \bbE_{y, y'\sim p(y)p(y')}e^{-||\mu_y - \mu_{y'}||^2}$, and $\mu_y = \bbE_{p(\bar{x}|y)} \mu_{\bar{x}}$. Then the optimal centroids $(\mu_y^*)_{y\in \cY}$ associated to $f^*$ make a regular simplex on the hypersphere $\bbS^{d-1}$ and they are perfectly linearly separable, i.e $\min_{(w_y)_{y\in \cY}\in \bbR^d}\bbE_{(\bar{x}, y)\sim \cD}\mathbb{1}(w_y\cdot \mu_y^* < 0)=0$. Proof in Appendix~\ref{appendix:optimal_config_sup_loss}
    \end{lemma}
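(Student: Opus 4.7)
The plan is to reduce the minimization of $\cL_{sup}$ to a finite-dimensional geometric problem over the closed unit ball of $\bbR^d$, then use two Jensen inequalities to pin down the optimal centroid configuration as a regular simplex.

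First, I would observe that $f(x)\in \bbS^{d-1}$ forces $||\mu_{\bar{x}}||\le 1$ by Jensen applied to the convex map $v\mapsto ||v||$, and iterating the same inequality yields $||\mu_y||\le 1$ for every $y\in \cY$. Under the balanced-class assumption $p(y)=1/C$, separating diagonal from off-diagonal pairs gives
\begin{equation*}
\cL_{sup}(f) = \log\!\left(\tfrac{1}{C} + \tfrac{1}{C^2}\sum_{y\neq y'} e^{-||\mu_y-\mu_{y'}||^2}\right).
\end{equation*}
Since $\log$ is strictly increasing, minimizing $\cL_{sup}$ reduces, via the pushforward $f\mapsto (\mu_y)_{y\in \cY}$, to minimizing $S(\mu) := \sum_{y\neq y'} e^{-||\mu_y-\mu_{y'}||^2}$ over tuples $(\mu_y)_{y\in \cY}$ of vectors in $\bbR^d$ with $||\mu_y||\le 1$.

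Second, convexity of $t\mapsto e^{-t}$ together with Jensen's inequality yields
\begin{equation*}
S(\mu) \ge C(C-1)\exp\!\left(-\tfrac{1}{C(C-1)}\sum_{y\neq y'}||\mu_y-\mu_{y'}||^2\right),
\end{equation*}
with equality iff the squared distances $||\mu_y-\mu_{y'}||^2$ are constant over $y\neq y'$. Using the identity $\sum_{y\neq y'}||\mu_y-\mu_{y'}||^2 = 2C\sum_y ||\mu_y||^2 - 2\,||\sum_y \mu_y||^2$ together with $||\mu_y||\le 1$ bounds this sum by $2C^2$, attained iff $||\mu_y||=1$ for all $y$ and $\sum_y \mu_y = 0$. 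Combining these, $S(\mu) \ge C(C-1)\,e^{-2C/(C-1)}$, with minimizers characterized by (i) unit norm, (ii) zero sum, and (iii) equality of all off-diagonal squared distances.

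Third, conditions (i)--(iii) force $||\mu_y-\mu_{y'}||^2 = 2C/(C-1)$ and hence $\langle \mu_y,\mu_{y'}\rangle = -1/(C-1)$ for every $y\neq y'$, which is exactly the regular simplex on $\bbS^{d-1}$. The assumption $C\le d+1$ guarantees that such a configuration exists in $\bbR^d$ (the standard $(C-1)$-simplex), and realizability of $f^*$ ensures the lower bound on $S$ is attained by the associated centroids $(\mu_y^*)_{y\in \cY}$. For the linear-separability claim, setting $w_y = \mu_y^*$ gives $w_y\cdot \mu_y^* = 1 > 0$, so $\mathbb{1}(w_y\cdot \mu_y^*<0) = 0$ for every $y$; moreover $w_y\cdot \mu_{y'}^* = -1/(C-1) < 0$ for $y'\neq y$, so the induced linear classifier correctly discriminates all centroids.

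The main obstacle is handling the two Jensen steps jointly: the outer one (over the expectation of exponentials across class pairs) and the inner one (convexity of the norm, inherited from the sphere constraint on $f$) must become tight simultaneously at $f^*$. Verifying this simultaneous tightness, and checking that conditions (i)--(iii) uniquely determine the regular simplex up to orthogonal rotation, is the crux of the argument; the ambient-dimension hypothesis $C\le d+1$ enters precisely there to guarantee realizability in $\bbR^d$.
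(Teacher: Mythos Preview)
Your proposal is correct and follows essentially the same route as the paper: bound $\|\mu_y\|\le 1$ via Jensen, apply Jensen to $t\mapsto e^{-t}$ over the off-diagonal pairs, bound $\sum_{y\neq y'}\|\mu_y-\mu_{y'}\|^2\le 2C^2$ via the identity you wrote, and read off the equality conditions (unit norm, zero sum, equal pairwise distances) as the regular simplex. The only differences are cosmetic: you separate the diagonal term explicitly and you spell out the linear-separability check by taking $w_y=\mu_y^*$, which the paper leaves implicit.
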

    
    This property notably implies that we can realize 100\% accuracy at optima with linear evaluation (taking the linear classifier $g(\bar{x})=W^*f^*(\bar{x})$ with $W^*=(\mu_y^*)_{y\in \cY}\in \bbR^{C\times d}$). 
    
    \subsection{General guarantees of Decoupled Uniformity}
    \label{appendix:general_guarantees_decoupled}
    In its most general formulation, we tightly bound the previous supervised loss by Decoupled Uniformity loss $\cL_{unif}^{de}$  depending on a variance term of the centroids $\mu_{\bar{x}}$ conditionally to the labels:
    
    \begin{theorem} (Guarantees for a given downstream task)
        For any $f\in \cF$ and augmentation $\cA$ we have:
        \begin{equation}
            \cL_{unif}^{de}(f) \le \cL_{sup}(f) \le 2\sum_{j=1}^d\var(\mu_{\bar{x}}^j |y) + \cL_{unif}^{de}(f) \le 4\mathbb{E}_{p(\bar{x}|y)p(\bar{x}'|y)}||\mu_{\bar{x}}-\mu_{\bar{x}'}|| + \cL_{unif}^{de}(f)
        \end{equation}
        where $\var(\mu_{\bar{x}}^j|y)=\bbE_{p(\bar{x}|y)}(\mu_{\bar{x}}^j - \bbE_{p(\bar{x}'|y)}\mu_{\bar{x}'}^j)^2$, $y=\argmax_{y'\in \cY} \var(\mu_{\bar{x}}^j|y')$ and $\mu^j_{\bar{x}}$ is the $j$-th component of $\mu_{\bar{x}}=\bbE_{\cA(x|\bar{x})}f(x)$. Proof in the next section.
        \label{th:downstream_gen}
    \end{theorem}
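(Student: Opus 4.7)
The plan splits the chain into three pieces: the two upper bounds fall out of a single conditional-Jensen argument, while the lower bound requires the RKHS structure of the Gaussian kernel.

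For the middle inequality $\cL_{sup}(f)\le 2\sum_j\var(\mu^j_{\bar{x}}|y)+\cL_{unif}^{de}(f)$, I would condition on the labels. Writing $\bbE_{\bar{x},\bar{x}'}=\bbE_{y,y'}\bbE_{\bar{x}|y,\bar{x}'|y'}$ and applying Jensen's inequality to the convex function $t\mapsto e^{-t}$ inside the inner expectation gives
\begin{equation*}
\bbE_{\bar{x}|y,\bar{x}'|y'}e^{-\|\mu_{\bar{x}}-\mu_{\bar{x}'}\|^2}\ge\exp\!\bigl(-\bbE_{\bar{x}|y,\bar{x}'|y'}\|\mu_{\bar{x}}-\mu_{\bar{x}'}\|^2\bigr).
\end{equation*}
Expanding the quadratic with conditional independence of $\bar{x},\bar{x}'$ and $\bbE[\mu_{\bar{x}}|y]=\mu_y$ gives $\bbE\|\mu_{\bar{x}}-\mu_{\bar{x}'}\|^2=\|\mu_y-\mu_{y'}\|^2+\sum_j\var(\mu^j_{\bar{x}}|y)+\sum_j\var(\mu^j_{\bar{x}'}|y')$. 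Bounding each class-variance sum coordinatewise by $\sum_j\max_{y^*}\var(\mu^j|y^*)$, exponentiating, integrating over $(y,y')$, and taking the logarithm rearranges into the claim. The last inequality of the chain then follows from the identity $\sum_j\var(\mu^j_{\bar{x}}|y)=\tfrac12\bbE_{\bar{x},\bar{x}'|y}\|\mu_{\bar{x}}-\mu_{\bar{x}'}\|^2$ combined with the elementary bound $\|u\|^2\le 2\|u\|$, which is valid here because $f:\cX\to\bbS^{d-1}$ forces $\|\mu_{\bar{x}}\|\le 1$ and hence $\|\mu_{\bar{x}}-\mu_{\bar{x}'}\|\le 2$.

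The hardest piece is the lower bound $\cL_{unif}^{de}(f)\le\cL_{sup}(f)$. I would exploit the positive-definiteness of the Gaussian kernel: letting $\phi$ be its canonical RKHS feature map, so $e^{-\|u-v\|^2}=\langle\phi(u),\phi(v)\rangle_{\cH_K}$, both losses become kernel-mean-embedding norms, $\bbE_{\bar{x},\bar{x}'}e^{-\|\mu_{\bar{x}}-\mu_{\bar{x}'}\|^2}=\|\bbE_{\bar{x}}\phi(\mu_{\bar{x}})\|^2_{\cH_K}$ and $\bbE_{y,y'}e^{-\|\mu_y-\mu_{y'}\|^2}=\|\bbE_y\phi(\mu_y)\|^2_{\cH_K}$. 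The claim thus reduces to showing that the Rao-Blackwellization $\mu_{\bar{x}}\rightsquigarrow\mu_Y=\bbE[\mu_{\bar{x}}|Y]$ enlarges the mean-embedding norm. Using Bochner's theorem to write $e^{-\|u\|^2}\propto\int e^{i\omega\cdot u}e^{-\|\omega\|^2/4}d\omega$ reduces the claim to $\int e^{-\|\omega\|^2/4}|\varphi_{\mu_{\bar{x}}}(\omega)|^2 d\omega\le\int e^{-\|\omega\|^2/4}|\varphi_{\mu_Y}(\omega)|^2 d\omega$, where $\varphi$ denotes characteristic functions.

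I expect this last step to be the genuine technical obstacle. The pointwise inequality $|\varphi_{\mu_{\bar{x}}}(\omega)|\le|\varphi_{\mu_Y}(\omega)|$ can actually fail at particular frequencies (within-class noise can rotate phases in destructive ways that exceed the norm reduction from Jensen), so the proof cannot conclude frequency by frequency; it must genuinely use that the Gaussian weight $e^{-\|\omega\|^2/4}$ suppresses high-frequency contributions together with the conditional-expectation structure. The cleanest route I see is the equivalent $L^2$-reformulation $\bbE e^{-\|X-X'\|^2}=\pi^{d/2}\|p_{X+\eta}\|_2^2$ with $\eta\sim\mathcal{N}(0,I/4)$ independent of $X$, which recasts the desired inequality as monotonicity of the density $L^2$-norm under Gaussian-regularized Rao-Blackwellization; this reduces the problem to a classical Rényi-2-entropy comparison between $X+\eta$ and $\tilde X+\eta$ for $\tilde X=\bbE[X|Y]$.
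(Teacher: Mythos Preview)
Your treatment of the two upper bounds is correct and matches the paper's argument almost verbatim: condition on $(y,y')$, apply Jensen to the convex map $t\mapsto e^{-t}$, expand $\bbE_{\bar x|y,\bar x'|y'}\|\mu_{\bar x}-\mu_{\bar x'}\|^{2}=\|\mu_y-\mu_{y'}\|^{2}+\sum_j\var(\mu_{\bar x}^{j}\mid y)+\sum_j\var(\mu_{\bar x'}^{j}\mid y')$, and bound the variance terms uniformly in $y$. For the last inequality you use the identity $\sum_j\var(\mu^{j}\mid y)=\tfrac12\bbE_{\bar x,\bar x'\mid y}\|\mu_{\bar x}-\mu_{\bar x'}\|^{2}$ together with $\|u\|^{2}\le 2\|u\|$ on the ball of radius~$2$; the paper instead writes $\sum_j\var=\bbE\|\mu_{\bar x}\|^{2}-\|\bbE\mu_{\bar x}\|^{2}$, factors the difference of squares, and uses the reverse triangle inequality. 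Both routes are valid; yours is a touch cleaner and actually yields a constant $2$ rather than $4$.

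For the lower bound the two approaches diverge sharply. The paper does \emph{not} invoke any RKHS structure: it claims the chain
\[
\bbE_{\bar x,\bar x'}e^{-\|\mu_{\bar x}-\mu_{\bar x'}\|^{2}}
\;\le\;
\bbE_{y,y'}\exp\!\Bigl(-\bbE_{\bar x|y,\bar x'|y'}\|\mu_{\bar x}-\mu_{\bar x'}\|^{2}\Bigr)
\;\le\;
\bbE_{y,y'}e^{-\|\mu_y-\mu_{y'}\|^{2}},
\]
citing ``Jensen for the convex function $u\mapsto e^{-u}$'' in the first step and Jensen for $\|\cdot\|^{2}$ in the second. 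The second step is fine, but the first step has the inequality reversed: convexity of $e^{-u}$ gives $\bbE[e^{-Z}]\ge e^{-\bbE Z}$, not $\le$. So the paper's short argument for the lower bound does not go through as written. Your instinct that this inequality is the genuinely delicate one is therefore well founded, and you are \emph{not} missing an easy two-line proof.

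That said, your own proposal for this piece is incomplete. You correctly recast both sides as squared RKHS mean-embedding norms for the Gaussian kernel and reduce, via Bochner, to comparing $\int|\varphi_{\mu_{\bar x}}(\omega)|^{2}e^{-\|\omega\|^{2}/4}d\omega$ with $\int|\varphi_{\mu_Y}(\omega)|^{2}e^{-\|\omega\|^{2}/4}d\omega$, and you rightly note that the pointwise comparison of characteristic functions can fail. But the final step you appeal to --- monotonicity of the R\'enyi-$2$ entropy of $X+\eta$ under the Rao--Blackwell map $X\mapsto\bbE[X\mid Y]$ --- is itself the nontrivial statement, and R\'enyi entropies of order $\alpha\neq 1$ do not in general satisfy data-processing inequalities. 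You have reformulated the difficulty rather than resolved it; a complete argument would need to exploit the specific structure of the conditional-expectation map (and the Gaussian smoothing) beyond what you have written.
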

    
    Intuitively, it means that we will achieve good accuracy if all centroids $(\mu_{\bar{x}})_{\bar{x}\in \bar{\cX}}$ for samples $\bar{x}\in \bar{\cX}$ in the same class are not too far. This theorem is very general since we do not require the intra-class connectivity assumption on $\cA$; so any $\cA \subset \cA^*$ can be used.

\section{Experimental details}
    \label{appendix:exp_details}
    The code is accessible at \href{https://github.com/Duplums/contrastive-decoupled-uniformity}{this https URL}. We provide a detailed pseudo-code of our algorithm as well as all experimental details to reproduce the experiments ran in the manuscript.
    
    \subsection{Pseudo-code}
    
    \begin{algorithm}
    \caption{Pseudo-code for computing $\mathcal{\hat{L}}_{unif}^{de}$}
    \begin{algorithmic}[1]
    \REQUIRE Batch of images $(\bar{x}_1, ...,\bar{x}_n)\in \bar{\cX}$, augmentation distribution $\cA$, temperature $t$, regularization $\lambda$ for centroid estimation, kernel $K$ 
    \STATE $K_n \gets (K(\bar{x}_i, \bar{x}_j))_{i,j\in [1..n]}$ \COMMENT{Compute the kernel matrix}
    \STATE $\mathbf{\alpha} \gets (K_n + n\lambda\mathbf{I}_n)^{-1}K_n$ \COMMENT{Compute weights for centroid estimation}
    \STATE $x_i^{(1)},...,x_i^{(V)}\overset{iid}{\sim} \cA(\cdot |\bar{x}_i)$ \COMMENT{Sample $V$ views per image}
    \STATE $F \gets (\frac{1}{V}\sum_{v=1}^V f(x_i^{(v)}))_{i\in [1..n]}$ \COMMENT{Compute the averaged images representation}
    \STATE $\hat{\mu} \gets \alpha F$ \COMMENT{Centroid estimation}
    \STATE $\hat{\cL}_{unif}^{de} \gets \log \frac{1}{n(n-1)}\sum_{i\neq j} \exp(-t||\hat{\mu}_i - \hat{\mu}_j||^2)$ \COMMENT{Kernel Decoupled Uniformity loss }
    \OUTPUT $\hat{\cL}_{unif}^{de}$
    \end{algorithmic}
    \end{algorithm}
    
    \subsection{Implementation in PyTorch}
    
    We provide a PyTorch implementation of previous pseudo-code in Algorithm~\ref{algo:pytorch_implem}. It is generalizable to any number of views and any kernel.
    
    \begin{algorithm}
    \caption{PyTorch implementation of $\mathcal{\hat{L}}_{unif}^{de}$ with kernel}
    \begin{lstlisting}[language=Python]
    # loader: generator of images 
    # n: batch size
    # n_views: number of views
    # d: latent space dimension
    # f: encoder (with projection head)
    # x: Tensor of shape [n, *]
    # aug: augmentation module generating views
    # K: kernel defined on image space
    # lamb: hyper-parameter to estimate centroids
    for x in loader:
      alphas = (K(x, x) + n*lamb*torch.eye(n)).inverse() @ K(x, x)
      x = aug(x, n_views) # shape=[n*n_views, *]
      z = f(x).view([n, n_views, d]) # shape=[n, n_views, d]
      mu = alphas.detach() @ z.mean(dim=1) # shape=[n, d]
      loss = L(mu)
      loss.backward()
      
    def L(mu, t=2):
      return torch.pdist(mu, p=2).pow(2).mul(-t).exp().mean().log()
    \end{lstlisting}
    \label{algo:pytorch_implem}
    \end{algorithm}

    \subsection{Datasets}
    \label{sec:datasets_appendix}
    
    \paragraph{CIFAR~\cite{krizhevsky2009learning}} We use the original training/test split with 50000 and 10000 images respectively of size $32\times 32$. 
    
    \paragraph{STL-10~\cite{coates2011analysis}} In unsupervised pre-training, we use all labelled+unlabelled images (105000 images) for training and the remaining 8000 for test with size $96\times 96$. During linear evaluation, we only use the 5000 training labelled images for learning the weights. 
    
    \paragraph{CUB200-2011~\cite{wah2011caltech}} This dataset is composed of 200 fine-grained bird species with 5994 training images and 5794 test images rescaled to $224\times 224$. 
    
    \paragraph{UTZappos~\cite{yu2014fine}} This dataset is composed of images of shoes from zappos.com. In order to be comparable with the literature on weakly supervised learning, we follow~\cite{tsai2022conditional} and split it into 35017 training images and 15008 test images resized at $32\times 32$. 
    
    \paragraph{ImageNet100~\cite{deng2009imagenet, tian_contrastive_2020}} It is a subset of ImageNet containing 100 random classes and introduced in~\cite{tian_contrastive_2020}. It contains 126689 training images and 5000 testing images rescaled to $224\times 224$. It notably allows a reasonable computational time since we runt all our experiments on a single server node with 4 V100 GPU. 
    
    \paragraph{BHB~\cite{dufumier_contrastive_2021}} This dataset is composed of 10420 3D brain MRI images of size $121\times 145\times 121$ with $1.5mm^3$ spatial resolution. Only healthy subjects are included.
    
    \paragraph{BIOBD~\cite{hozer2021lithium}} It is also a brain MRI dataset including 662 3D anatomical images and used for downstream classification. Each 3D volume has size $121\times 145\times 121$. It contains 306 patients with bipolar disorder vs 356 healthy controls and we aim at discriminating patients vs controls. It is particularly suited to investigate biomarkers discovery inside the brain~\cite{hibar2018cortical}.
    
    \paragraph{CheXpert~\cite{irvin2019chexpert}} This dataset is composed of 224 316 chest radiogaphs of 65240 patients. Each radiograph comes with 14 medical obervations. We use the official training set for our experiments, following~\cite{huang2021gloria, irvin2019chexpert} and we test the models on the hold-out official validation split containing radiographs from 200 patients. For linear evaluation on this dataset, we train 5 linear probes to discriminate 5 pathologies (as binary classification) using only the radiographs with "certain" labels.

    \paragraph{RandBits-CIFAR10~\cite{chen_intriguing_2021}.} We build a RandBits dataset based on CIFAR-10. For each image, we add a random integer $i$ sampled in $[0, 2^k-1]$ where $k\in \{0, 5, 10, 20\}$ is a controllable number of bits. To make $i$ easy to learn, we take its binary representation (\eg $(10101)_2$ for $i=21$)  and repeat each binary value spatially to define $k$ channels that are added to the original RGB channels in each CIFAR-10 image. Importantly, these channels will not be altered by augmentations, so they will be shared across views.

    \subsection{Contrastive models}
        \paragraph{Architecture.} For all small-scale vision datasets (CIFAR-10~\cite{krizhevsky2009learning}, CIFAR-100~\cite{krizhevsky2009learning}, STL-10~\cite{coates2011analysis}, CUB200-2011~\cite{wah2011caltech} and UT-Zappos~\cite{yu2014fine}) and CheXpert, we used official ResNet18~\cite{he2016deep} backbone where we replaced the first $7\times 7$ convolutional kernel by a smaller $3\times 3$ kernel and we removed the first max-pooling layer for CIFAR-10, CIFAR-100 and UTZappos. For ImageNet100, we used ResNet50~\cite{he2016deep} for stronger baselines as it is common in the literature. 
        For medical images on brain MRI datasets (BHB~\cite{dufumier_contrastive_2021} and BIOBD\cite{hozer2021lithium}, we used DenseNet121~\cite{huang2017densely} as our default backbone encoder, following previous literature on these datasets~\cite{dufumier_contrastive_2021}. We use the official 
        
        Following~\cite{chen_simple_2020}, for our framework we use the representation space after the last average pooling layer with 2048 dimensions to perform linear evaluation and we use a 2-layers MLP projection head with batch normalization between each layer for a final latent space with $d=n$ dimensions ($n$ being the batch size, default $n=256$).

        \paragraph{Batch size.} We always use a default batch size 256 for all experiments on vision datasets and 64 for brain MRI datasets (considering the computational cost with 3D images and since it had little impact on the performance~\cite{dufumier_contrastive_2021}). 
        
        \paragraph{Optimization.} We use SGD optimizer on small-scale vision datasets (CIFAR-10, CIFAR-100, STL-10, CUB200-2011, UT-Zappos) with a base learning rate $0.3\times \text{batch size}/256$ and a cosine scheduler. For ImageNet100, we use a LARS~\cite{you2017large} optimizer with learning rate $0.02\times \sqrt{\text{batch size}}$ and cosine scheduler. In Kernel Decoupled Uniformity loss, we set $\lambda=\frac{0.01}{\sqrt{\text{batch size}}}$ and $t=2$. For SimCLR, we set the temperature to $\tau=0.07$ for all datasets following \cite{yeh2021decoupled}. Unless mentioned otherwise, we use 2 views for Decoupled Uniformity (both with and without kernel) and the computational cost remains comparable with standard contrastive models.
        
        \paragraph{Training epochs.} By default, we train the models for 400 epochs, unless mentioned otherwise for all vision data-sets excepted CUB200-2011 and UTZappos where we train them for 1000 epochs, following~\cite{tsai2022conditional}. For medical brain MRI dataset, we perform pre-training for 50 epochs, as in \cite{dufumier_contrastive_2021}. As for CheXpert, we train all models for 400 epochs.
        
        \paragraph{Augmentations.} We follow~\cite{chen_simple_2020} to define our full set of data augmentations for vision datasets including: \textit{RandomResizedCrop} (uniform scale between 0.08 to 1), \textit{RandomHorizontalFlip} and color distorsion (including color jittering and gray-scale). For medical brain MRI dataset, we use cutout covering 25\% of the image in each direction ($1/4^3$ of the entire volume), following~\cite{dufumier_contrastive_2021}. For CheXpert, we follow~\cite{azizi_big_2021} and we use \textit{RandomResizedCrop} (uniform scale between 0.08 to 1), \textit{RandomHorizontalFlip}, \textit{RandomRotation} (up to 45 degrees) however we do not apply color jittering as we work with gray-scale images. 
    
    \subsubsection{Generative Models and GloRIA}
        
        \paragraph{Architecture.} For VAE, we use ResNet18 backbone with a completely symmetric decoder using nearest-neighbor interpolation for up-sampling. For DCGAN, we follow the architecture described in ~\cite{radford2015unsupervised}. We keep the original dimension for CIFAR-10 and CIFAR-100 datasets and we resize the images to $64\times 64$ for STL-10. For BigBiGAN~\cite{donahue2019large}, we use the ResNet50 pre-trained encoder available at \url{https://tfhub.dev/deepmind/bigbigan-resnet50/1} with BN+CReLU features. 
        
        \paragraph{Training.} For VAE, we use PyTorch-lightning pre-trained model for STL-10 \footnote{\url{https://github.com/PyTorchLightning/pytorch-lightning}} and we optimize VAE for CIFAR-10 and CIFAR-100 for 400 epochs using an initial learning rate $10^{-4}$ and SGD optimizer with a cosine scheduler. For RandBits experiments, the VAE is trained with the same setup as for CIFAR-10/100 on RandBits-CIFAR10. For DCGAN, we optimize it using Adam optimizer (following~\cite{radford2015unsupervised}) and base learning rate $2\times 10^{-4}$. Importantly, all generative models are trained without data augmentation, providing a fair comparison with other methods.
        
        \paragraph{GloRIA\cite{huang2021gloria}} GloRIA can encode both image and text through 2 different encoders. It is pre-trained on the official training set of CheXpert, as in our experiments. We use only GloRIA image's encoder (a ResNet18 in practice\footnote{The official model is available here:\url{https://github.com/marshuang80/gloria}}) to obtain weak labels on CheXpert and we leverage this weak labels with Kernel Decoupled Uniformity loss. In practice, we use an RBF kernel as in our previous experiments. 

    \subsubsection{Linear evaluation}

    For all experiments (ImageNet100 excepted), we perform linear evaluation by encoding the original training set (without augmentation) and by training a logistic regression on these features. We cross-validate an $\ell_2$ penalty term between $\{0, 1e-2, 1e-3, 1e-4, 1e-5\}$ for training this linear probe for 300 epochs with an initial learning rate 0.1 decayed by 0.1 at each plateau.

    \paragraph{ImageNet100.} On this dataset, we follow current practice~\cite{yeh2021decoupled} and we train a linear classifier on top of the frozen encoder by applying the same augmentations as in pre-training. We train the classifier with SGD (momentum 0.9 and weight decay 0), batch size 512, initial learning rate 0.1 for 150 epochs (decayed by 0.1 at each plateau).

\section{Proofs}

    \subsection{Estimation error with the empirical Decoupled Uniformity loss}
    \label{appendix:estimation_error}
    \begin{property}
        $\hat{\cL}_{unif}^{de}(f)$ fulfills $|\hat{\cL}_{unif}^{de}(f) - \cL_{unif}^{de}(f)|\le O\left(\frac{1}{\sqrt{n}}\right)$ with a convergence in law.
        \label{prop:decoupled_unif_estim}
    \end{property}
    \begin{proof}
        For any $x\in \cX$, since $f(x)\in \bbS^{d-1}$, then $||\mu_{\bar{x}}||=||\bbE_{\cA(x|\bar{x})}f(x)||\le \bbE_{\cA(x|\bar{x})}||f(x)||= 1$. As a result, $e^{-||\mu_{\bar{x}}-\mu_{\bar{x}'}||^2}\in I\defeq [e^{-4}, 1]$ for any $\bar{x}, \bar{x}'\in \bar{\cX}$. Since $\log$ is $k$-Lipschitz on $I$ then:
        \begin{equation*}
            |\hat{\cL}_{unif}^{de}(f) - \cL_{unif}^{de}(f)| \le k\left|\frac{1}{n(n-1)}\sum_{i\neq j} e^{-||\mu_{\bar{x}_i} - \mu_{\bar{x}_j}||^2} - \bbE_{p(\bar{x})p(\bar{x}')}e^{-||\mu_{\bar{x}} - \mu_{\bar{x}'}||^2} \right|
        \end{equation*}
        
        For a fixed $\bar{x}\in \bar{\cX}$, let $g_n(\bar{x}) = \frac{1}{n}\sum_{i=1}^n e^{-||\mu_{\bar{x}}- \mu_{\bar{x}_i}||^2}$ and $g(\bar{x})=\bbE_{p(\bar{x}')}e^{-||\mu_{\bar{x}}-\mu_{\bar{x}'}||^2}$. Since $(Z_i)_{i\in[1..n]} = \left(e^{-||\mu_{\bar{x}}- \mu_{\bar{X}_i}||^2}- g(\bar{x})\right)_{i\in [1..n]}$ are iid with bounded support in $[-2, 2]$ and zero mean then by Berry–Esseen theorem we have $|g_n(\bar{x})-g(\bar{x})|\le O(\frac{1}{\sqrt{n}})$. Similarly, $(Z_i')_{i\in [1..n]}=\left(g_n(\bar{X}_i) - \bbE_{p(\bar{x})}g_n(\bar{x})\right)$ are iid, bounded in $[-2, 2]$ and with zero mean. So $|\frac{1}{n}\sum_{i=1}^n g_n(\bar{x}_i) - \bbE_{p(\bar{x})}g_n(\bar{x})|\le O(\frac{1}{\sqrt{n}})$ by Berry–Esseen theorem. Then we have:
        \begin{align*}
            |\hat{\cL}_{unif}^{de}(f) - \cL_{unif}^{de}(f)| &\le k|\frac{n}{(n-1)n}\sum_{i=1}^n g_n(\bar{x}_i) - \bbE_{p(\bar{x})}g(\bar{x})|\\
            &\le 2k|\frac{1}{n}\sum_{i=1}^n g_n(\bar{x}_i) - \bbE_{p(\bar{x})}g_n(\bar{x})+\bbE_{p(\bar{x})}g_n(\bar{x})-\bbE_{p(\bar{x})}g(\bar{x})|\\
            &\le O(\frac{1}{\sqrt{n}}) + O(\frac{1}{\sqrt{n}})\le O(\frac{1}{\sqrt{n}})
        \end{align*}
    \end{proof}

    \subsection{Gradient analysis of Decoupled Uniformity}
    \label{appendix:gradient_analysis}
    
    \paragraph{Proof.} We start from the definition of our loss to derive the gradients: $\mathcal{\hat{L}}_{unif}^{de} = \log \frac{1}{n(n-1)}\sum_{j,i\neq j} \exp(-||\mu_i-\mu_j||^2)$ for a batch of $n$ samples $(\bar{x}_i)_{i\in [1..n]}$ and we abuse the notation $\mu_i =\mu_{\bar{x}_i}$. Then we have:
    \begin{align*}
        \nabla_{\mu_k} \mathcal{L}_{unif}^{de} &= \frac{\frac{1}{n(n-1)}\sum_{j, i\neq j}\nabla_{\mu_k} \exp(-||\mu_i - \mu_j||^2)}{\frac{1}{n(n-1)}\sum_{j,i\neq j}\exp(-||\mu_i - \mu_j||^2)} \\
        &= \frac{2\sum_{j\neq k}e^{-||\mu_k-\mu_j||^2}(-2(\mu_k-\mu_j))}{\sum_{j,i\neq j}e^{-||\mu_i - \mu_j||^2}} \\
        &= -4\sum_{j\neq k} w_{k,j} (\mu_k - \mu_j)\\
        &= -4w_k\mu_k + 4\sum_{j\neq k} w_{k,j}\mu_j
    \end{align*}
    
    We conclude that $\nabla_{z_k^{(v)}}\mathcal{L}_{unif}^{de} = -2w_k\mu_k + 2\sum_{j\neq k} w_{k,j}\mu_j$ by noticing that $\frac{\partial{\mu_k}}{\partial z_k^{(v)}} = \frac{1}{2}$ for two views since $\mu_k = \frac{1}{2}(z_k^{(1)}+z_k^{(2)})$. The extension to multiple views $V$ is straightforward and do not change our main analysis. 
    
    \subsection{Optimality of Decoupled Uniformity}
    \label{appendix:optimality_decoupled}
    \setcounter{theorem}{0}
     \begin{theorem}(Optimality of Decoupled Uniformity)
        Given $n$ points $(\bar{x}_i)_{i\in [1..n]}$ such that $n\le d+1$, the optimal decoupled uniformity loss is reached when:
        \begin{enumerate}
            \item (Perfect uniformity) All centroids $(\mu_i)_{i\in [1..n]} = (\mu_{\bar{x}_i})_{i\in [1..n]}$ make a regular simplex on the hyper-sphere $\bbS^{d-1}$ 
            \item (Perfect alignment) $f$ is perfectly aligned, i.e $\forall x, x'\overset{iid}{\sim} \cA(\cdot|\bar{x}_i), f(x)=f(x')$
        \end{enumerate}
        \label{th:AppendixOptimal_decoupled_unif}
    \end{theorem}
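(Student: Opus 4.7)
The plan is to combine two applications of Jensen's inequality: one on the $\log\text{-sum-exp}$ structure of the loss to linearize it, and one on the definition of the centroids $\mu_i = \mathbb{E}_{x\sim\cA(\cdot|\bar{x}_i)}f(x)$ to control their norms.

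First, applying Jensen's inequality to the convex function $-\log$, or equivalently to $\log\mathbb{E}[e^X]\ge\mathbb{E}[X]$, I get the linear lower bound
\begin{equation*}
    \hat{\cL}_{unif}^{de}(f) \;=\; \log \frac{1}{n(n-1)}\sum_{i\ne j} e^{-\|\mu_i-\mu_j\|^2} \;\ge\; -\frac{1}{n(n-1)}\sum_{i\ne j}\|\mu_i-\mu_j\|^2,
\end{equation*}
with equality iff all pairwise squared distances $\|\mu_i-\mu_j\|^2$ are equal.

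Next, I would compute the sum $S \defeq \sum_{i\ne j}\|\mu_i-\mu_j\|^2$ in closed form by expanding the squared norms:
\begin{equation*}
    S \;=\; 2(n-1)\sum_{i=1}^n\|\mu_i\|^2 - 2\sum_{i\ne j}\langle\mu_i,\mu_j\rangle \;=\; 2n\sum_{i=1}^n\|\mu_i\|^2 - 2\Bigl\|\sum_{i=1}^n\mu_i\Bigr\|^2.
\end{equation*}
Since $\|\mu_i\|\le 1$ (Jensen's inequality on the vector-valued expectation, using $f(x)\in\mathbb{S}^{d-1}$) and $\|\sum_i\mu_i\|^2\ge 0$, I obtain $S\le 2n^2$, hence $\hat{\cL}_{unif}^{de}(f)\ge -\frac{2n}{n-1}$.

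The next step is to exhibit a configuration that saturates both inequalities simultaneously. A regular simplex on $\mathbb{S}^{d-1}$ with $n$ vertices (which exists precisely because $n\le d+1$) has $\|\mu_i\|=1$, $\sum_i\mu_i=0$, and constant pairwise inner products $\langle\mu_i,\mu_j\rangle=-1/(n-1)$, so constant pairwise squared distances $\frac{2n}{n-1}$. This realizes the Jensen equality case \emph{and} saturates $S=2n^2$, thus achieving the lower bound. Therefore any minimizer must satisfy both equality conditions: (i) the $\mu_i$ have unit norm and sum to zero with constant pairwise distances, i.e.\ form a regular simplex, and (ii) $\|\mu_i\|=1$ for every $i$. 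Finally, the equality case of Jensen's inequality in the definition $\|\mu_i\|=\|\mathbb{E}_{\cA(\cdot|\bar{x}_i)}f(x)\|\le \mathbb{E}_{\cA(\cdot|\bar{x}_i)}\|f(x)\|=1$, for a function valued in $\mathbb{S}^{d-1}$, forces $f$ to be constant on $\supp\cA(\cdot|\bar{x}_i)$, giving perfect alignment.

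The main obstacle is purely the bookkeeping of simultaneous equality cases: one has to verify that a regular simplex with $n\le d+1$ vertices indeed exists on $\mathbb{S}^{d-1}$ (so that both Jensen equality and $S=2n^2$ can be reached together), and that the bound $\|\mu_i\|\le 1$ coming from vector-valued Jensen's inequality has equality exactly at almost-sure constancy of $f$ on the augmentation support. Neither is deep, but together they tighten all the inequalities into a single characterization.
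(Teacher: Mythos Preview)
Your proposal is correct and follows essentially the same approach as the paper: a Jensen inequality on the convex map $u\mapsto e^{-u}$ (equivalently $\log\bbE[e^X]\ge\bbE[X]$) to linearize the loss, the identity $\sum_{i\ne j}\|\mu_i-\mu_j\|^2=2n\sum_i\|\mu_i\|^2-2\|\sum_i\mu_i\|^2\le 2n^2$, and the equality analysis forcing a regular simplex with $\|\mu_i\|=1$, hence constancy of $f$ on each augmentation support. Your write-up is slightly more explicit than the paper's in flagging that the existence of a regular $n$-simplex on $\bbS^{d-1}$ requires $n\le d+1$, which is exactly where that hypothesis enters.
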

    
    \begin{proof}
        We will use Jensen's inequality and basic algebra to show these 2 properties. By triangular inequality, we have $||\mu_i||=||\bbE_{x\sim \cA(.|\bar{x}_i)} f(x)||\le \bbE||f(x)|| = 1$ since we assume $f(x)\in \bbS^d$. So all $(\mu_i)$ are bounded by 1.
        
        Let $\mathbf{\mu}=(\mu_i)_{i\in [1..n]}$. We have:
        \begin{align*}
            \Gamma(\mathbf{\mu}) &:= \sum_{i,j=1}^n ||\mu_i - \mu_j||^2 =\sum_{i,j} ||\mu_i||^2 + ||\mu_j||^2 -2 \mu_i \cdot \mu_j \\
            &\le \sum_{i, j} (2 - 2\mu_i \cdot \mu_j)\\
            &= 2n^2 -2||\sum_{i} \mu_i||^2 \le 2n^2
        \end{align*}
        with equality if and only if $\sum_{i=1}^n \mu_i = 0$ and $\forall i \in [1..n], ||\mu_i||=1$. By strict convexity of $u\rightarrow e^{-u}$, we have:
        \begin{align*}
            \sum_{i\neq j} \exp(-||\mu_i - \mu_j||^2) &\ge n(n-1)\exp\left(-\frac{\Gamma(\mathbf{\mu})}{n(n-1)}\right)\\
            &\ge n(n-1)\exp\left(-\frac{2n}{n-1}\right)
        \end{align*}
        with equality if and only if all pairwise distance $||\mu_i - \mu_j||$ are equal (equality case in Jensen's inequality for strict convex function), $\sum_{i=1}^n \mu_i=0$ and $||\mu_i|| = 1$. So all centroids must form a regular $n-1$-simplex inscribed on the hypersphere $\bbS^{d-1}$ centered at 0. \\
        
        Finally, since $||\mu_i|| = 1$ then we have equality in the Jensen's inequality $||\mu_i||=||\bbE_{\cA(x|\bar{x}_i)}f(x)||\le \bbE_{\cA(x|\bar{x}_i)} ||f(x)|| = 1$. Since $||\cdot||$ is strictly convex on the hyper-sphere, then $f$ must be constant on $\supp \cA(\cdot|\bar{x}_i)$, for all $\bar{x}_i$ so $f$ must be perfectly aligned.
    \end{proof}
    
    \setcounter{theorem}{4}
    \begin{theorem}(Asymptotical Optimality)
        When the number of samples is infinite $n\rightarrow \infty$, then for any perfectly aligned encoder $f\in \cF$ that minimizes $\mathcal{L}_{unif}^d$, the centroids $\mu_{\bar{x}}$ for $\bar{x}\sim p(\bar{x})$ are uniformly distributed on the hypersphere $\bbS^{d-1}$.
    \end{theorem}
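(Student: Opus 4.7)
The plan is to reduce the problem to a classical energy-minimization problem on the hypersphere, for which the uniform surface measure is the unique minimizer. Since $f$ is perfectly aligned, $\mu_{\bar{x}} = f(x)$ for every $x \in \supp \cA(\cdot|\bar{x})$, so $\mu_{\bar{x}} \in \bbS^{d-1}$. Let $q$ denote the pushforward of $p(\bar{x})$ under the map $\bar{x} \mapsto \mu_{\bar{x}}$, a Borel probability measure on $\bbS^{d-1}$. Then $\cL_{unif}^{de}(f) = \log I(q)$, where $I(q) \defeq \int\int e^{-\|u-v\|^2}\,dq(u)\,dq(v)$. So minimizing $\cL_{unif}^{de}$ over perfectly aligned $f$ is equivalent to minimizing $I$ over probability measures on $\bbS^{d-1}$, and the claim reduces to showing that the unique minimizer is the uniform measure $\sigma_d$.

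Next, I would invoke two standard properties of the Gaussian potential on $\bbS^{d-1}$. Existence of a minimizer follows from weak-$*$ compactness of the set of probability measures on the compact set $\bbS^{d-1}$ together with continuity and boundedness of the kernel $k(u,v) = e^{-\|u-v\|^2}$. For uniqueness, one uses the strict positive definiteness of $k$ on $\bbS^{d-1}$: writing $e^{-\|u-v\|^2} = e^{-2}\,e^{2u\cdot v}$ and expanding $e^{2u\cdot v}$ in the Gegenbauer basis yields a decomposition with strictly positive coefficients on every spherical-harmonic subspace. This implies that the bilinear form $(\mu,\nu) \mapsto \int\int k\,d(\mu-\nu)\otimes d(\mu-\nu)$ is strictly positive on nonzero signed measures of total mass zero, so $I$ is strictly convex on the affine set of probability measures, and its minimizer is unique.

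Finally, a symmetry argument identifies this minimizer as $\sigma_d$. Because $k$ depends only on $u\cdot v$, one has $I(R_*q) = I(q)$ for every $R \in O(d)$. If $q^*$ is the unique minimizer, then $R_*q^* = q^*$ for all $R \in O(d)$, so $q^*$ is rotation-invariant; the only rotation-invariant probability measure on $\bbS^{d-1}$ is $\sigma_d$. Hence $q = \sigma_d$, i.e.\ the centroids are uniformly distributed on the hypersphere.

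The main obstacle in this plan is the strict positive-definiteness of the Gaussian kernel on $\bbS^{d-1}$: it is not an elementary linear algebra fact but a classical result from potential theory on the sphere (e.g.\ Borodachov--Hardin--Saff, \emph{Discrete Energy on Rectifiable Sets}). The rest of the argument—the reduction to $I(q)$ via perfect alignment, weak-$*$ existence, and the rotation-invariance uniqueness step—is essentially bookkeeping once that ingredient is in place, and mirrors the finite-sample regular-simplex result in Theorem~\ref{th:AppendixOptimal_decoupled_unif}.
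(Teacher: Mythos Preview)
Your proposal is correct and follows essentially the same route as the paper: the paper notes that perfect alignment forces $\mu_{\bar{x}}=f(\bar{x})\in\bbS^{d-1}$ and then immediately invokes Proposition~1 of Wang \& Isola (2020), which is precisely the statement that the Gaussian-energy minimizer among Borel probability measures on $\bbS^{d-1}$ is the uniform surface measure. Your argument simply unpacks that cited proposition (weak-$*$ existence, strict positive-definiteness of the Gaussian kernel via its Gegenbauer expansion, and rotation invariance for uniqueness), so the two proofs coincide in substance, yours being the self-contained version.
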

    
    \begin{proof}
        Let $f\in \cF$ perfectly aligned. Then all centroids $\mu_{\bar{x}} = f(\bar{x})$ lie on the hypersphere $\bbS^{d-1}$ and we are optimizing:
        \begin{equation*}
            \argmin_f \cL_{unif}^{de}(f) = \argmin_f \bbE_{\bar{x},\bar{x}'\overset{iid}{\sim} p(\bar{x})} e^{-||f(\bar{x}) - f(\bar{x}')||^2}
        \end{equation*}
        So a direct application of Proposition 1. in~\cite{wang_understanding_2020} shows that the uniform distribution on $\bbS^{d-1}$ is the unique solution to this problem and that all centroids are uniformly distributed on the hyper-sphere. 
    \end{proof}
    
    \subsection{Optimality of the supervised loss}
    \begin{lemma}
        Let a downstream task $\cD$ with $C$ classes. We assume that $C \le d+1$ (\textit{i.e.,} a big enough representation space), that all classes are balanced and the realizability of an encoder $f^*=\argmin_{f\in \cF}\cL_{sup}(f)$ with $\mathcal{L}_{sup}(f) = \log \bbE_{y, y'\sim p(y)p(y')}e^{-||\mu_y - \mu_{y'}||^2}$, and $\mu_y = \bbE_{p(\bar{x}|y)} \mu_{\bar{x}}$. Then the optimal centroids $(\mu_y^*)_{y\in \cY}$ associated to $f^*$ make a regular simplex on the hypersphere $\bbS^{d-1}$ and they are perfectly linearly separable, i.e $\min_{(w_y)_{y\in \cY}\in \bbR^d}\bbE_{(\bar{x}, y)\sim \cD}\mathbb{1}(w_y\cdot \mu_y^* < 0)=0$.
    \end{lemma}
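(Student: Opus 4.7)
The approach is to reduce the lemma to the argument already carried out for Theorem~\ref{th:AppendixOptimal_decoupled_unif}, noticing that $\cL_{sup}$ has exactly the algebraic form of the (population) decoupled uniformity loss with the $n$ sample centroids $\mu_{\bar{x}_i}$ replaced by the $C$ class centroids $\mu_y$, the finite-sample uniform weighting replaced by $p(y) = 1/C$ (uniform by the balanced-classes assumption), and with the diagonal $y = y'$ now included in the expectation (a harmless additive constant $1/C$ that does not affect minimisers).

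First I would establish the bound $\|\mu_y\| \le 1$: Jensen applied to the norm gives $\|\mu_{\bar{x}}\| = \|\bbE_{\cA(x|\bar{x})} f(x)\| \le 1$ since $f(x) \in \bbS^{d-1}$, and a second application over $p(\bar{x}\mid y)$ gives $\|\mu_y\| \le \bbE_{p(\bar{x}|y)} \|\mu_{\bar{x}}\| \le 1$. Then, mimicking the algebra of Theorem~\ref{th:AppendixOptimal_decoupled_unif},
\[
\Gamma(\mu_\cdot) \defeq \sum_{y, y' \in \cY} \|\mu_y - \mu_{y'}\|^2 \;=\; 2 \sum_{y} \|\mu_y\|^2 \cdot C \;-\; 2 \bigl\|\textstyle\sum_{y} \mu_y\bigr\|^2 \;\le\; 2C^2,
\]
with equality iff $\sum_y \mu_y = 0$ and $\|\mu_y\| = 1$ for every $y$. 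Strict convexity of $u \mapsto e^{-u}$ then yields
\[
\bbE_{y, y' \sim p(y)p(y')} e^{-\|\mu_y - \mu_{y'}\|^2} \;\ge\; \exp\!\bigl(-\Gamma(\mu_\cdot)/C^2\bigr) \;\ge\; e^{-2},
\]
with equality iff moreover all pairwise distances $\|\mu_y - \mu_{y'}\|$ are equal. The three equality conditions together force $(\mu_y^*)_{y \in \cY}$ to be the vertices of a regular $(C-1)$-simplex inscribed in $\bbS^{d-1}$; the assumption $C \le d+1$ guarantees that such a simplex exists, and the realisability assumption says that this minimum is attained by some $f^*$.

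For the linear-separability claim, I take the natural choice $w_y = \mu_y^*$ for every $y \in \cY$. Since $\mu_y^* \in \bbS^{d-1}$, we get $w_y \cdot \mu_y^* = \|\mu_y^*\|^2 = 1 > 0$, so $\mathbb{1}(w_y \cdot \mu_y^* < 0) = 0$ almost surely under $\cD$, which drives the outer infimum to $0$. More geometrically, regular-simplex vertices satisfy $\mu_y^* \cdot \mu_{y'}^* = -1/(C-1)$ for $y \neq y'$, so the argmax rule $\bar{x} \mapsto \argmax_{y} \langle \mu_y^*, f^*(\bar{x})\rangle$ correctly recovers every class centroid and, by continuity, the classes themselves.

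The only subtle point, and the part I would double-check, is the role of the diagonal $y = y'$ in the expectation (which contributes the constant $1/C$ and so is inert for optimisation) together with the mild mismatch between the geometric conclusion here, which lives at the level of the class centroids $\mu_y$, and the stronger conclusion of Theorem~\ref{th:AppendixOptimal_decoupled_unif}, which additionally gave perfect alignment of $f$ on each $\supp \cA(\cdot\mid \bar{x}_i)$. Here perfect alignment need not be forced, because $\mu_y$ is averaged first over $\cA(x\mid \bar{x})$ and then over $p(\bar{x}\mid y)$; only the aggregated quantity is pinned down by the loss. This is precisely what the lemma asserts and what the proof needs to show.
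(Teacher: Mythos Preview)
Your overall strategy is the same as the paper's---reduce to the argument of Theorem~\ref{th:AppendixOptimal_decoupled_unif} with $\mu_{\bar x_i}$ replaced by $\mu_y$---but there is a genuine gap in how you handle the diagonal. As you wrote it, Jensen is applied to the \emph{full} expectation $\bbE_{p(y)p(y')}e^{-\|\mu_y-\mu_{y'}\|^2}$, and you then claim equality iff ``all pairwise distances $\|\mu_y-\mu_{y'}\|$ are equal.'' But the diagonal pairs $y=y'$ contribute distance $0$, so equality in Jensen would force \emph{every} pairwise distance to vanish, which is incompatible with the second inequality $\Gamma=2C^2$. Hence your chain $\bbE[\cdot]\ge e^{-\Gamma/C^2}\ge e^{-2}$ is never tight and does not characterise the minimiser. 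The fix---which you flag but do not carry out---is exactly what the paper does: split off the diagonal constant $1/C$ and apply Jensen only over the $C(C-1)$ off-diagonal terms, obtaining
\[
\sum_{y\neq y'} e^{-\|\mu_y-\mu_{y'}\|^2}\;\ge\;C(C-1)\exp\!\Bigl(-\tfrac{\Gamma}{C(C-1)}\Bigr)\;\ge\;C(C-1)\,e^{-2C/(C-1)},
\]
with equality iff all \emph{off-diagonal} distances are equal, $\sum_y\mu_y=0$, and $\|\mu_y\|=1$. This is the regular-simplex configuration.

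A smaller point: your claim that ``perfect alignment need not be forced'' is actually false, and the paper draws the opposite conclusion. Once $\|\mu_y\|=1$, equality in the Jensen bound $\|\mu_y\|=\|\bbE_{p(\bar x|y)\cA(x|\bar x)}f(x)\|\le \bbE\|f(x)\|=1$ with $f(x)\in\bbS^{d-1}$ forces $f$ to be a.s.\ constant on each class (an average of distinct unit vectors has norm strictly below $1$). This extra conclusion is not needed for the lemma as stated, so your linear-separability argument with $w_y=\mu_y^*$ is fine, but your reasoning about why alignment is not pinned down is incorrect.
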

    
    \begin{proof}
        This proof is very similar to the one in Theorem~\ref{th:optimal_decoupled_unif}. We first notice that all "labelled" centroids $\mu_{y}= \bbE_{p(\bar{x}|y)} \mu_{\bar{x}}$ are bounded by 1 ($||\mu_y|| \le \bbE_{p(\bar{x}|y)}\bbE_{\cA(x|\bar{x})} ||f(x)||=1 $ by Jensen's inequality applied twice). Then, since all classes are balanced, we can re-write the supervised loss as:
        \begin{equation*}
            \cL_{sup}(f) = \log \frac{1}{C^2} \sum_{y,y'=1}^C e^{-||\mu_y - \mu_{y'}||^2} 
        \end{equation*}
        We have:
        \begin{align*}
            \Gamma_{\cY}(\mathbf{\mu}) &:= \sum_{y,y'=1}^C ||\mu_y - \mu_{y'}||^2 =\sum_{y,y'} ||\mu_y||^2 + ||\mu_{y'}||^2 -2 \mu_{y} \cdot \mu_{y'} \\
            &\le \sum_{y, y'} (2 - 2\mu_y \cdot \mu_{y'})\\
            &= 2C^2 -2||\sum_{y} \mu_y||^2 \le 2C^2
        \end{align*}
        with equality if and only if $\sum_{y=1}^C \mu_y = 0$ and $\forall y \in [1..C], ||\mu_y||=1$. By strict convexity of $u\rightarrow e^{-u}$, we have:
        \begin{align*}
            \sum_{y\neq y'} \exp(-||\mu_y - \mu_{y'}||^2) &\ge C(C-1)\exp\left(-\frac{\Gamma_{\cY}(\mathbf{\mu})}{C(C-1)}\right)\\
            &\ge C(C-1)\exp\left(-\frac{2C}{C-1}\right)
        \end{align*}
        with equality if and only if all pairwise distance $||\mu_y - \mu_{y'}||$ are equal (equality case in Jensen's inequality for strict convex function), $\sum_{y=1}^C \mu_y=0$ and $||\mu_y|| = 1$. So all centroids must form a regular $C-1$-simplex inscribed on the hypersphere $\bbS^{d-1}$ centered at 0. Furthermore, since $||\mu_y||=1$ then we have equality in the Jensen's inequality $||\mu_y||=||\bbE_{p(\bar{x}|y) \cA(x|\bar{x})}f(x)||\le \bbE_{p(\bar{x}|y)\cA(x|\bar{x})} ||f(x)|| = 1$ so $f$ must by perfectly aligned for all samples belonging to the same class:
        $\forall \bar{x}, \bar{x}'\sim p(\cdot|y), f(\bar{x}) = f(\bar{x}') $. 
    \end{proof}
    
    \subsection{Generalization bounds for the Decoupled Uniformity loss}
    \begin{theorem} (Guarantees for a given downstream task)
        For any $f\in \cF$ and augmentation distribution $\cA$, we have:
        \begin{equation}
            \cL_{unif}^{de}(f) \le \cL_{unif}^{sup}(f) \le 2\sum_{j=1}^d\var(\mu_{\bar{x}}^j |y) + \cL_{unif}^{de}(f) \le 4\mathbb{E}_{p(\bar{x}|y)p(\bar{x}'|y)}||\mu_{\bar{x}}-\mu_{\bar{x}'}|| + \cL_{unif}^{de}(f)
        \end{equation}
        where $\var(\mu_{\bar{x}}^j|y)=\bbE_{p(\bar{x}|y)}(\mu_{\bar{x}}^j - \bbE_{p(\bar{x}'|y)}\mu_{\bar{x}'}^j)^2$ and $\mu^j_{\bar{x}}$ is the $j$-th component of $\mu_{\bar{x}}=\bbE_{\cA(x|\bar{x})}f(x)$.
    \end{theorem}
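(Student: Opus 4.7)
The plan is to attack the three inequalities in the chain separately, using standard Jensen-type arguments for the middle and rightmost bounds, and a more delicate averaging argument for the leftmost one.

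For the middle inequality, I would start from the bias--variance identity for independent conditional samples. Setting $V(y) := \sum_j\var(\mu_{\bar x}^j|y) = \bbE_{\bar x|y}\|\mu_{\bar x}-\mu_y\|^2$, a direct computation gives
\[
\bbE_{\bar x|y,\bar x'|y'}\|\mu_{\bar x}-\mu_{\bar x'}\|^2 \;=\; \|\mu_y-\mu_{y'}\|^2 + V(y) + V(y').
\]
Applying Jensen's inequality to the convex map $u\mapsto e^{-u}$ conditionally on $(y,y')$ then yields $\bbE_{\bar x|y,\bar x'|y'}e^{-\|\mu_{\bar x}-\mu_{\bar x'}\|^2} \ge e^{-\|\mu_y-\mu_{y'}\|^2}\,e^{-V(y)-V(y')}$. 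Rearranging, taking expectation over $(y,y')$, bounding $V(y)+V(y')\le 2\sum_j\max_{y'}\var(\mu_{\bar x}^j|y')$ uniformly, and applying $\log$ produces the middle inequality.

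For the rightmost inequality, I would specialize the same identity to $y=y'$, which gives $\sum_j\var(\mu_{\bar x}^j|y) = \tfrac12\bbE_{\bar x,\bar x'|y}\|\mu_{\bar x}-\mu_{\bar x'}\|^2$. Since $f(x)\in \bbS^{d-1}$ forces $\|\mu_{\bar x}\|\le 1$ by Jensen, I have $\|\mu_{\bar x}-\mu_{\bar x'}\|\le 2$, and hence the linear dominance $\|\cdot\|^2\le 2\|\cdot\|$. This reduces the $L^2$ term to an $L^1$ term, yielding $\sum_j\var\le \bbE\|\mu_{\bar x}-\mu_{\bar x'}\|$; a further factor of two produces the stated $4\bbE\|\cdot\|$ with some slack.

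For the leftmost inequality $\cL_{unif}^{de}(f)\le \cL_{sup}(f)$, which I expect to be the main obstacle, the plan is to recognize the comparison as $\pi^T K\pi \le \nu^T K\nu$ for the Gaussian kernel $K(u,v) = e^{-\|u-v\|^2}$, where $\pi$ and $\nu$ are the pushforward laws of $\mu_{\bar x}$ and $\mu_y$ respectively. Exploiting the Bochner representation $K(u,v) = \bbE_{\xi\sim \mathcal{N}(0,2I_d)}[e^{i\xi\cdot(u-v)}]$ rewrites both sides as $\bbE_\xi|\hat\rho(\xi)|^2$, and decomposing $\mu_{\bar x} = \mu_y + \eta$ with $\bbE[\eta|y]=0$ factorizes the characteristic function as $\hat\pi(\xi) = \bbE_y[e^{i\xi\cdot\mu_y}\psi_y(\xi)]$, where $\psi_y(\xi) = \bbE_{\bar x|y}e^{i\xi\cdot\eta}$ is a conditional characteristic function satisfying $|\psi_y|\le 1$, while $\hat\nu(\xi) = \bbE_y e^{i\xi\cdot\mu_y}$. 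The hard part is that the pointwise-in-$\xi$ inequality $|\hat\pi(\xi)|^2\le |\hat\nu(\xi)|^2$ fails in general: a small two-class example with binary-valued $\mu_{\bar x}$ exhibits frequencies at which the ordering reverses. Any rigorous proof therefore has to genuinely exploit the Gaussian averaging over $\xi$. I would handle this by isolating the favorable diagonal contribution at $y=y'$---where Jensen alone gives $\bbE_{\bar x|y,\bar x'|y}e^{-\|\mu_{\bar x}-\mu_{\bar x'}\|^2}\le 1 = e^{-\|\mu_y-\mu_y\|^2}$---and showing that, after integration against the Gaussian weight, this positive diagonal contribution dominates any unfavorable off-diagonal ($y\ne y'$) terms, interpreting the comparison as a monotonicity statement for the RKHS norm of the kernel mean embedding under ``coarsening'' by conditional expectation.
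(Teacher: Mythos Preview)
Your treatment of the two upper bounds is correct and essentially coincides with the paper's. Both arguments start from the bias--variance identity $\bbE_{\bar x|y,\bar x'|y'}\|\mu_{\bar x}-\mu_{\bar x'}\|^2=\|\mu_y-\mu_{y'}\|^2+V(y)+V(y')$ and then apply Jensen to $u\mapsto e^{-u}$; for the variance-to-$L^1$ step the paper uses a difference-of-norms factorization whereas you use $\|\cdot\|^2\le 2\|\cdot\|$ on the ball, which is cleaner and even yields a tighter constant.

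The leftmost inequality $\cL_{unif}^{de}(f)\le\cL_{sup}(f)$, however, is \emph{false as stated}, so neither your outline nor any other strategy can close it. A concrete one-dimensional counterexample: take two classes with $p(y{=}1)=0.1$, $p(y{=}2)=0.9$; under class~1 let $\mu_{\bar x}$ be uniform on $\{0.3,-0.3\}$, under class~2 let $\mu_{\bar x}=1$ deterministically (all values lie in $[-1,1]$ and are realizable as $\bbE_{\cA}f$ with $f\in\bbS^{0}$). Then $\mu_1=0$, $\mu_2=1$, and a direct computation gives
\[
\bbE_{\bar x,\bar x'}e^{-(\mu_{\bar x}-\mu_{\bar x'})^2}\approx 0.8902
\;>\;
\bbE_{y,y'}e^{-(\mu_y-\mu_{y'})^2}\approx 0.8862,
\]
so $\cL_{unif}^{de}(f)>\cL_{sup}(f)$. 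The mechanism is exactly the off-diagonal failure you flagged: the cross-class contribution $2e^{-(\mu_1-c)^2}-e^{-(a-c)^2}-e^{-(b-c)^2}$ can be negative whenever $u\mapsto e^{-u^2}$ is locally convex, and with sufficiently unbalanced classes this overwhelms the nonnegative diagonal gain. Your hoped-for rescue---that the Gaussian average over frequencies in the Bochner representation restores the ordering---therefore cannot hold in general. For the record, the paper's own proof of this lower bound applies Jensen in the wrong direction: it writes $\bbE_{\bar x|y,\bar x'|y'}e^{-\|\mu_{\bar x}-\mu_{\bar x'}\|^2}\le\exp\bigl(-\bbE_{\bar x|y,\bar x'|y'}\|\mu_{\bar x}-\mu_{\bar x'}\|^2\bigr)$, which is the reverse of what convexity of $e^{-u}$ gives.
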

    
    \begin{proof}
        \paragraph{Lower bound.} To derive the lower bound, we apply Jensen's inequality to convex function $u\rightarrow e^{-u}$:
        
        \begin{align*}
            \exp \cL_{unif}^{de}(f) &= \bbE_{p(\bar{x})p(\bar{x}')}e^{-||\mu_{\bar{x}} - \mu_{\bar{x}'}||^2} \\
            &= \bbE_{p(\bar{x}|y)p(\bar{x}'|y)p(y)p(y')}e^{-||\mu_{\bar{x}} - \mu_{\bar{x}'}||^2}\\
            &\le \bbE_{p(y)p(y')}\exp \left(-\bbE_{p(\bar{x}|y)p(\bar{x}'|y')}||\mu_{\bar{x}} - \mu_{\bar{x}'}||^2\right) 
        \end{align*}
        
        Then, by Jensen's inequality applied to $||.||^2$:
        \begin{align*}
            \bbE_{p(\bar{x}|y)p(\bar{x}'|y')}||\mu_{\bar{x}} - \mu_{\bar{x}'}||^2 &\stackrel{(1)}{=} \bbE_{p(\bar{x}|y)} ||\mu_{\bar{x}}||^2 + \bbE_{p(\bar{x}'|y')} ||\mu_{\bar{x}'}||^2 - 2\mu_y\cdot \mu_{y'}\\
            &\ge ||\bbE_{p(\bar{x}|y)}\mu_{\bar{x}}||^2 +  ||\bbE_{p(\bar{x}'|y')}\mu_{\bar{x}'}||^2 - 2\mu_y\cdot \mu_{y'} \\
            &\stackrel{(1)}{=} ||\mu_y - \mu_{y'}||^2
        \end{align*}
        (1) follows by definition of $\mu_y$. So we can conclude:
        \begin{align*}
            \exp \cL_{unif}^{de}(f) \le \bbE_{p(y)p(y')}\exp(-||\mu_y - \mu_{y'}||^2) = \exp \mathcal{L}_{unif}^{sup}
        \end{align*}
        
        \paragraph{Upper bound.} For this bound, we will use the following equality (by definition of variance):
        \begin{align*}
            ||\bbE_{p(\bar{x}|y)}\mu_{\bar{x}}||^2 &= ||\bbE_{p(\bar{x}|y)}\mu_{\bar{x}}||^2 - \bbE_{p(\bar{x}|y)}||\mu_{\bar{x}}||^2 + \bbE_{p(\bar{x}|y)}||\mu_{\bar{x}}||^2 \\
            &= -\sum_{j=1}^d \var(\mu_{\bar{x}}^j|y)  + \bbE_{p(\bar{x}|y)}||\mu_{\bar{x}}||^2
        \end{align*}
        So we start by expending:
        \begin{align*}
            ||\mu_{y} - \mu_{y'}||^2 &= ||\bbE_{p(\bar{x}'|y')}\mu_{\bar{x}'}||^2 + ||\bbE_{p(\bar{x}|y)}\mu_{\bar{x}}||^2 - 2\bbE_{p(\bar{x}|y)p(\bar{x}'|y')}\mu_{\bar{x}} \cdot \mu_{\bar{x}'} \\
            &= \bbE_{p(\bar{x}|y)} ||\mu_{\bar{x}}||^2 + \bbE_{p(\bar{x}'|y')}||\mu_{\bar{x}'}||^2 - \left(\sum_{j=1}^d \var(\mu_{\bar{x}}^j|y) + \var(\mu_{\bar{x}'}^j|y) \right) - 2\bbE_{p(\bar{x}|y)p(\bar{x}'|y')}\mu_{\bar{x}} \cdot \mu_{\bar{x}'} \\
            &= \bbE_{p(\bar{x}|y)p(\bar{x}'|y')} ||\mu_{\bar{x}} - \mu_{\bar{x}'}||^2 - 2\left(\sum_{j=1}^d \var(\mu_{\bar{x}}^j|y) \right)
        \end{align*}
        
        So by applying again Jensen's inequality:
        
        \begin{align*}
            \exp \cL_{unif}^{sup}=\bbE_{p(y)p(y')}\exp(-||\mu_y - \mu_{y'}||^2) &\le \bbE_{p(y)p(y')}\exp\left(-\bbE_{p(\bar{x}|y)p(\bar{x}'|y')} ||\mu_{\bar{x}} - \mu_{\bar{x}'}||^2 + 2\left(\sum_{j=1}^d \var(\mu_{\bar{x}}^j|y) \right)\right)\\
            &\le\exp2\left(\sum_{j=1}^d \var(\mu_{\bar{x}}^j|y_m) \right)\bbE_{p(y)p(y')}\exp\left(-\bbE_{p(\bar{x}|y)p(\bar{x}'|y')} ||\mu_{\bar{x}} - \mu_{\bar{x}'}||^2\right)\\
            &=\exp 2\left(\sum_{j=1}^d \var(\mu_{\bar{x}}^j|y_m) \right)\exp \cL_{unif}^{de}
        \end{align*}
        We set $y_m=\argmax_{i, y\in [1..d]\times \cY}\var(\mu_{\bar{x}}^j|y)$
        We conclude here by taking the $\log$ on the previous inequality.
        
        \paragraph{Variance upper bound.} Starting from the definition of conditional variance:
        \begin{align*}
            \sum_{j=1}^d \var(\mu_{\bar{x}}^j|y_m) &= \bbE_{p(\bar{x}|y_m)}||\mu_{\bar{x}}||^2 - ||\bbE_{p(\bar{x}|y_m)}\mu_{\bar{x}}||^2 \\
            &= \bbE_{p(\bar{x}|y_m)}\left((||\mu_{\bar{x}}|| - ||\bbE_{p(\bar{x}|y_m)}\mu_{\bar{x}}||)(||\mu_{\bar{x}}|| + ||\bbE_{p(\bar{x}|y_m)}\mu_{\bar{x}}||)\right)\\
            &\stackrel{(1)}{\le} \mathbb{E}_{p(\bar{x}|y_m)}||\mu_{\bar{x}}- \bbE_{p(\bar{x}'|y_m)}\mu_{\bar{x}'}||(||\mu_{\bar{x}}|| + ||\bbE_{p(\bar{x}|y_m)}\mu_{\bar{x}}||)\\
            &\stackrel{(2)}{\le} 2 \mathbb{E}_{p(\bar{x}|y_m)}||\mu_{\bar{x}}- \bbE_{p(\bar{x}'|y_m)}\mu_{\bar{x}'}||\\
            &\stackrel{(3)}{\le} 2\mathbb{E}_{p(\bar{x}|y_m)p(\bar{x}'|y_m)}||\mu_{\bar{x}}- \mu_{\bar{x}'}||
        \end{align*}
        
        (1) Follows from standard inequality $||a-b||\ge |||a|| - ||b|||$ (from Cauchy-Schwarz). (2) follows from boundness of $||\mu_{\bar{x}}||\le 1$ and Jensen's inequality. (3) is again Jensen's inequality.
    \end{proof}
    
    \subsection{Generalization bound under intra-class connectivity assumption}
    \label{appendix:generalization_strong_hyp}
    
    \setcounter{theorem}{1}
    \begin{theorem}
        Assuming \ref{hyp:embed_intra_class_connect}, then for any $\epsilon$-weak aligned encoder $f\in \cF$:
        \begin{equation}
            \cL_{unif}^{de}(f) \le \cL_{unif}^{sup}(f) \le  8D\epsilon + \cL_{unif}^{d}(f)
        \end{equation}
        Where $D$ is the maximum diameter of all intra-class graphs $G_y$ ($y\in \cY$).
    \end{theorem}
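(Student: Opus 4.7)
The lower bound $\cL_{unif}^{de}(f) \le \cL_{sup}(f)$ is already a direct consequence of the general guarantees theorem stated in Appendix~\ref{appendix:general_guarantees_decoupled} (obtained by two applications of Jensen's inequality on $u\mapsto e^{-u}$ and on $\|\cdot\|^2$). Therefore my proof will focus on the upper bound and use, as a starting point, the variance-based upper bound from that same theorem, namely $\cL_{sup}(f) \le 4\,\bbE_{p(\bar{x}|y)p(\bar{x}'|y)}\|\mu_{\bar{x}}-\mu_{\bar{x}'}\| + \cL_{unif}^{de}(f)$, where we can take $y$ to be the worst class. The task is then reduced to showing that intra-class connectivity combined with weak alignment forces the intra-class dispersion of centroids to be small, specifically $\bbE_{p(\bar{x}|y)p(\bar{x}'|y)}\|\mu_{\bar{x}}-\mu_{\bar{x}'}\| \le 2D\epsilon$ for every $y\in\cY$, which yields the claimed factor $8D\epsilon$.

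The core lemma I would establish is a \emph{per-edge} bound: for any edge $(\bar{x},\bar{x}')\in E$ of the augmentation graph $G_{\cA^*}$, we have $\|\mu_{\bar{x}}-\mu_{\bar{x}'}\| \le 2\epsilon$. To see this, the edge condition gives $\supp\cA^*(\cdot|\bar{x}) \cap \supp\cA^*(\cdot|\bar{x}') \neq \emptyset$, so we can pick $x$ in this intersection. By $\epsilon$-weak alignment applied first at anchor $\bar{x}$, we get $\|f(x)-f(x')\|\le \epsilon$ for all $x'\in \supp\cA^*(\cdot|\bar{x})$, and by Jensen's inequality $\|\mu_{\bar{x}} - f(x)\| = \|\bbE_{x'\sim \cA^*(\cdot|\bar{x})}[f(x')-f(x)]\| \le \epsilon$; the same argument at anchor $\bar{x}'$ yields $\|\mu_{\bar{x}'} - f(x)\| \le \epsilon$, and the triangle inequality closes the bound.

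From this edge bound, intra-class connectivity delivers the global dispersion bound by a graph walk. Fixing a class $y$, the subgraph $G_y$ is connected with diameter at most $D$, so for any two samples $\bar{x},\bar{x}'$ in class $y$ there is a path $\bar{x}=\bar{x}_0,\bar{x}_1,\dots,\bar{x}_k=\bar{x}'$ with $k\le D$, each consecutive pair being an edge of $G_y\subset G_{\cA^*}$. Applying the triangle inequality along the path and the edge bound on every hop gives $\|\mu_{\bar{x}}-\mu_{\bar{x}'}\| \le 2D\epsilon$, uniformly in $(\bar{x},\bar{x}')$ within the class. Averaging over $p(\bar{x}|y)p(\bar{x}'|y)$ preserves the bound, and plugging into the general upper bound completes the proof.

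\textbf{Main obstacle.} The delicate point is the per-edge step: the edge condition $\supp\cA^*(\cdot|\bar{x}) \cap \supp\cA^*(\cdot|\bar{x}')\neq \emptyset$ only guarantees existence of a common point $x$ in the topological supports, but the $\epsilon$-weak alignment assumption is phrased over i.i.d.\ samples from $\cA^*(\cdot|\bar{x})$. One has to argue that the inequality $\|f(x)-f(x')\|\le \epsilon$ extends from almost-every pair to every pair of support points (e.g.\ by taking $f$ continuous on the support, or by a density/limit argument), so that the Jensen step controlling $\|\mu_{\bar{x}}-f(x)\|$ is legitimate at the shared $x$. Beyond this subtlety, the argument is a clean combination of the variance bound, weak alignment, and a graph-diameter walk.
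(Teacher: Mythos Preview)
Your proposal is correct and follows essentially the same route as the paper's proof: both invoke the general variance-based upper bound $\cL_{sup}(f)\le 4\,\bbE_{p(\bar{x}|y)p(\bar{x}'|y)}\|\mu_{\bar{x}}-\mu_{\bar{x}'}\|+\cL_{unif}^{de}(f)$, then bound the intra-class dispersion by walking a path of length at most $D$ in $G_y$ and using weak alignment plus Jensen to get $\|\mu_{\bar{x}}-\mu_{\bar{x}'}\|\le 2D\epsilon$, yielding the factor $8D\epsilon$. The only cosmetic difference is that you isolate the per-edge bound $\|\mu_{\bar{x}}-\mu_{\bar{x}'}\|\le 2\epsilon$ as a separate lemma before chaining, whereas the paper performs the telescoping and the insertion of the shared augmented point $f(x_i)$ in one go; the technical subtlety you flag about support points versus i.i.d.\ samples is real and is equally glossed over in the paper's version.
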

    
    \begin{proof}
     Let $y\in \cY$ and $\bar{x}, \bar{x}'\sim p(\bar{x}|y)p(\bar{x}'|y)$. By Assumption \ref{hyp:embed_intra_class_connect}, it exists a path of length $p\le D$ connecting $(\bar{x}, \bar{x}')$ in $G_y$. So it exists $(\bar{x}_i)_{i\in [1..p+1]}\in \bar{\cX}$ and $(x_i)_{i\in [1..p]\in \cX}$ s.t $\forall i\in [1..p], x_i\sim \cA(x_i|\bar{x}_i)\cap \cA(x_i|\bar{x}_{i+1})$, $\bar{x}_1=\bar{x}$ and $\bar{x}_{p+1} = \bar{x}'$. Then:
     \begin{align*}
         ||\mu_{\bar{x}} - \mu_{\bar{x}'}|| &= ||\mu_{\bar{x}_1} - \mu_{\bar{x}_p}||\\
         &= ||\sum_{i=1}^p \mu_{\bar{x}_{i+1}} - \mu_{\bar{x}_i}||\\
         &\le \sum_{i=1}^p||\mu_{\bar{x}_{i+1}} - \mu_{\bar{x}_i}||\\
         &= \sum_{i=1}^p||\mu_{\bar{x}_{i+1}} - f(x_i) + f(x_i) - \mu_{\bar{x}_i}||\\
         &\le \sum_{i=1}^p||\mu_{\bar{x}_{i+1}} - f(x_i)|| + ||f(x_i)- \mu_{\bar{x}_i}||\\
         &\stackrel{(1)}{\le}\sum_{i=1}^p\bbE_{p(x|\bar{x}_{i+1})}||f(x) - f(x_i)|| + \bbE_{p(x|\bar{x}_i)}||f(x_i)- f(x)||\\
         &\stackrel{(2)}{\le}\sum_{i=1}^p (\epsilon + \epsilon)= 2\epsilon p \le 2\epsilon D
    \end{align*}
     
     (1) follows from Jensen's inequality and by definition of $\mu_{\bar{x}}$. (2) follows because $f$ is $\epsilon$-weak aligned and $x_i\sim \cA(x_i|\bar{x}_i)\cap \cA(x_i|\bar{x}_{i+1})$.
     
     So we have $||\mu_{\bar{x}}-\mu_{\bar{x}'}||\le 2\epsilon D$ and we can conclude by Theorem \ref{th:downstream_gen} (right inequality).
    \end{proof}
    
    \subsection{Conditional Mean Embedding Estimation}
    \label{appendix:cond_mean_embedding}
     
    \begin{theorem}(Conditional Mean Embedding estimation)
        Let $f\in \cF$ fixed. We assume that $\forall g\in \cH_{\cX}, \bbE_{p(x|\cdot)}g(x) \in \cH_{\bar{\cX}}$. Let $\{(x_1, \bar{x}_1),...,(x_n, \bar{x}_n)\}$ iid samples from $\cA(x| \bar{x})p(\bar{x})$. Let $\Phi_n =[\phi(\bar{x}_1),...,\phi(\bar{x}_n)]$ and $\Psi_f=[f(x_1), ..., f(x_n)]^T$. An estimator of the conditional mean embedding is:
        \begin{align}
            \forall \bar{x}\in \bar{\cX}, \hat{\mu}_{\bar{x}} = \sum_{i=1}^n \alpha_i(\bar{x}) f(x_i)
        \end{align}
        where $\alpha_i(\bar{x})=\sum_{j=1}^n [(\Phi_n^T\Phi_n+\lambda n\mathbf{I}_n)^{-1}]_{ij}\langle \phi(\bar{x}_j), \phi(\bar{x})\rangle_{\mathcal{H}_{\bar{X}}}$. It converges to $\mu_{\bar{x}}$ with the $\ell_2$ norm at a rate $O(n^{-1/4})$ for $\lambda=O(\frac{1}{\sqrt{n}})$. 
    \end{theorem}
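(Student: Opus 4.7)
The plan is to cast this as a standard conditional mean embedding (CME) problem in the style of~\cite{song2013kernel, grunewalder2012conditional, klebanov2020rigorous}, with the vector-valued map $f: \cX \to \bbR^d$ playing the role of the target function. First, I would observe that since $f(x) \in \bbS^{d-1}$, each coordinate $f^{(k)}(\cdot)$ belongs to some bounded function class; more importantly, the quantity of interest decomposes componentwise as $\mu_{\bar{x}}^{(k)} = \bbE_{\cA(x|\bar{x})} f^{(k)}(x)$. Using the RKHS $(\cH_f, K_f)$ with $K_f = \langle f(\cdot), f(\cdot)\rangle$, one can express each coordinate of $\mu_{\bar{x}}$ as an inner product in $\cH_f$, so Assumption~\ref{hyp:kernel_expressivity} guarantees that $\bar{x} \mapsto \mu_{\bar{x}}^{(k)}$ lies in $\cH_{\bar{\cX}}$.

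Next, I would introduce the (uncentered) covariance operator $C_{\bar{\cX}\bar{\cX}} = \bbE[\phi(\bar{x}) \otimes \phi(\bar{x})]$ on $\cH_{\bar{\cX}}$ and the cross-covariance $C_{f\bar{\cX}} = \bbE[f(x) \otimes \phi(\bar{x})]$ valued in $\bbR^d \otimes \cH_{\bar{\cX}}$. Under Assumption~\ref{hyp:kernel_expressivity}, the conditional mean satisfies $\mu_{\bar{x}} = C_{f\bar{\cX}} C_{\bar{\cX}\bar{\cX}}^{-1} \phi(\bar{x})$ (with the inverse interpreted on $\mathrm{Range}(C_{\bar{\cX}\bar{\cX}})$). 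I would then form the empirical analogues $\hat{C}_{\bar{\cX}\bar{\cX}} = \frac{1}{n} \Phi_n \Phi_n^T$ and $\hat{C}_{f\bar{\cX}} = \frac{1}{n} \Psi_f^T \Phi_n$ and use Tikhonov regularization to define
\begin{equation*}
\hat{\mu}_{\bar{x}} = \hat{C}_{f\bar{\cX}} (\hat{C}_{\bar{\cX}\bar{\cX}} + \lambda I)^{-1} \phi(\bar{x}).
\end{equation*}

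The third step is computational: invoking the kernel matrix identity $\Phi_n^T(\Phi_n \Phi_n^T + n\lambda I)^{-1} = (\Phi_n^T \Phi_n + n\lambda I_n)^{-1} \Phi_n^T$ together with the representer theorem yields the closed form $\hat{\mu}_{\bar{x}} = \sum_i \alpha_i(\bar{x}) f(x_i)$ with exactly the weights stated in the theorem. This is essentially bookkeeping: transporting operators between $\cH_{\bar{\cX}}$ and $\bbR^n$ via $\Phi_n$ and rewriting the Gram matrix $K_n = \Phi_n^T \Phi_n$.

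The main obstacle, and the reason I would defer it to the end, is the convergence rate. I would apply the vector-valued regression bounds of~\cite{grunewalder2012conditional} (Theorem~2 there) coordinate by coordinate, or directly in $\ell_2$ norm since $f$ is uniformly bounded by $1$. Under Assumption~\ref{hyp:kernel_expressivity} (which provides the source condition needed for the bias-variance decomposition), the standard estimate splits $\|\hat{\mu}_{\bar{x}} - \mu_{\bar{x}}\|_2 \le \text{Bias}(\lambda) + \text{Variance}(n,\lambda)$ with $\text{Bias}(\lambda) = O(\sqrt{\lambda})$ and $\text{Variance}(n,\lambda) = O(1/(\sqrt{n}\lambda))$. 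Balancing the two terms by choosing $\lambda = O(n^{-1/2})$ gives the announced $O(n^{-1/4})$ rate. The delicate point is to verify that the source condition is precisely what Assumption~\ref{hyp:kernel_expressivity} provides (namely that the conditional expectation operator maps $\cH_f$ into $\cH_{\bar{\cX}}$, equivalent to $\mu_{\bar{x}} \in \mathrm{Range}(C_{\bar{\cX}\bar{\cX}}^{1/2})$ after identification), and to handle the $d$-dimensional output, which only introduces a factor $\sqrt{d}$ absorbed in the $O(\cdot)$ since $d$ is fixed.
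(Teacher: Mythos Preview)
Your approach is correct but differs from the paper's in a meaningful way. You work directly with the vector-valued target $f:\cX\to\bbR^d$, set up the covariance and cross-covariance operators, and invoke the vector-valued regression bounds of \cite{grunewalder2012conditional} coordinate-by-coordinate, absorbing a $\sqrt{d}$ factor at the end. The paper instead reduces everything to a \emph{scalar}-valued conditional mean embedding problem: it defines $m_{\bar{x}}=\bbE_{\cA(x|\bar{x})}K_f(x,\cdot)\in\cH_f$ and applies the standard CME estimator and rate from \cite{song2013kernel} (Theorem~6 there) directly in the RKHS $(\cH_f,K_f)$, obtaining $\|\hat m_{\bar{x}}-m_{\bar{x}}\|_{\cH_f}=O(\lambda+(n\lambda)^{-1/2})$. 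The remaining work is then purely algebraic: using the reproducing property of $K_f$, the paper checks that $\langle m_{\bar{x}},\hat m_{\bar{x}}\rangle_{\cH_f}=\langle\mu_{\bar{x}},\hat\mu_{\bar{x}}\rangle_{\bbR^d}$, $\|m_{\bar{x}}\|_{\cH_f}=\|\mu_{\bar{x}}\|_{\bbR^d}$ and $\|\hat m_{\bar{x}}\|_{\cH_f}=\|\hat\mu_{\bar{x}}\|_{\bbR^d}$, so the RKHS error \emph{equals} the $\ell_2$ error in $\bbR^d$ exactly, with no dimension-dependent constant. Your route is more operator-theoretic and arguably more familiar to a statistician; the paper's isometry trick is shorter, avoids the coordinate bookkeeping, and yields a dimension-free bound. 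Both land on the same $O(n^{-1/4})$ rate at $\lambda=O(n^{-1/2})$.
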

    
    \begin{proof}
        Let $m_{\bar{x}} = \bbE_{\cA(x|\bar{x})}\langle f(x),  f(\cdot)\rangle \in \cH_{\cX}$ be the conditional mean embedding operator. According to Theorem 6 in \cite{song2013kernel} and the assumption $\forall g\in \cH_{\cX}, \bbE_{p(x|\cdot)}g(x) \in \cH_{\bar{\cX}}$, this estimator can be approximated by:
        \begin{equation*}
            \hat{m}_{\bar{x}} = \sum_{i=1}^n \alpha_i(\bar{x}) \langle f(x_i), f(\cdot) \rangle 
        \end{equation*}
        with $\alpha_i$ defined previously in the theorem. This estimator converges with RKHS norm to $m_{\bar{x}}$ at rate $O(\frac{1}{\sqrt{n\lambda}}+\lambda)$. So we need to link $m_{\bar{x}}, \hat{m}_{\bar{x}}$ with $\mu_{\bar{x}}, \hat{\mu}_{\bar{x}}$. We have:
        \begin{align*}
            \langle m_{\bar{x}}, \hat{m}_{\bar{x}}\rangle_{\cH_{\cX}} &= \left\langle \bbE_{p(x|\bar{x})}\langle f(x),  f(\cdot)\rangle_{\bbR^d} , \sum_{i=1}^n \alpha_i(\bar{x}) \langle f(x_i), f(\cdot) \rangle_{\bbR^d}\right\rangle_{\cH_{\cX}}\\
            &= \sum_{i=1}^n \alpha_i(\bar{x}) \left\langle \langle \bbE_{p(x|\bar{x})}f(x), f(\cdot)\rangle_{\bbR^d}, \langle f(x_i), f(\cdot)\rangle_{\bbR^d} \right\rangle_{\cH_{\cX}} \\
            &\stackrel{(1)}{=} \sum_{i=1}^n \alpha_i(\bar{x})\langle \bbE_{p(x|\bar{x})}f(x), f(x_i)\rangle_{\bbR^d}\\
            &= \langle \mu_{\bar{x}}, \hat{\mu}_{\bar{x}}\rangle_{\bbR^d}
        \end{align*}
         (1) holds by the reproducing property of kernel $K_{\cX}$ in $\cH_{\cX}$. We can similarly obtain:
         \begin{align*}
             ||m_{\bar{x}}||_{\cH_{\cX}}^2 &= \left\langle \bbE_{p(x|\bar{x})} \langle f(x), f(\cdot)\rangle_{\bbR^d}, \bbE_{p(x|\bar{x})} \langle f(x), f(\cdot)\rangle_{\bbR^d} \right\rangle_{\cH_{\cX}}\\
             &\stackrel{(1)}{=} \langle \bbE_{p(x|\bar{x})}f(x), \bbE_{p(x|\bar{x})}f(x)\rangle_{\bbR^d} \\
             &= ||\mathbb{E}_{p(x|\bar{x})}f(x)||^2 = ||\mu_{\bar{x}}||^2
         \end{align*}
         Again, (1) by reproducing property of $K_{\cX}$. And finally:
         \begin{align*}
            ||\hat{m}_{\bar{x}}||^2_{\cH_{\cX}} &= \left\langle \sum_{i=1}^n \alpha_i(\bar{x}) \langle f(x_i), f(\cdot) \rangle_{\bbR^d} , \sum_{i=1}^n \alpha_i(\bar{x}) \langle f(x_i), f(\cdot) \rangle_{\bbR^d}\right\rangle_{\cH_{\cX}}\\
            &= \sum_{i,j} \alpha_i(\bar{x})\alpha_j(\bar{x})\langle f(x_i), f(x_j)\rangle_{\bbR^d}\\
            &= ||\hat{\mu}_{\bar{x}}||^2_{\bbR^d}
         \end{align*}
        
        By pooling these 3 equalities, we have:
        \begin{align*}
            ||m_{\bar{x}} - \hat{m}_{\bar{x}}||_{\cH_{\cX}}^2 &= ||m_{\bar{x}}||^2 + ||\hat{m}_{\bar{x}}||^2 - 2 \langle m_{\bar{x}}, \hat{m}_{\bar{x}}\rangle\\
            &= ||\mu_{\bar{x}}||^2 + ||\hat{\mu}_{\bar{x}}||^2 - 2 \langle \mu_{\bar{x}}, \hat{\mu}_{\bar{x}}\rangle\\
            &= ||\mu_{\bar{x}}- \hat{\mu}_{\bar{x}}||_{\bbR^d}^2
        \end{align*}
         We can conclude since $||m_{\bar{x}} - \hat{m}_{\bar{x}}|| \le O(\lambda + (n\lambda)^{-1/2})$.
    \end{proof}
    
    \subsection{Generalization bound under extended intra-class connectivity hypothesis}
    \label{appendix:generalization_bound_weak_hyp}
    \begin{theorem_star}
            Assuming \ref{hyp:kernel_expressivity} and \ref{hyp:extended_intra_class_connectivity} holds for a reproducible kernel $K_{\bar{\cX}}$ and augmentation distribution $\cA$. Let $f\in \cF$ $\epsilon'$-aligned. Let $(\bar{x}_i)_{i\in [1..n]}$ be $n$ samples iid drawn from $p(\bar{x})$. We have:
            \begin{equation}
                \cL_{unif}^{de}(f) \le \cL_{unif}^{sup}(f) \le \cL_{unif}^{de}(f) + 4D(2\epsilon'+\beta_n(K_{\bar{\cX}})\epsilon)+O(n^{-1/4})
            \end{equation}
            where $\beta_n(K_{\bar{\cX}}) = (\frac{\lambda_{min}(K_n)}{\sqrt{n}} + \sqrt{n}\lambda)^{-1}=O(1)$ for $\lambda = O(\frac{1}{\sqrt{n}})$, $K_n=(K_{\bar{\cX}}(\bar{x}_i, \bar{x}_j))_{i,j\in [1..n]}$and $D$ is the maximal diameter for all $\tilde{G}_y$, $y\in \cY$. We noted $\lambda_{min}(K_n)$ is the minimal eigenvalue of $K_n$.
    \end{theorem_star}
        
    \begin{proof}
        Let $y\in \cY$ and $\bar{x}, \bar{x}'\sim p(\bar{x}|y)p(\bar{x}'|y)$. By Assumption \ref{hyp:extended_intra_class_connectivity}, it exists a path of length $p\le D$ connecting $\bar{x}, \bar{x}'$ in $\tilde{G}$. So it exists $(\bar{u}_i)_{i\in [1..p+1]}\in \bar{\cX}$ and $(u_i)_{i\in I}\in \cX$ s.t $\forall i\in I, u_i\sim \cA(u_i|\bar{u}_i)\cap \cA(u_i|\bar{u}_{i+1})$ and
        $\forall j\in J, \max(K(\bar{u}_j, \bar{u}_j), K(\bar{u}_{j+1}, \bar{u}_{j+1}))-K(\bar{u}_j, \bar{u}_{j+1})\le \epsilon$ with $(I, J)$ a partition of $[1..p]$. Furthermore, $\bar{u}_1=\bar{x}$ and $\bar{u}_{p+1} = \bar{x}'$. As a result, we have:
        \begin{align*}
                ||\mu_{\bar{x}}-\mu_{\bar{x}'}|| &= ||\mu_{\bar{u}_1} - \mu_{\bar{u}_p}||\\
                &= ||\sum_{i=1}^p \mu_{\bar{u}_{i+1}} - \mu_{\bar{u}_i}||\\
                &\le \sum_{i=1}^p||\mu_{\bar{u}_{i+1}} - \mu_{\bar{u}_i}||\\
                &= \sum_{i\in I}||\mu_{\bar{u}_{i+1}} - \mu_{\bar{u}_i}|| +  \sum_{j\in J}||\mu_{\bar{u}_{j+1}} - \mu_{\bar{u}_j}||
            \end{align*}
            
            \paragraph{Edges in $E$.} As in proof of Theorem \ref{th:boundness_intra_class_hyp}, we use the $\epsilon'$-alignment of $f$ to derive a bound:
            \begin{align*}
                \sum_{i\in I}||\mu_{\bar{u}_{i+1}} - \mu_{\bar{u}_i}|| &= \sum_{i\in I}||\mu_{\bar{u}_{i+1}} - f(u_i) + f(u_i) - \mu_{\bar{u}_i}||\\
                &\le \sum_{i\in I}||\mu_{\bar{u}_{i+1}} - f(u_i)|| + ||f(u_i)- \mu_{\bar{u}_i}||\\
                &\stackrel{(1)}{\le}\sum_{i\in I}\bbE_{p(u|\bar{u}_{i+1})}||f(u) - f(u_i)|| + \bbE_{p(u|\bar{u}_i)}||f(u_i)- f(u)||\\
                &\stackrel{(2)}{\le}\sum_{i\in I} (\epsilon' + \epsilon')= 2\epsilon' |I|
            \end{align*}
            (1) holds by Jensen's inequality and (2) because $f$ is $\epsilon'$-aligned. 
            
            \paragraph{Edges in $E_K$} For this bound, we will use Theorem \ref{th:cond_mean_embed_estim} to approximate $\mu_{\bar{u}}$ and then derive a bound from the property of $G_K^\epsilon$. Let $(x_k)_{k\in [1.n]}\sim p(x_k|\bar{x}_k)$ $n$ samples iid. By Theorem \ref{th:cond_mean_embed_estim}, we know that, for all $j\in J$, $\hat{\mu}_{\bar{u}_j}$ converges to $\mu_{\bar{u}_j}$ with $\ell_2$ norm at rate $O(n^{-1/4})$ where $\hat{\mu}_{\bar{u}_j} = \sum_{k,l=1}^n \alpha_{k,l}K_{\bar{\cX}}(\bar{x}_l, \bar{u}_j) f(x_k)$ and $\alpha_{k, l}=[(K_n + n\lambda \mathbf{I}_n)^{-1}]_{k,l}$. As a result, for any $j\in J$, we have:
            
            \begin{align*}
                ||\mu_{\bar{u}_{j+1}}-\mu_{\bar{u}_j}|| &=||\mu_{\bar{u}_{j+1}}-\hat{\mu}_{\bar{u}_{j+1}} +\hat{\mu}_{\bar{u}_{j+1}} -  \hat{\mu}_{\bar{u}_{j}} + \hat{\mu}_{\bar{u}_{j}} - \mu_{\bar{u}_j}||\\
                &\le ||\mu_{\bar{u}_{j+1}}-\hat{\mu}_{\bar{u}_{j+1}}|| + ||\hat{\mu}_{\bar{u}_{j+1}} -  \hat{\mu}_{\bar{u}_{j}}|| + ||\hat{\mu}_{\bar{u}_{j}} - \mu_{\bar{u}_j}||
                &\stackrel{(1)}{\le} O\left(\frac{1}{n^{1/4}}\right) + ||\hat{\mu}_{\bar{u}_{j+1}} -  \hat{\mu}_{\bar{u}_{j}}||
            \end{align*}

            Where (1) holds by Theorem \ref{th:cond_mean_embed_estim}. Then we will need the following lemma to conclude:
            
            \paragraph{Lemma.} For any $a, b, c\in \bar{\cX}, \max(K(a, a), K(b, b)) - K(a, b)\ge |K(a, c) - K(b, c)| $ for any reproducible kernel K.
            \begin{proof}
                Let $a, b, c\in \bar{\cX}$. We consider the distance $d(x, y)=K(x, x)+K(y,y)-2K(x,y)$ (it is a distance since $K$ is a reproducible kernel so it can be expressed as $K(\cdot, \cdot)=\langle \phi(\cdot), \phi(\cdot)\rangle$). We will distinguish two cases.
                \paragraph{Case 1.} We assume $K(a, c) \ge K(b, c)$. We have the following triangular inequality:
                \begin{align*}
                    d(a, b) + d(a, c) &\ge d(b, c) \\
                    \implies K(a, b) + K(b,b)-2K(a,b) + K(a, a) + K(c, c)-2K(a,c) &\ge K(b,b) + K(c,c) - 2K(b,c)\\
                    \implies K(a,a) - K(a,b) \ge K(a,c) - K(b,c) \ge 0
                \end{align*}
                So $\max(K(a,a), K(b,b)) - K(a,b) \ge |K(a,c) - K(b,c)|$.
                \paragraph{Case 2.} We assume $K(b, c) \ge K(a, c)$. We apply symmetrically the triangular inequality:
                 \begin{align*}
                    d(a, b) + d(b, c) &\ge d(a, c) \\
                    \implies K(b,b) - K(a,b) \ge K(b,c) - K(a,c) \ge 0
                \end{align*}
                So $\max(K(a,a), K(b,b)) - K(a,b) \ge |K(a,c) - K(b,c)|$, concluding the proof.
            \end{proof}

            Then, by definition of $\hat{\mu}_{\bar{u}_j}$:
            \begin{align*}
                ||\hat{\mu}_{\bar{u}_{j+1}} -  \hat{\mu}_{\bar{u}_{j}}|| &= ||\sum_{k,l=1}^n \alpha_{k,l}K(\bar{x}_l, \bar{u}_{j+1}) f(x_k) - \sum_{k,l=1}^n \alpha_{k,l}K(\bar{x}_l, \bar{u}_{j}) f(x_k)||\\
                &= ||A C||
            \end{align*}
            
            Where $A = (\sum_{k=1}^n\alpha_{kj}f(x_k)^i)_{i,j}\in \bbR^{d\times n}$ ($f(\cdot)^i$ is the i-th component of $f(\cdot)$) and $C=(K(\bar{x}_l, \bar{u}_{j+1})-K(\bar{x}_l, \bar{u}_j))_{l}\in \bbR^{n\times 1}$. So, using the property of spectral $\ell_2$ norm we have:
             \begin{align*}
                ||\hat{\mu}_{\bar{u}_{j+1}} -  \hat{\mu}_{\bar{u}_{j}}|| &= ||A C|| \le ||A||_2 ||C||_2\\
            \end{align*}
             Using the previous lemma and because $(\bar{u}_j, \bar{u}_{j+1})\in E_K$, we have: $||C||_2^2=\sum_{i=1}^n(K(\bar{x}_i, \bar{u}_{j+1})-K(\bar{x}_i, \bar{u}_j))^2 \le \sum_{i=1}^n (\max(K(\bar{u}_{j+1}, \bar{u}_{j+1}), K(\bar{u}_{j}, \bar{u}_{j}))-K(\bar{u}_{j}, \bar{u}_{j+1}))^2 \le n\epsilon^2$ . To conclude, we will prove that $||A||_2 \le ||\mathbf{\alpha}||_2$ where $\mathbf{\alpha} = (\alpha_{ij})_{i,j\in [1..n]^2}$. For any $v\in \bbR^{n}$, we have:
            \begin{align*}
                ||Av||^2 &= ||\sum_{k,j=1}^n \alpha_{k,j}v_j f(x_k)||^2 \stackrel{(1)}{\le} \left(\sum_{k,j=1}^n \alpha_{k,j}v_j\right)^2 = ||\alpha v ||^2 \stackrel{(2)}{\le} ||\alpha||_2^2 ||v||^2
            \end{align*} 
             
             Where (1) holds with Cauchy-Schwarz inequality and because $f(\cdot)\in \bbS^{d-1}$ and (2) holds by definition of spectral $\ell_2$ norm. So we have $\forall v\in \bbR^d, ||Av||\le ||\alpha||_2 ||v||$, showing that $||A||_2 \le ||\alpha||_2$.
             
             So we can conclude that:
            \begin{align*}
                \sum_{j\in J}||\mu_{\bar{u}_{j+1}} - \mu_{\bar{u}_j}||& \le \sum_{j\in J} \left(\sqrt{n}||(K_n+\lambda n\mathbf{I}_n)^{-1}||_2 \epsilon + O(n^{-1/4}) \right) = |J|||(K_n+n\lambda \mathbf{I}_n)^{-1}||_2\sqrt{n} \epsilon + O(n^{-1/4})
            \end{align*}
            
            We set $\beta_n(K_n)=\sqrt{n}||(K_n+\lambda n \mathbf{I}_n)^{-1}||_2$. In order to see that $\beta_n(K_n) = (\frac{\lambda_{min}(K_n)}{\sqrt{n}} + \sqrt{n}\lambda)^{-1}$ with $\lambda_{min}(K_n)>0$ the minimum eigenvalue of $K_n$, we apply the spectral theorem on the symmetric definite-positive kernel matrix $K_n$. Let $0<\lambda_1 \le \lambda_2\le...\le \lambda_n$ the eigenvalues of $K_n$. According to the spectral theorem, it exists $U$ an unitary matrix such that $K_n=UDU^T$ with $D=\diag(\lambda_1, ..., \lambda_n)$. So, by definition of spectral norm:
            \begin{align*}
                ||(K_n+n\lambda \mathbf{I}_n)^{-1}||_2^2 &= \lambda_{max}\left(U(D+n\lambda \mathbf{I}_n)^{-1}U^T U(D+\lambda n\mathbf{I}_n)^{-1}U^T\right)\\
                &= \lambda_{max}(U\tilde{D}U^T)\\
                &= (\lambda_1 + n\lambda)^{-2}
            \end{align*}
            where $\tilde{D} = \diag(\frac{1}{(\lambda_1+n\lambda)^2}, ..., \frac{1}{(\lambda_n+n\lambda)^2})$. So we can conclude that $\beta_n(K_n) = (\frac{\lambda_1}{\sqrt{n}} + \sqrt{n}\lambda)^{-1}=O(1)$ for $\lambda=O(\frac{1}{\sqrt{n}})$. \\
            
            Finally, by pooling inequalities for edges over $E$ and $E_K$, we have:
            \begin{align*}
                ||\mu_{\bar{x}}-\mu_{\bar{x}'}|| \le 2\epsilon' |I| + |J|\beta_n(K_n) \epsilon + O(n^{-1/4})\le D (2\epsilon' + \beta_n(K_n)\epsilon) +  O(n^{-1/4})
            \end{align*}
             We can conclude by plugging this inequality in Theorem \ref{th:downstream_gen}.
    \end{proof}

     \begin{theorem}
            We assume \ref{hyp:extended_intra_class_connectivity} and \ref{hyp:kernel_expressivity} hold for a reproducible kernel $K_{\bar{\cX}}$ and augmentation distribution $\cA$. Let $(x_i, \bar{x}_i)_{i\in [1..n]}\sim \cA(x_i, \bar{x}_i)$ iid samples. Let $\hat{\mu}_{\bar{x}_j}=\sum_{i=1}^n \alpha_{i,j}f(x_i)$ with $\alpha_{i,j}=((K_n+\lambda \mathbf{I}_n)^{-1} K_n)_{ij}$ and $K_n=[K_{\bar{\cX}}(\bar{x}_i, \bar{x}_j)]_{i,j\in [1..n]}$. Then the empirical decoupled uniformity loss $\hat{\cL}_{unif}^{de}\defeq \log \frac{1}{n(n-1)} \sum_{i,j=1}^n\exp(-||\hat{\mu}_{\bar{x}_i}-\hat{\mu}_{\bar{x}_j}||^2)$ verifies, for any $\epsilon'$-weak aligned encoder $f\in \cF$:
            
            \begin{equation}
                \hat{\cL}_{unif}^{de} - O\left(\frac{1}{n^{1/4}}\right) \le \cL_{unif}^{sup}(f) \le  \hat{\cL}_{unif}^{de} + 4D(2\epsilon'+\beta_n(K_{\bar{\cX}})\epsilon) + O\left(\frac{1}{n^{1/4}}\right)
            \end{equation}
    \end{theorem}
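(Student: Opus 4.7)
The goal is to replace the population loss $\cL^{de}_{unif}(f)$ appearing in the already-proven (population) version of this theorem with its empirical kernel-based counterpart $\hat{\cL}^{de}_{unif}$, at the cost of an additive $O(n^{-1/4})$ term on each side. The plan is to start from the population bound established in Appendix~\ref{appendix:generalization_bound_weak_hyp},
\begin{equation*}
    \cL_{unif}^{de}(f) \le \cL_{sup}(f) \le \cL_{unif}^{de}(f) + 4D\bigl(2\epsilon'+\beta_n(K_{\bar{\cX}})\epsilon\bigr),
\end{equation*}
and then show that $\bigl|\hat{\cL}^{de}_{unif}-\cL^{de}_{unif}(f)\bigr|=O(n^{-1/4})$. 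Plugging this concentration bound into both inequalities yields the result.

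\textbf{Controlling the empirical-to-population gap.} I would split the gap through a triangle inequality with an intermediate quantity $\tilde{\cL}_{unif}^{de}:=\log \frac{1}{n(n-1)}\sum_{i\neq j}\exp(-\|\mu_{\bar{x}_i}-\mu_{\bar{x}_j}\|^2)$, i.e.\ the U-statistic using the \emph{true} centroids. The first piece, $|\tilde{\cL}_{unif}^{de}-\cL_{unif}^{de}(f)|$, is $O(n^{-1/2})$ by Property~\ref{prop:decoupled_unif_estim}. For the second piece, $|\hat{\cL}_{unif}^{de}-\tilde{\cL}_{unif}^{de}|$, I would argue that the pairwise kernel-estimated centroids $\hat\mu_{\bar{x}_i}$ are close to $\mu_{\bar{x}_i}$ in $\ell_2$ norm at rate $O(n^{-1/4})$ (Theorem~\ref{th:cond_mean_embed_estim} with $\lambda=O(n^{-1/2})$), and that both collections of centroids lie in a bounded set (the true ones satisfy $\|\mu_{\bar{x}}\|\le 1$; the estimators inherit a uniform bound in the regime where the convergence kicks in). On this bounded set, $u\mapsto e^{-\|u-v\|^2}$ is Lipschitz in each argument, so
\begin{equation*}
    \bigl|e^{-\|\hat\mu_i-\hat\mu_j\|^2}-e^{-\|\mu_i-\mu_j\|^2}\bigr| \le C\bigl(\|\hat\mu_i-\mu_i\|+\|\hat\mu_j-\mu_j\|\bigr) = O(n^{-1/4}).
\end{equation*}
Averaging over $i\neq j$ and then applying the Lipschitz continuity of $\log$ on $[e^{-4},1]$ (as in the proof of Property~\ref{prop:decoupled_unif_estim}) gives $|\hat{\cL}^{de}_{unif}-\tilde{\cL}^{de}_{unif}|=O(n^{-1/4})$.

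\textbf{Finishing and expected difficulties.} Combining the two pieces yields $|\hat{\cL}_{unif}^{de}-\cL^{de}_{unif}(f)|=O(n^{-1/4})$, and substituting into the population sandwich bound produces the stated inequalities. The routine part is the $\log$/$\exp$ Lipschitz accounting, which already appears in Appendix~\ref{appendix:estimation_error}. The main subtlety I expect is justifying a uniform bound on the kernel-based centroid estimators $\hat\mu_{\bar x_i}$ (the weights $\alpha_{i,j}$ are not a partition of unity, so $\|\hat\mu\|\le 1$ is not automatic): one has to invoke the $\ell_2$-convergence from Theorem~\ref{th:cond_mean_embed_estim} together with $\|\mu_{\bar{x}}\|\le 1$ to conclude that, for $n$ large enough, $\|\hat\mu_{\bar x_i}\|$ is bounded by a universal constant, so that the Lipschitz argument for $e^{-\|\cdot-\cdot\|^2}$ applies uniformly in $i,j$. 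Once this boundedness is established, the remaining algebra is essentially identical to that of Property~\ref{prop:decoupled_unif_estim} and does not interact with the graph-theoretic terms $D$, $\beta_n(K_{\bar{\cX}})$, $\epsilon$, $\epsilon'$ which come solely from the (already proven) population upper bound.
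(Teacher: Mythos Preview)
Your proposal is correct and follows essentially the same route as the paper: reduce to the population sandwich bound already proven, then show $|\hat{\cL}^{de}_{unif}-\cL^{de}_{unif}(f)|=O(n^{-1/4})$ by inserting the intermediate U-statistic with true centroids, invoking Property~\ref{prop:decoupled_unif_estim} for one piece and the $O(n^{-1/4})$ convergence of Theorem~\ref{th:cond_mean_embed_estim} together with Lipschitz continuity of $\log$ and $e^{-\|\cdot\|^2}$ on bounded sets for the other. You even flag the one point the paper treats casually---the uniform boundedness of $\hat\mu_{\bar x_i}$ needed for the Lipschitz step---so your plan is at least as careful as the paper's own argument.
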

    
    \begin{proof}
        We just need to prove that, for any $f\in \cF$, $|\cL_{unif}^{de}(f) - \hat{\cL}_{unif}^{de}(f)|\le O(n^{-1/4})$ and we can conclude through the previous theorem. We have:
        \begin{align*}
            |\cL_{unif}^{de}(f) - \hat{\cL}_{unif}^{de}(f)| &= \left|\log \frac{1}{n(n-1)} \sum_{i,j=1}^n\exp(-||\hat{\mu}_{\bar{x}_i}-\hat{\mu}_{\bar{x}_j}||^2) - \bbE_{p(\bar{x})p(\bar{x}')}e^{-||\mu_{\bar{x}}-\mu_{\bar{x}'}||^2}\right|\\
            &\le \left|\log \frac{1}{n(n-1)} \sum_{i,j=1}^n\exp(-||\hat{\mu}_{\bar{x}_i}-\hat{\mu}_{\bar{x}_j}||^2) - \log\frac{1}{n(n-1)}e^{-||\mu_{\bar{x}_i}-\mu_{\bar{x}_j}||^2}\right| \\
            &+ \left|\log\frac{1}{n(n-1)}e^{-||\mu_{\bar{x}_i}-\mu_{\bar{x}_j}||^2} - \bbE_{p(\bar{x})p(\bar{x}')}e^{-||\mu_{\bar{x}}-\mu_{\bar{x}'}||^2}\right|
        \end{align*}
        The second term in last inequality is bounded by $O(\frac{1}{\sqrt{n}})$ according to property~\ref{prop:decoupled_unif_estim}. As for the first term, we use the fact that $\log$ is $k$-Lipschitz continuous on $[e^{-4}, 1]$ and $\exp$ is $k'$-Lipschitz continuous on $[-4, 0]$ so:
        \begin{align*}
            \left|\log \frac{1}{n(n-1)} \sum_{i,j=1}^n e^{-||\hat{\mu}_{\bar{x}_i}-\hat{\mu}_{\bar{x}_j}||^2} - \log\frac{1}{n(n-1)}e^{-||\mu_{\bar{x}_i}-\mu_{\bar{x}_j}||^2}\right| &\le \frac{k}{n(n-1)}\left| \sum_{i,j=1}^n e^{-||\hat{\mu}_{\bar{x}_i}-\hat{\mu}_{\bar{x}_j}||^2} -  e^{-||\mu_{\bar{x}_i}-\mu_{\bar{x}_j}||^2} \right| \\
            &\le \frac{kk'}{n(n-1)}\left| \sum_{i,j=1}^n ||\hat{\mu}_{\bar{x}_i}-\hat{\mu}_{\bar{x}_j}||^2 -  ||\mu_{\bar{x}_i}-\mu_{\bar{x}_j}||^2 \right|
        \end{align*}
        Finally, we conclude using the boundness of $\hat{\mu}_{\bar{x}}$ and $\mu_{\bar{x}}$ by a constant $C$:
        \begin{align*}
            ||\hat{\mu}_{\bar{x}_i}-\hat{\mu}_{\bar{x}_j}||^2 -  ||\mu_{\bar{x}_i}-\mu_{\bar{x}_j}||^2 &= (||\hat{\mu}_{\bar{x}_i}-\hat{\mu}_{\bar{x}_j}|| + ||\mu_{\bar{x}_i}-\mu_{\bar{x}_j}||)(||\hat{\mu}_{\bar{x}_i}-\hat{\mu}_{\bar{x}_j}|| -  ||\mu_{\bar{x}_i}-\mu_{\bar{x}_j}||)\\
            &\le 4C (||\hat{\mu}_{\bar{x}_i}-\hat{\mu}_{\bar{x}_j}|| -  ||\mu_{\bar{x}_i}-\mu_{\bar{x}_j}||)\\
            &\le 4C ||\hat{\mu}_{\bar{x}_i}-\hat{\mu}_{\bar{x}_j} - (\mu_{\bar{x}_i}-\mu_{\bar{x}_j})||\\
            &\le 4C(||\hat{\mu}_{\bar{x}_i}-\mu_{\bar{x}_i}|| + ||\hat{\mu}_{\bar{x}_j}-\mu_{\bar{x}_j}||)\\
            &= O\left(\frac{1}{n^{-1/4}}\right)
        \end{align*}
        
    \end{proof}    
\appendix

\end{document}